\definecolor{LightGray}{gray}{0.9}
\newtheorem{lemma}{Lemma}
\newtheorem{theorem}{Theorem}
\theoremstyle{definition}
\icmltitlerunning{UCB Momentum Q-learning: Correcting the bias without forgetting}
\begin{document}

\twocolumn[
\icmltitle{UCB Momentum Q-learning:\\
Correcting the bias without forgetting}

% It is OKAY to include author information, even for blind
% submissions: the style file will automatically remove it for you
% unless you've provided the [accepted] option to the icml2021
% package.

% List of affiliations: The first argument should be a (short)
% identifier you will use later to specify author affiliations
% Academic affiliations should list Department, University, City, Region, Country
% Industry affiliations should list Company, City, Region, Country

% You can specify symbols, otherwise they are numbered in order.
% Ideally, you should not use this facility. Affiliations will be numbered
% in order of appearance and this is the preferred way.
\icmlsetsymbol{equal}{*}

\begin{icmlauthorlist}
\icmlauthor{Pierre M\'enard}{ovgu}
\icmlauthor{Omar Darwiche Domingues}{inria}
\icmlauthor{Xuedong Shang}{inria,lille}
\icmlauthor{Michal Valko}{inria,lille,deepmind}
\end{icmlauthorlist}

\icmlaffiliation{inria}{Inria}
\icmlaffiliation{lille}{Universit\'e de Lille}
\icmlaffiliation{deepmind}{DeepMind Paris}
\icmlaffiliation{ovgu}{Otto von Guericke University}

\icmlcorrespondingauthor{Pierre M\'enard}{pierre.menard@ovgu.de}
\icmlcorrespondingauthor{Omar Darwiche Domingues}{omar.darwiche-domingues@inria.fr}

% You may provide any keywords that you
% find helpful for describing your paper; these are used to populate
% the "keywords" metadata in the PDF but will not be shown in the document
\icmlkeywords{Machine Learning, ICML}

\vskip 0.3in
]

% this must go after the closing bracket ] following \twocolumn[ ...

% This command actually creates the footnote in the first column
% listing the affiliations and the copyright notice.
% The command takes one argument, which is text to display at the start of the footnote.
% The \icmlEqualContribution command is standard text for equal contribution.
% Remove it (just {}) if you do not need this facility.

\printAffiliationsAndNotice{}  % leave blank if no need to mention equal contribution
%\printAffiliationsAndNotice{\icmlEqualContribution} % otherwise use the standard text.

%\maketitle

% For TOC in appendix (https://tex.stackexchange.com/a/419290)
\doparttoc % Tell to minitoc to generate a toc for the parts
\faketableofcontents % Run a fake tableofcontents command for the partocs

\begin{abstract}\noindent
We propose \OurAlgorithm, Upper Confidence Bound Momentum Q-learning, a new algorithm for  reinforcement learning in tabular and possibly stage-dependent, episodic Markov decision process. \OurAlgorithm is based on Q-learning where we add a momentum term and rely on the principle of optimism in face of uncertainty to deal with exploration.
Our new technical ingredient of \OurAlgorithm is the use of momentum to correct the bias that Q-learning suffers while, \emph{at the same time}, limiting the impact it has on the the second-order term of the regret. For \OurAlgorithm, we are able to guarantee a regret of at most $\tcO(\sqrt{H^3SAT}+ H^4 S A )$ where $H$ is the length of an episode, $S$ the number of states, $A$ the number of actions, $T$ the number of episodes and ignoring terms in $\operatorname{poly\,log}(SAHT)$.
Notably, \OurAlgorithm is the first algorithm that simultaneously matches the lower bound of $\Omega(\sqrt{H^3SAT})$ for large enough $T$ and has a second-order term (with respect to the horizon $T$) that scales \emph{only linearly} with the number of states $S$.
\end{abstract}

% !TEX root = ../UCBMQ.tex
\section{Introduction}
\label{sec:intro}

\begin{table*}[t!]
    \centering
	\label{tab:upper_bounds}
	\begin{tabular}{@{}ll@{}}
		\toprule
%		\bfseries{Algorithm} & \begin{tabular}{@{}l@{}}\bfseries{Upper bound} \\ (non-stationary case)\end{tabular} \\
		\bfseries{Algorithm} & {\bf Upper bound} (non-stationary case) \\
		\midrule
		\midrule
    \UCBVI~\citep{azar2017minimax} & $\tcO(\sqrt{H^3 SA T} + H^3S^2A)$ \\
    \UBEV~\cite{dann2017unifying} & $\tcO(\sqrt{H^4 SA T} + H^2 S^3 A^2)$ \\
     \EULER~\citep{zanette2019euler} & $\tcO\Big(\sqrt{H^3 SAT} + H^3 S^{3/2}A(\sqrt{S}+\sqrt{H})\Big)$ \\
		 \OptQL~\citep{jin2018is} & $\tcO(\sqrt{H^4 SAT} + H^{9/2} S^{3/2}A^{3/2})$ \\
		 \UCBAdventage~\citep{zhang2020advantage} &
		 $\tcO(\sqrt{H^3 SAT} + H^{33/4} S^2 A^{3/2}T^{1/4})$ \\
		 \midrule
		 \rowcolor{LightGray}
		 \OurAlgorithm~(this paper) & $\tcO(\sqrt{H^3SAT}+ H^4 \textcolor{red}{S} A )$ \\
		\bottomrule
	\end{tabular}
\caption{Regret upper bound under unknown episodic, non-stationary, tabular MDPs.}
\end{table*}

In reinforcement learning (RL), an agent interacts with an environment with the objective of maximizing the sum of collected rewards \citep{sutton1998}. We model the environment as an unknown episodic tabular Markov Decision Process (MDP) with $S$ states, $A$ actions and episodes of length $H$. After $T$ episodes, we measure the performance of the agent by its cumulative regret which is the difference between the total reward collected by an optimal policy and the total reward collected by the agent during the learning. In order to minimize the regret the agent needs to balance the exploration of the environment and exploitation of the current knowledge to act optimally.

In particular, we study the \emph{non-stationary} setting where rewards and transitions can change within an episode, and for which \citet{jin2018is} and \citet{domingues2020lb} provide a problem-independent lower bound on the regret of order $\Omega(\sqrt{H^3SAT})$ (see also \citealt{azar2017minimax} for stationary transitions).

Following the previous work on the infinite-horizon setting \citep{jaksch2010ucrl,fruit2018efficient, talebi2018variance}, a first line of research on episodic MDPs~\citep{azar2017minimax,dann2017unifying,zanette2019euler} investigate model-based algorithms. The idea is to perform an optimistic value-iteration with an estimated model (i.e. estimated transitions here), and act greedily with respect to the obtained upper bounds on the optimal Q-values.

In particular, \citet{azar2017minimax} provide an upper bound on the regret of order $\tcO(\sqrt{H^3 S A T} + H^3S^2A)$.
This bound matches the lower bound for $T\geq H^3 S^3 A$, where the first-order term, $\sqrt{H^3 S A T},$ dominates.
However, for $T\leq H^3 S^3 A$, which is an important regime, the bound is affected by the second order term that scales in $S^2$ and can be harmful.
Indeed, when the number of states is very large (e.g., for continuous states MDPs after discretization), the second order term can dominate the regret bound, which in such case leads to a bound with a potentially sub-optimal rate (see~\citealt{domingues2020regret,sinclair2020adaptive}). Furthermore, in order to obtain a non-trivial upper bound on the regret (i.e., a bound smaller than $HT$), at least $H^3S^2A$ samples are needed. That means we roughly need $H^2S$ samples per state-action pair while we rather expect to have a meaningful bound with only $\text{poly}(H)$ samples per state-action pair. In the current analyses, the $S^2$ factor in the second-order term comes from the fact that, for model-based algorithms, the estimated transitions and the upper confidence bounds on the optimal value functions are correlated.\footnote{It is the same reason why there is an extra factor $S$ in the first order term of the bound of \UCRL algorithm by~\citet{jaksch2010ucrl}. This factor is "pushed" to the second-order term by the improved analysis of \citet{azar2017minimax}.} A union bound over a covering of all possible value functions with a cardinal that scales exponentially with the number of states $S$ is (implicitly) used to break the correlation. A similar remark also holds for other model-based algorithms like \EULER (see Table~\ref{tab:upper_bounds} for details).

A second line of work initiated by~\citet{jin2018is} consider model-free algorithms based on Q-learning~\citep{watkins1992q}. Interestingly, such an approach does not suffer from the same issue as model-based algorithms. Indeed, the Q-values are estimated in an online fashion (see Section~\ref{sec:algorithm_intuition}), and there is no correlation issue anymore as for model-based algorithms. On the other hand, the current estimate of the optimal Q-value for Q-learning-based algorithms relies on the target computed with past estimates of the same quantity (possibly inaccurate), therefore they suffer from a larger bias (see Section~\ref{sec:algorithm_intuition}).

In particular, \citet{jin2018is} propose to use a more aggressive learning rate to mitigate that bias by forgetting old estimates, but at the price of increasing the variance. It leads to a regret bound of order $\tcO(\sqrt{H^4SAT})$ with an extra $\sqrt{H}$ in the first-order term with respect to the lower bound.\footnote{Specifically, with Hoeffding-type bonuses they have an extra $H$ and second-order term of order $H^2SA$; with Bernstein type bonuses, the discrepancy is only of a factor $\sqrt{H}$, but the second-order term is no longer linear in $S$,
see Table~\ref{tab:upper_bounds}.} Building on variance reduction techniques, \citet{sidford2018variance} and \citet{zhang2020advantage} manage to avoid this extra dependency on the horizon. The idea is to first provide a rough estimate of the optimal value, namely the value reference function, and then leverage the low variance of a reference-advantage decomposition of the optimal Q-value to compensate the forgotten past samples. However, in their current analyses, the initial phase of learning the reference value functions degrades the second order term and brings back a $S^2$ factor (see Table~\ref{tab:upper_bounds}).

In this paper we rather follow another approach. Following the work of~\citet{azar2011speedy} (see also \citealt{weng2020momentum}), we propose \OurAlgorithm, which adds a momentum term to the targets in the Q-value updates so as to correct the bias of Q-learning. However, contrary to the generative setting considered by~\citet{azar2011speedy} where all state-action pairs are sampled at each update of the  Q-value, we have to deal with two additional challenges in our setting. First, we need to handle the exploration and we do it by introducing optimism. Second, in the absence of the oracle we do not see all state-action pairs at each "episode", but \emph{only the ones encountered along the trajectory}.
Consequently, each state-action pair learns at its own pace.

To address the above two challenges, we build a \emph{value function for each state-action pair} that represents the bias of this particular pair, and use it to build a momentum term that is able to correct the bias of previous estimates on the Q-value. Every new sample is thus used to refine the estimate on the Q-value via the target and correct the bias of the past targets via the momentum term at the same time. Moreover, with the \emph{careful} use of a Freedman-Bernstein-type inequality we manage to obtain tight dependence on the horizon without degrading the second-order term.

Using the above techniques, we prove a regret bound of order $\tcO(\sqrt{H^3SAT}+H^4SA)$ for \OurAlgorithm. This upper bound matches the lower bound up to $\operatorname{poly\,log}$ factors in $S,A,H,T$ for $T\geq H^5 S A$. This rate improves over the one of previous model-free algorithms and, for $S\geq H$, the one of previous model-based algorithms. In particular, we provide an algorithm that enjoys a second-order term \emph{only} in $S$ instead of $S^2$. Our results make a step towards resolving an open question
that was hinted by ~\citet{azar2011speedy}
and also recently explicitly raised by~\citet{zhang2020rl}. Finally, in Section~\ref{sec:experiments}, we provide numerical simulations on a grid-world environment to illustrate the benefits of not forgetting the targets in \OurAlgorithm.

%\paragraph{Contributions}
We highlight our main contributions:
  \begin{itemize}
    \item We carefully  design a momentum term Q-learning in the episodic setting
    and analyze its  benefits for the regret guarantees.
   \item We propose \OurAlgorithm, with a regret bound of order $\tcO(\sqrt{H^3SAT}+H^4SA)$. It is the first algorithm, up to our knowledge, that matches the problem-independent lower bound $\Omega(\sqrt{H^3SAT})$ up to $\operatorname{poly\,log}$ terms and has a second-order term that is linear in $S$.

   %answering an open question by \citet{azar2011speedy}.
   %\todoPi{Check the conclusion of \citet{azar2011speedy}. Does it sound like an open question?}
   %\todoX{Not really}
   %\item We provide numerical simulations on a grid-world environment to illustrate the theoretical insights of \OurAlgorithm.
\end{itemize}

% \begin{itemize}
%   \item model based (\citet{azar2017minimax}, \citet{dann2017unifying} ,\citet{Zanette19Euler},undiscounted infinite horizon: )
%   \item lb \citet{domingues2020episodic}, \citet{jin2018is}
%   \item issue $S^2$ (second term matter cf \citet{domingues2020regret},\citet{szita2010model})
%   \item Model-Free (\citet{watkins1992q}, \citet{jin2018is},  \citet{zhang2020reinforcement})
%   \item No $S^2$ but extra $H$
%   \item variance reduction
%   \item speedy Q-learning -momentum (SQP \citet{azar2011speedy}, MoVI~\citet{vieillard2020momentum})
%   \item US (open question? \citet{zhang2020reinforcement})
% \end{itemize}

% !TEX root = ../UCBMQ.tex
\section{Setting}
\label{sec:setting}

In this paper, we consider a tabular episodic MDP $\left(\cS, \cA, H, \{p_h\}_{h\in[H]},\{r_h\}_{h\in[H]}\right)$, with $\cS$ the set of states, $\cA$ the set of actions, $H$ the number of steps in one episode, $p_h(s'|s,a)$ is the probability transition from state~$s$ to state~$s'$ by taking the action $a$ at step $h,$ and $r_h(s,a)\in[0,1]$ is the bounded deterministic reward received after taking the action $a$ in state $s$ at step $h$. Note that we consider the general case of rewards and transition functions that are possibly non-stationary, i.e., that may change over the decision steps $h\in[H]$\footnote{For any integer $n\in\N^\star$, we define $[n]\eqdef\{1,\ldots,n\}$.} within an episode. We denote by $S$ and $A$ the number of states and actions, respectively.

\paragraph{Policy \& value functions.} A \emph{deterministic} policy $\pi$ is a collection of functions $\pi_h : \cS \rightarrow \cA$ for all $h\in [H]$, where every $\pi_h$  maps each state to a \emph{single} action. The value functions of $\pi$, denoted by $V_h^\pi$, as well as the optimal value functions, denoted by $\Vstar_h$ are given respectively by the Bellman equations \citep{puterman1994}:
{\small
\begin{align*}
    Q_h^{\pi}(s,a) &= r_h(s,a) + p_h V_{h+1}^\pi(s,a) &   V_h^\pi(s) &= \pi_h Q_h^\pi (s)\,.
\end{align*}
}%
By convention, $V_{H+1}^\pi \triangleq 0$. Furthermore, $p_{h} f(s, a) \triangleq \E_{s' \sim p_h(\cdot | s, a)} \left[f(s')\right]$   denotes the expectation operator with respect to the transition probabilities $p_h$ and
$(\pi_h g)(s)  \triangleq \pi_h g(s) \triangleq  g(s,\pi_h(s))$ denotes the composition with the policy~$\pi$ at step $h$. An optimal policy $\pistar$ is such that $\pistar \in \argmax_{\pi} V_1^\pi(s_1)$. The optimal Q-value and value functions are the ones of an optimal policy. Precisely we have $\Vstar_h = V_h^{\pistar}$ and $\Qstar_h =  Q_h^{\pistar}$ for all $h$.

\paragraph{Learning problem.} The agent, to which the transitions are \emph{unknown}, interacts with the environment during $T$ episodes of length $H$, with a \emph{fixed} initial state $s_1$.\footnote{As explained by \citet{fiechter1994efficient} and \citet{kaufmann2021rf}, if the first state is sampled randomly as $s_1\sim p_0,$ we can simply add an artificial first state $s_0$
such that for  any action $a$, the transition probability is defined as the distribution $p_0(s_0,a) \triangleq p_0.$} Before each episode $t$ the agent selects a policy $\pi^t$ based only on the past observed transitions up to episode $t-1$. At each step $h\in[H]$ of episode $t$, the agent observes a state $s_h^t\in\cS$, takes an action $\pi_h^t(s_h^t) = a_h^t\in\cA$ and  makes a transition to a new state $s_{h+1}^t$ according to the probability distribution $p_h(s_h^t,a_h^t)$ and receives a deterministic reward $r_h(s_h^t,a_h^t)$.

\paragraph{Regret.} We measure the agent performance through regret, which is the difference between what it could obtain (in expectation) by acting optimally and what it really gets,
\[
R^T \triangleq \sum_{t=1}^T \Vstar_1(s_1)- V_1^{\pi^t}(s_1)\,.
\]

\paragraph{Notation.} We denote the number of visits of state-action pair $(s,a)$ by $n_h^t(s,a) = \sum_{k = 1}^t \chi_h^t(s,a)$ where $\chi_h^t(s,a)$ is the indicator function $\chi_h^t(s,a) \triangleq \ind_{\{(s_h^t,a_h^t) = (s,a)\}}$. We also use the indicator function $\chi_h^t(s) \triangleq \ind_{\{s_h^t= s\}}$ to represent the event where state $s$ is visited at step $h$ in episode $t$. We denote by $p_h^t$ the Dirac distribution at $(s_{h+1}^t)$, i.e., for all functions $f$ defined on $\cS$ we have $(p_h^t f)(s,a) = f(s_{h+1}^t)$. In particular, this distribution does not depend on $(s,a)$.

% !TEX root = ../UCBMQ.tex
\section{\texorpdfstring{\OurAlgorithm}{} algorithm}
\label{sec:algorithm}
Before presenting the algorithm we provide an intuition of how it works.
\subsection{Intuition}\label{sec:algorithm_intuition}
If the agent knows the transition probabilities, it could perform real-time Q-value iteration and obtain a bounded regret (see~\citealt{efroni2019tight}).
In this case upper bounds on the Q-value functions are updated as follows\footnote{We index the quantities by $n$ in this section where $n$ is the number of times the state-action pair $(s,a)$ is visited. In particular this is different from the time $t$ since, in our setting, all the state-action pair are not visited at each episode. See Section~\ref{sec:algorithm_description} for precise notation.}
\begin{align}
    \uQ_h^n(s,a) &= (r_h + p_h \uV_h^{n-1})(s,a) \label{eq:real_time_Q_iteration}\,,
\end{align}
where upper bounds on the optimal value functions are defined by $\uV_h^{n}(s) = \max_a \uQ_h^n(s,a)$ and initialized to $\uV_h^{0}(s) = H $. When the model is unknown we can approximate it by averaging successive sample updates as in Q-learning~\citep{watkins1992q},
{\small
\begin{align}
    Q_h^n(s,a) &= \alpha_n(r_h + p_h^n \uV_h^{n-1})(s,a)+ \big(1-\alpha_n\big)Q_h^{n-1}(s,a)\,.
\end{align}
}%
A usual choice for the learning rate is $\alpha_n = 1/n$ instead of $\alpha_n=1$ used for real-time Q-value iteration above. Unfolding the previous inequality and using Azuma–Hoeffding inequality to move for the sample expectation $p_h^i$ to the true expectation $p_h$, we have with high probability
\begin{align}
    Q_h^n(s,a) &\approx r_h(s,a) + \frac{1}{n} \sum_{i=1}^{n} p_h^i \uV_{h+1}^{i-1}(s,a)\nonumber\\
    &\approx r_h(s,a)+ p_h\!\!\!\!\!\!\!\underbrace{\left(\frac{1}{n} \sum_{i=1}^{n} \uV_{h+1}^{i-1}\right)}_{:=V_{h,s,a}^n\text{ bias-value function}}\!\!\!\!\!\!(s,a)\pm\!\!\!\!\! \underbrace{\sqrt{ \frac{H^2}{n}}}_{\text{variance term}} \label{eq:1_over_N_sum}\!\!,
\end{align}
%\todoOm{Is the ``bias value function'' \textbf{only} bias? It is more like a biased upper bound, no? Maybe replace it by ``biased value function''?}
%\todoPi{No it is the bias with respect to the randomness of the $p_h^i$, is it better now?}
where the bias-value function of state-action $(s,a)$ encodes the bias of the estimate $Q_h^n$ with respect to the randomness of the $(p_h^i)_{i\geq 1}$. Thus choosing (Hoeffding-type) bonuses of order $\beta^n(s,a) \approx \sqrt{H^2/n}$, we can build upper bounds on the optimal Q-value and the value functions
{\small
\begin{align*}
  \uQ_h^n(s,a) = Q_h^n(s,a) +\beta_h^n(s,a), \quad
  \uV_h^n(s) = \max_{a\in\cA}\uQ_h^n(s,a)\,.
\end{align*}
}%
However, the bias term in~\eqref{eq:1_over_N_sum} is too large because of the old (and potentially inaccurate) upper bound $\uV_{h+1}^i$ that appears in the bias-value function $V_{h,s,a}^n$. Indeed it is not clear how to prove a bound that is not exponential in the horizon $H$ in this case (see~\citealt{jin2018is}). Note that on contrary when the model is known, i.e. using \eqref{eq:real_time_Q_iteration}, we have a
smaller $V_{h,s,a}^n = \uV_{h+1}^{n-1}$ bias provided that the $(\uV_{h+1}^{i})_{i\geq 1}$
 are non-increasing.

To overcome this issue, \citet{jin2018is} propose with the \OptQL algorithm\footnote{ With Hoeffding-type bonuses.} to choose a learning rate of order $\alpha_n \approx H/n$ to keep only the recent upper-bounds $\uV_{h+1}^i$ in the bias-value value function. Indeed, proceeding as above, we have
\begin{align}
    &Q_h^n(s,a) \approx r_h(s,a)+ \frac{H}{n}\sum_{i \geq  n-H/n}^n p_h^i \uV_{h+1}^{i-1}(s,a)\nonumber\\
    &\approx r_h(s,a)+ p_h\!\!\! \underbrace{\left(\!\!\frac{H}{n}\!\!\sum_{i\geq n-n/H}^n \uV_{h+1}^{i-1}\right)}_{:=V_{h,s,a}^n\text{bias-value function}}\!\!(s,a) \pm \!\!\!\!\!\underbrace{\sqrt{\frac{H^3}{n}}}_{\text{variance term}}\!\!\!.
\label{eq:H_over_N_sum}
\end{align}
Because of the aggressive learning rate of order $H/n$ there are only $n/H$ samples in the sum of \eqref{eq:H_over_N_sum} leading to a high variance. Thus we need to add an extra $H$ factor in the bonus which leads to the sub-optimal regret bound of order $\widetilde{O}(\sqrt{H^5 S A T})$.
One workaround for this issue is to learn a reference value function~\citep{zhang2020advantage}, but it is not clear how to obtain a second order term that depends linearly on the size of the state space with this approach.

We consider another approach in this paper. Following the work by~\citet{azar2011speedy}, we add a momentum term in the update of the Q-value that corrects the bias at the price of a small vanishing increase of the variance. Precisely for a momentum rate $\gamma_n$, we now consider the following update,
{\small
\begin{align*}
    Q_h^n(s,a) &= \alpha_n (r_h + p_h^n \uV_{h+1}^{n-1})(s,a)+ (1-\alpha_n)Q_h^{n-1}(s,a)\\
    &\quad+ \gamma_n p_h^n(\uV_{h+1}^{n-1}-V_{h,s,a}^{n-1})(s,a)\,,
\end{align*}
}%
where we call $V_{h,s,a}^{n-1}$ the bias-value function of state-action $(s,a)$ defined by
{\small
\begin{align*}
    V_{h,s,a}^n(s') &= (\alpha_n+\gamma_n) \uV_{h+1}^{n-1}(s')+(1-\alpha_n-\gamma_n)V_{h,s,a}^{n-1}(s')\,.
\end{align*}
}%
Note that there is a priori a different bias-value function for each state-action pair. In particular if we force the sequence of upper bounds on the value functions to be non-increasing, it holds that $V_{h,s,a}^n-\uV_{h+1}^n \geq 0$.
We choose $\alpha_n \approx 1/n$ to not forget samples as in~\eqref{eq:H_over_N_sum}. The momentum rate is $\gamma_n\approx H/n$ to correct the bias that will appear otherwise as in~\eqref{eq:1_over_N_sum}.
As explained by \citet{azar2011speedy}, this aggressive momentum will be compensated by the fact that $\uV_{h+1}^{n-1}-V_{h,s,a}^{n-1}$ is small when the two quantities converge toward $\Vstar_{h+1}$. Thanks to these choices, the bias-value function is the same as in \eqref{eq:H_over_N_sum},
\begin{align*}
   V_{h,s,a}^n(s') &\approx \frac{H+1}{n}(V_{h,s,a}^{n-1}-\uV_h^{n-1})(s') + \uV_h^{n-1}(s') \\
   &\approx \frac{H}{n} \sum_{i\geq n-n/H}^n  \uV_{h+1}^{i-1}(s')\,.
\end{align*}
Now we explain why $ V_{h,s,a}^n$ is named \emph{bias-value function}. We have, with high probability,
{\small
\begin{align*}
    Q_h^n(s,a) &\approx r_h(s,a) \!+\!\frac{1}{n}\sum_{i=1}^n p_h^i\left( (H+1)\uV_{h+1}^{i-1}-V_{s,a,h}^{i-1}\right)(s,a)\\ %\label{eq:H_over_N_sum}
    &\approx r_h(s,a) + p_h\!\! \underbrace{\left(\frac{H}{n} \! \sum_{i\geq n-n/H}^n \uV_h^{i-1}\right)}_{\approx V_{h,s,a}^n\text{ bias-value function}} \!
    (s,a) \pm \!\!\!\underbrace{\sqrt{ \frac{H^2}{n}}}_{\text{variance term}}\\
    &\quad\pm\underbrace{\sqrt{\frac{H^3}{n}\sum_{i=1}^n p_h(V_{h,s,a}^{n-1} - \uV_h^{n-1})(s,a)\frac{1}{n}}}_{\text{momentum variance term}}\,.
\end{align*}
}%
Note that, we use a \emph{negative} momentum since it allows to put more weight on the recent targets. We thus manage to get the advantages of the two learning rates: use all the samples for small variance and get a bias-value function that only relies on the recent upper-bounds on the optimal value function. This comes only at the cost of an additional momentum variance term that will only influence the dependence on $H$ of the second order term in the regret. Note that here, for sake of simplicity, we used Azuma-Hoeffding inequality which leads to a sub-optimal dependence on the horizon. That is why in the sequel we rather use a Freedman-Bernstein-type inequality (and adapted bonuses) to obtain the optimal dependence on the horizon in the first order term.

Indeed \OptQL by \citet{jin2018is} with Hoeffding-type bonuses has a regret bound of order $\sqrt{H^5SAT}$ with an extra factor $H$ with respect to the lower bound of $\sqrt{H^3SAT}$ (in particular without second order term in~$S^2$). Using Bernstein-type bonuses allows to waive a $\sqrt{H}$ factor in the first-order term. But there is still an extra $\sqrt{H}$ because of the aggressive learning rate of $H/n$ used to deal with the bias issue as described above\footnote{Which will be removed because of the momentum in \OurAlgorithm.}. Note that doing so also introduces a second-order term which is not linear in the number of states $S$, see Table~\ref{tab:upper_bounds}.
This is because in their analysis they need a coarse upper bound on $\uV_h^t-\Vstar_h$ (see Lemma~C.7 in the proof of Lemma C.3 then C.6 by \citet{jin2018is}, such a coarse upper bound is also used by \citet{azar2017minimax}) to link the empirical variance to the true one.
The key point in our analysis is to avoid such an intermediate coarse upper bound which leads inexorably to an extra factor $S$. But instead postpone bounding such quantity to the next step error (we rather control $\uV_h^t-V_h^{\pi^{t+1}}$), see Lemma~\ref{lem:concentration_Q_regret} and Lemma~\ref{lem:bonus_upper_bound}. Indeed, we control $(p^t_h-p_h) \uV_h$ instead of $(p^t_h-p_h) \Vstar_h$ which allows us avoid upper bounding  $\uV^t_h-\Vstar_h$ to build the upper confidence bound (see Lemma~\ref{lem:concentration_Q} and~\ref{lem:bonus_lower_bound}).
But we do not know if it is impossible to build an upper confidence bound (that does not depends on $S$) by only controlling $(p^t_h-p_h) \Vstar_h$.
% Nonetheless an extra $\sqrt{H}$ still remains as mentioned above. As explained in Section~3.3,   Zhang et al.\,(2020b) manage to waive this extra $\sqrt{H}$ by first learning a value reference function. But the burn-in time required to learn it hurts the second-order term (see Lemma~14 and the choice of $N_0$ by Zhang et al.,2020b). In  Section 3.1, we explain (see also Section 3.3) how the momentum allows to waive this extra $\sqrt{H}$ without the burn-in phase at the price of learning a bias-value function for all state-action pairs.

% !TEX root = ../UCBMQ.tex
\subsection{Algorithm}
\label{sec:algorithm_description}
We initialize the upper bounds on the optimal value functions by $\uV_h^0(s) = H$ for all $(s,h)\in\cS\times[H]$. We fix a learning rate $\alpha_h^t(s,a)\geq 0$ a momentum rate $\gamma_h^t(s,a) \geq 0$ such that $\alpha_h^t(s,a)+\gamma_h^t(s,a)\leq 1$. We also consider a bonus function $\beta_h^t(s,a)$. The update of the Q-value for \OurAlgorithm is defined as follows. We update a (biased) estimator of the optimal Q-value function as follow,
\begin{align}
    Q_h^t(s,a) &= \alpha_h^t(s,a) \big(r_h(s,a) + p_h^t \uV_{h+1}^{t-1}(s,a)\big)\nonumber\\&\quad+\gamma_h^t(s,a) p_h^t(\uV_{h+1}^{t-1}-V_{h,s,a}^{t-1})(s,a)\nonumber\\
    &\quad+ \big(1-\alpha_h^t(s,a)\big)Q_h^{t-1}(s,a)\,,
\label{eq:def_Q}
\end{align}
where the bias-value function for state-action $(s,a)$ is defined by, $V_{h,s,a}^0(s')=H$,
{\small
\begin{align}
    V_{h,s,a}^t(s') &= \eta_h^t(s,a) \uV_{h+1}^{t-1}(s')+ \big(1-\eta_h^t(s,a)\big)V_{h,s,a}^{t-1}(s')\,,
\label{eq:def_V}
\end{align}
}%
where we define $\eta_h^t(s,a) = \alpha_h^t(s,a)+  \gamma_h^t(s,a)$. We name this quantity the bias-value function because we will prove that with high probably $  Q_h^t(s,a) \approx r_h(s,a)+ p_h  V_{h,s,a}^t(s,a) $ in Lemma~\ref{lem:concentration_Q_regret} of Appendix~\ref{app:proof_regret_bound}. Then we build upper-confidence bounds on the Q-values by adding a bonus and on the value functions by taking the maximum of the upper-confidence bounds on the Q-values (clipped to be non-increasing)
\[
\uQ_h^t(s,a) = Q_h^t(s,a) +\beta_h^t(s,a)\,,
\]
\[
\uV_h^t(s) =\clip\!\big( \max_{a\in\cA}\uQ_h^t(s,a),0,\uV_h^{t-1}(s)\big)\,,
\]
where the clipping operator is defined as $\clip(x,y,z) = \min(\max(x,y),z)$. We also fix the upper bounds of the value function at step $H+1$ to zero: $\uV_{H+1}^t(s)=0$. Note that $\uQ_h^t(s,a)$ could be negative because of the momentum but it will still be an upper bound on the optimal Q-value with high probability, see Lemma~\ref{lem:optimism}. We also enforce the upper bound on the value function to be non-increasing. We then pick the action greedily with respect to the upper-bounds $\uQ_h^t$. The complete procedure is described in Algorithm~\ref{alg:OurAlgorithm}. We choose (with the convention $0\times \infty =0$ and $1/0 = \infty$)
\begin{align}
\alpha_h^t(s,a) &= \chi_h^t(s,a) \frac{1}{n_h^t(s,a)}\,,\label{eq:def_alpha}\\
\gamma_h^t(s,a) &=   \chi_h^t(s,a) \frac{H}{H+n_h^t(s,a)}\frac{n_h^t(s,a)-1}{n_h^t(s,a)}\label{eq:def_gamma}\,,
\end{align}
for the learning rate and the momentum. Note that in particular it holds $\eta_h^t(s,a)=  \chi_h^t(s,a) (H+1)/(H+n_h^t(s,a))$ which is the learning rate used by \citet{jin2018is}. We can unfold \eqref{eq:def_Q} to obtain explicit formulas for the estimate of the Q-value function when $n_h^t(s,a)>0$:
\begin{align}
  &Q_h^t(s,a)  = r_h(s,a) + \frac{1}{n_h^t(s,a)}\sum_{k=1}^{t} \chi_h^k(s,a)p_h^k \uV_h^{k-1}(s,a) \nonumber\\
  &+ \frac{1}{n_h^t(s,a)}\sum_{k=1}^{t} \chi_h^k(s,a) \rgamma_h^k(s,a) p_h^k (\uV_h^{k-1}-V_{h,s,a}^{k-1})(s,a)\,,
\label{eq:Q_unfold}
\end{align}
where the normalized momentum is definied as
\[
\rgamma_h^k(s,a) =  H \frac{n_h^t(s,a)-1}{n_h^t(s,a)+H} \,.
\] We use a bonus derived from the Bernstein inequality plus a correction term. Precisely if $n_h^t(s,a) = 0$ then $\beta_{h}^t(s,a)=H$ otherwise
\begin{align*}
\beta_h^t(s,a) &= 2\sqrt{W_h^t(s,a)\frac{\zeta}{n_h^t(s,a)}}+53H^3\frac{\zeta\log(T)}{n_h^t(s,a)}\\
\quad+&\sum_{k=1}^t \frac{\chi_h^k(s,a)\rgamma_h^k(s,a)}{H \log(T)  n_h^t(s,a) } p_h^k(V_{h,s,a}^{k-1}-\uV_{h+1}^{k-1})(s,a)\,,
\end{align*}
where $\zeta$ is some exploration threshold that we specify later and $W_h^t$ is a proxy for the variance term
{\small
\[
W_h^t(s,a) \!=\!\!\sum_{k=1}^t \! \frac{\chi_h^k(s,a)}{n_h^t(s,a)}  p_h^k\! \left(\!\uV_{h+1}^{k-1}\!\!-\!\!\sum_{l=1}^t\!  \frac{ \chi_h^l(s,a)}{n_h^t(s,a)}  p_h^l \uV_{h+1}^{l-1} \!\right)^2\!\!\!(s,a)\,.
\]
}
Note that the third term in the bonus will not compensate the momentum because it is $1/(H\log(T))$ times smaller than the momentum term.
\begin{algorithm}[ht]
\centering
\caption{\OurAlgorithm}
\label{alg:OurAlgorithm}
\begin{algorithmic}[1]
%  \STATE {\bfseries Input:} learning rate $\alpha_h^t$, momentum $\gamma_h^t$, bonus $\beta_h^t$.
   \STATE {\bfseries Initialize:}  For all $(s,a,h)$, $V_{h,s,a}^0=\uV_h^0 = H$ and $Q_h^0 =0$
      \FOR{$t \in[T]$}
        \FOR{$h \in [H]$}
       	   \STATE Play $a_h^t \in \argmax \uQ_h^{t-1}(s_h^t,a)$
           \STATE Observe $s_{h+1}^t\sim p_h(s_h^t,a_h^t)$
           \ENDFOR
        \FOR{all $s,a,h$}
	       \STATE Update $Q_h^t(s,a)$ using Equation~\ref{eq:def_Q}
	        \STATE Update $V_{h,s,a}^t$ for all $s'$ using Equation~\ref{eq:def_V}
	         \STATE $\uQ_h^t(s,a) = Q_h^t(s,a) +\beta_h^t(s,a)$
	          \STATE $\uV_h^t(s) = \clip\!\big(\max_{a\in\cA} \uQ_h^t(s,a), 0, \uV_h^{t-1}(s)\big)$
	    \ENDFOR
	   \ENDFOR
%      \vspace{-0.35cm}
%      {\small
%      \begin{align*}
%      Q_h^t(s,a) &= \alpha_h^t(s,a) \big(r_h(s,a) + p_h^t \uV_{h+1}^{t-1}(s,a)  \big)\\
%      &\quad+\big(1-\alpha_h^t(s,a)\big)Q_h^{t-1}(s,a)\\
%      &\quad+\gamma_h^t(s,a) p_h^t(\uV_{h+1}^{t-1}-V_{h,s,a}^{t-1})(s,a)\\
%      V_{h,s,a}^t(s') &= \eta_h^t(s,a) \uV_{h+1}^{t-1}(s')\!+\!\big(1\!-\!\eta_h^t(s,a)\big)V_{h,s,a}^{t-1}(s')\\
%      \uQ_h^t(s,a) &= Q_h^t(s,a) +\beta_h^t(s,a)\\
%      \uV_h^t(s) &= \clip\!\big(\max_{a\in\cA} \uQ_h^t(s,a), 0, \uV_h^{t-1}(s)\big)
%      \end{align*}
%      }%
%      \vspace{-0.7cm}
\end{algorithmic}
\end{algorithm}

% !TEX root = ../UCBMQ.tex
\subsection{Regret bound}
\label{sec:algorithm_results}
We assume in this section that $T\geq 3$. We fix $ \delta \in(0,1)$ and the exploration threshold
\begin{equation}
  \label{eq:def_zeta}
  \zeta =  \zetaval\,.
\end{equation}
We can now state the main result of the paper which is proved in Appendix~\ref{app:proof_regret_bound}. We sketch the proof in Section~\ref{sec:algorithm_sketch}.
\begin{theorem}
  \label{th:regret_UCBMQ}
  For \OurAlgorithm, with probability at least $1-\delta$
  \[
      R^T \leq    C_1(\delta,T) \sqrt{ H^3 SA  T}+   C_2(\delta,T)  H^4 S A
  \]
where $C_1(\delta,T)  = 126  e^{127}  \log(T) \sqrt{\zeta}$  and $C_2(\delta,T) = 3527 e^{127} \log(T)^2\zeta$.
\end{theorem}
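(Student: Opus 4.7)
My proof would follow the standard optimistic-algorithm template (favorable event, optimism, regret decomposition, pigeon-hole and variance bounds), but with the nonstandard twist of handling the momentum term and the bias-value function $V_{h,s,a}^t$ alongside $\uV_h^t$. First I would set up a high-probability event $\cE$ on which a collection of Freedman–Bernstein inequalities hold simultaneously for all $(s,a,h,t)$: one for the martingale $\sum_k \chi_h^k(s,a) p_h^k \uV_{h+1}^{k-1}(s,a) - \sum_k \chi_h^k(s,a) p_h \uV_{h+1}^{k-1}(s,a)$ (and its momentum-weighted version) controlling the ``forward'' deviation used for optimism, and a companion one comparing $p_h^t \uV_{h+1}^{t-1}$ to $p_h \uV_{h+1}^{t-1}$ pointwise (which is the key device identified in the intuition: controlling $(p_h^t-p_h)\uV_h$ rather than $(p_h^t-p_h)\Vstar_h$ is what avoids the extra $\sqrt{S}$ factor). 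The proxy variance $W_h^t(s,a)$ in the bonus is designed precisely so the Bernstein deviation it controls is the sample variance of $\uV_{h+1}^{k-1}$, not its expectation.

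Given $\cE$, I would then prove by backward induction on $h$, for all $t$, the optimism claim $\uV_h^t \geq \Vstar_h$ (and $\uQ_h^t \geq \Qstar_h$). The inductive step uses the unfolded formula \eqref{eq:Q_unfold}, the Bernstein concentration on the event $\cE$, and the key observation that $V_{h,s,a}^t$ is a convex combination of past $\uV_{h+1}^{k-1}$ terms which, by the inductive hypothesis and the non-increasing clipping of $\uV$, all dominate $\Vstar_{h+1}$. The third correction term in $\beta_h^t$ is exactly the deterministic residual needed to bound the momentum contribution $\sum_k \chi_h^k \rgamma_h^k p_h^k(\uV_{h+1}^{k-1} - V_{h,s,a}^{k-1})$ in the update, since the difference is always non-negative.

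Next I would perform the regret decomposition. Writing $\delta_h^t \triangleq \uV_h^{t-1}(s_h^t) - V_h^{\pi^t}(s_h^t)$, optimism gives $R^T \leq \sum_t \delta_1^t$. Using the greedy choice of $a_h^t$, the concentration lemma in the form $\uQ_h^{t-1}(s,a) - r_h(s,a) - p_h V_{h,s,a}^{t-1}(s,a) \leq 2\beta_h^{t-1}(s,a)$, and the Bellman equation for $V^{\pi^t}$, I get the recursion
\[
\delta_h^t \leq 2\beta_h^{t-1}(s_h^t,a_h^t) + p_h(V_{h,s_h^t,a_h^t}^{t-1} - V_{h+1}^{\pi^t})(s_h^t,a_h^t) + \text{martingale}.
\]
Splitting $V_{h,s,a}^{t-1} - V_{h+1}^{\pi^t} = (V_{h,s,a}^{t-1} - \uV_{h+1}^{t-1}) + \delta_{h+1}^{t+1}\text{-type terms}$ and iterating gives the telescoping form $R^T \lesssim \sum_{t,h} \beta_h^{t-1}(s_h^t,a_h^t) + \text{momentum residual} + \text{martingale}$. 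The momentum residual is precisely what the third term in $\beta_h^t$ absorbs, up to a $1/(H\log T)$-shrunken version that contributes to the second-order $H^4SA$ term.

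Finally I would bound the dominant $\sum_{t,h} \sqrt{W_h^{t-1}(s_h^t,a_h^t)\,\zeta / n_h^{t-1}(s_h^t,a_h^t)}$ by Cauchy–Schwarz and the pigeon-hole identity $\sum_t \chi_h^t(s,a)/n_h^t(s,a) \leq \log(T)+1$, reducing matters to $\sum_{t,h} W_h^{t-1}(s_h^t,a_h^t)$; replacing $W$ by the true variance $\Var_{p_h}(\Vstar_{h+1})$ up to lower-order terms and invoking the law of total variance gives $\sum_{t,h} \Var_{p_h}(\Vstar_{h+1})(s_h^t,a_h^t) \lesssim H^2 T$, yielding the $\sqrt{H^3 SAT}$ leading term. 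The lower-order $H^4 SA$ term aggregates the deterministic part of the Bernstein bonus $53H^3\zeta \log(T)/n_h^t$ summed pigeon-hole style, the momentum-correction term, and the error from replacing $W$ by $\Var_{p_h}(\Vstar_{h+1})$. \textbf{Main obstacle.} The delicate point—and the reason this bound is new—is keeping the $S$-dependence linear in the second-order term while getting the tight $H^3$ scaling in the first-order term. This requires (a) the variance-replacement step to be done against $\uV_{h+1}^{t-1}$ (which concentrates pointwise without a covering argument) rather than against an unknown $\Vstar_{h+1}$ approximation, and (b) tracking that the excess $\uV_{h+1}^{t-1} - V_{h+1}^{\pi^{t+1}}$ (rather than $\uV_{h+1}^{t-1} - \Vstar_{h+1}$) feeds back into the next recursion step without blowing up the constants; this is what the intuition identifies as the key technical novelty.
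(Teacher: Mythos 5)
Your overall architecture (favorable Bernstein event, optimism by backward induction, recursion over $h$, pigeonhole plus law of total variance) matches the paper's, and your emphasis on controlling $(p_h^t-p_h)\uV_{h+1}$ rather than $(p_h^t-p_h)\Vstar_{h+1}$ correctly identifies why the second-order term stays linear in $S$. However, two steps of your plan would not go through as stated, and one of them is precisely the paper's central new ingredient.

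First, after the recursion $\delta_h^t \leq 2\beta_h^{t-1} + p_h(V_{h,s,a}^{t-1}-V_{h+1}^{\pi^t}) + \text{mart.}$, you propose to absorb the gap $V_{h,s,a}^{t-1}-\uV_{h+1}^{t-1}$ into the third (correction) term of the bonus. That cannot work: the correction term in $\beta_h^t$ is deliberately a $\tfrac{1}{H\log T}$-fraction of the momentum sum — it exists only so that $\beta_h^t$ dominates the Freedman--Bernstein deviation of the momentum martingale (Lemma~\ref{lem:bonus_lower_bound}) — and is far too small to absorb $p_h(V_{h,s,a}^{t}-\uV_{h+1}^{t})$, which is $O(H)$ pointwise. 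The actual mechanism is an amortized weight-exchange argument: writing $V_{h,s,a}^t=\sum_{k\leq t}\teta_h^{t,k}(s,a)\uV_{h+1}^{k-1}$ and using that $\sum_{t\geq k}\chi_h^{t+1}(s,a)\teta_h^{t,k}(s,a)\leq (1+\tfrac1H)\chi_h^k(s,a)$ (Lemma~\ref{lem:properties_weights}), one gets
\begin{align*}
\sum_{t=0}^{T-1}\chi_h^{t+1}(s,a)\,p_h(V_{h,s,a}^{t}-\Vstar_{h+1})(s,a)\leq H+\Bigl(1+\tfrac1H\Bigr)\sum_{t=0}^{T-1}\chi_h^{t+1}(s,a)\,p_h(\uV_{h+1}^{t-1}-\Vstar_{h+1})(s,a)\,,
\end{align*}
so the bias of the bias-value function is paid only as a multiplicative $(1+c/H)$ factor that compounds to a constant over $H$ steps. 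Without this step the ``not forgetting'' part of the argument collapses and you are back to the $H/n$-learning-rate analysis.

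Second, you bound the variance sum by ``replacing $W$ by $\Var_{p_h}(\Vstar_{h+1})$ and invoking the law of total variance.'' The law of total variance (Lemma~\ref{lem:law_of_total_variance}) equates $\sum_{h,s,a}p_h^{\pi}(s,a)\Var_{p_h}(V_{h+1}^{\pi})(s,a)$ with a single trajectory variance only when the \emph{same} policy appears in both the occupancy weights and the variance; with $\Var_{p_h}(\Vstar_{h+1})$ under the occupancy of $\pi^{t+1}$ it does not apply. One must first switch to $\Var_{p_h}(V_{h+1}^{\pi^{t+1}})$ via $\Var_p(f)\leq 2\Var_p(g)+2b\,p|f-g|$ (Lemma~\ref{lem:switch_variance}), which produces an extra $H\,p_h(\uV_{h+1}^{t}-V_{h+1}^{\pi^{t+1}})$ term that must itself be folded back into the $(1+c/H)$ recursion — this is why the paper's Lemmas~\ref{lem:concentration_Q_regret} and~\ref{lem:bonus_upper_bound} are stated directly in terms of $\Var_{p_h}(V_{h+1}^{\pi^k})$. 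Both fixes are standard once identified, but as written your plan omits them. A minor further difference: you decompose along the realized trajectory with explicit martingale residuals, whereas the paper sums local regrets $\tR_h^T(s,a)$ and converts between $\chi_h^t$ and the reach probabilities $\bp_h^t$; these are interchangeable.
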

Note that we did not try to optimize the constants $C_1, C_2$. The regret of \OurAlgorithm is thus of order $\tcO\big(\sqrt{H^3SAT} + H^4 SA \big)$ matching the lower bound of $\tcO\big(\sqrt{H^3SAT}\big)$ by  \citet{domingues2020lb} for $T\geq H^5SA$.

\paragraph{Computational complexity.} Note that the update of the upper bounds on the Q-values and value functions and the bias-value functions can be performed online. Indeed at step $h$ and episode $t$, the learning rate $\alpha_h^t(s,a)$ and the momentum rate $\gamma_h^t(s,a)$ equal to zero if $(s,a)\neq (s_h^t,a_h^t)$. Thus the time complexity of \OurAlgorithm is of order $\cO(H(S+A)T)$ for $T$ episodes. This complexity is smaller than the one of model-based algorithms, $\cO(HSAT)$ at best (see \citealt{efroni2019tight}), but is larger than $\cO(HAT)$, the one of model-free algorithms \citep{jin2018is,zhang2020advantage}. The space complexity is $O(HS^2A)$ since we need to store all the bias-value functions, which is the same as the one of model-based algorithms.

\paragraph{Model-free or model-based algorithm.} \OurAlgorithm does not estimate the probability transitions but rather estimates directly the Q-values/values. Therefore \OurAlgorithm can be viewed as a model-free algorithm. On the other hand, the space complexity of \OurAlgorithm is the same as the size of the model $HS^2A$. Thus from a space complexity point of view (see e.g. the definition of model-free algorithms by~\citealt{jin2018is}), \OurAlgorithm is a model-based algorithm.

\paragraph{Comparison with variance reduction methods.}  Building on variance reduction techniques,~\citet{sidford2018near, sidford2018variance} and \citet{zhang2020advantage} propose model-free algorithms that match the problem-independent lower bound for large enough $T$ (see Table~\ref{tab:upper_bounds}).
They use, for some reference value function $V^{\text{ref}}$, the following advantage decomposition of the optimal Q function,
%{\small
\[
\!\Qstar(s,a)\!=\! r_h(s,a)+p_h V_{h+1}^{\text{ref}}(s,a) + p_h(\Vstar_{h+1} -V_{h+1}^{\text{ref}})(s,a).
\]
%}%
To derive their algorithm, they estimate the two expectations above differently. The expectation $p_h V_{h+1}^{\text{ref}}(s,a)$ is estimated using \emph{all the samples}, and the expectation $p_h(\Vstar_{h+1} -V_{h+1}^{\text{ref}})$ is estimated using only the \emph{last $1/H$-fraction} of the samples. The key point is to learn a reference value function $V^{\text{ref}}$ that is close enough to $V^*$ to compensate the smaller number of samples. However, learning such $V^{\text{ref}}$, which is done by using similar update as \eqref{eq:H_over_N_sum}, requires a certain number of episodes, and increases the second term in their analysis. Interestingly, our update~\eqref{eq:def_Q} could be seen as an advantage decomposition: considering~\eqref{eq:Q_unfold}, the bias-value function $V_{h,s,a}^t$ acts as a reference value function. However, contrary to the approach of \citet{zhang2020advantage},  $V_{h,s,a}^t$ is updated continuously as~\eqref{eq:def_V}, instead of being fixed after a ``burn-in'' phase.
\subsection{Proof sketch of Theorem~\ref{th:regret_UCBMQ}}
% !TEX root = ../UCBMQ.tex
\label{sec:algorithm_sketch}
We first prove that $\uQ^t$ and $\uV^t$ are indeed upper confidence bounds on the optimal Q-values and the optimal value functions respectively.
\begin{restatable}{lemma}{lemoptimism}
\label{lem:optimism}
On the event $\cE$ that holds with probability $1-\delta$ (see Section~\ref{app:concentration_value}) , $\forall t \in \N, \forall (s,a,h)\in\cS\times\cA\times[H]$ (also for $h = H+1$ for the value function), we have
\begin{align*}
  \uQ_h^t(s,a) \geq \Qstar_h(s,a) \quad\text{and}\quad \uV_h^t(s) \geq \Vstar_h(s)\,.
\end{align*}
\end{restatable}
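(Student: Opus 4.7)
The plan is a backward induction on $h$, from $H+1$ down to $1$, with a forward induction on $t$ nested inside; the inner induction is needed only to handle the monotone clip in the definition of $\uV_h^t$. The base cases will be immediate: $\uV_{H+1}^t \equiv 0 \equiv \Vstar_{H+1}$ by convention, and at $t=0$ the initialization $\uV_h^0 = H$, $Q_h^0 = 0$, $\beta_h^0 = H$ gives $\uQ_h^0 = H \geq \Qstar_h$ and $\uV_h^0 \geq \Vstar_h$ because all values lie in $[0,H]$.

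For the inductive step, the key observation to exploit is that the bias-value function $V_{h,s,a}^t(s')$ defined by \eqref{eq:def_V} is, after unfolding the recursion, a convex combination (since $\alpha_h^t + \gamma_h^t \leq 1$) of past upper bounds $\uV_{h+1}^{k-1}(s')$ and the initial constant $H$. Since the outer induction hypothesis gives $\uV_{h+1}^{k-1} \geq \Vstar_{h+1}$ for every $k$, this convex combination will satisfy $V_{h,s,a}^t \geq \Vstar_{h+1}$ pointwise, and in particular $p_h V_{h,s,a}^t(s,a) \geq p_h \Vstar_{h+1}(s,a)$. The remaining work is to show that on the high-probability event $\cE$,
\[
Q_h^t(s,a) + \beta_h^t(s,a) \geq r_h(s,a) + p_h V_{h,s,a}^t(s,a);
\]
combining this inequality with the convex-combination bound and the Bellman equation $\Qstar_h = r_h + p_h \Vstar_{h+1}$ will give $\uQ_h^t(s,a) \geq \Qstar_h(s,a)$. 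Propagation to $\uV_h^t$ is then routine: choosing $a = \pistar_h(s)$ yields $\max_a \uQ_h^t(s,a) \geq \Qstar_h(s,\pistar_h(s)) = \Vstar_h(s) \geq 0$, and the inner induction hypothesis provides $\uV_h^{t-1}(s) \geq \Vstar_h(s)$, so the clip preserves optimism.

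The main obstacle will be the displayed concentration inequality. Starting from the unfolded expression \eqref{eq:Q_unfold} and subtracting $r_h + p_h V_{h,s,a}^t$, the discrepancy reduces to a martingale sum of the type $\sum_k \chi_h^k (p_h^k - p_h) f_k$ with $f_k$ built from $\uV_{h+1}^{k-1}$ and $V_{h,s,a}^{k-1}$, which is exactly what a Freedman-Bernstein inequality of the kind encoded by $\cE$ (Section~\ref{app:concentration_value}) controls. Each of the three pieces of $\beta_h^t$ must then be shown to play a specific role: the square-root term $2\sqrt{W_h^t \zeta/n_h^t}$ absorbs the predictable variance, with $W_h^t$ serving as the accessible proxy for it; the scalar $53 H^3 \zeta \log(T)/n_h^t$ absorbs the Bernstein lower-order remainder; and the third summand matches exactly the systematic drift caused by the negative momentum terms $\rgamma_h^k p_h^k(\uV_{h+1}^{k-1} - V_{h,s,a}^{k-1})$ that the bias-value function carries over into $Q_h^t$. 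The subtle point, emphasized in the discussion preceding \eqref{eq:def_zeta}, is that one must control $(p_h^k - p_h)\uV_{h+1}^{k-1}$ directly rather than pass through a coarse bound on $\uV_{h+1}^{k-1} - \Vstar_{h+1}$, since the latter would reintroduce the extra factor $S$ that the whole algorithm is designed to avoid.
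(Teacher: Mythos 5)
Your proposal is correct and follows essentially the same route as the paper: a double induction (the paper nests backward-in-$h$ inside forward-in-$t$, you swap the order, which is immaterial since the dependencies resolve either way), the key inequality $\uQ_h^t \geq r_h + p_h V_{h,s,a}^t$ on $\cE$ established by matching each bonus term to the corresponding concentration error (the paper's Lemmas~\ref{lem:concentration_Q} and~\ref{lem:bonus_lower_bound}), and the convex-combination representation of $V_{h,s,a}^t$ together with the induction hypothesis at step $h+1$ to conclude. Your handling of the clip via the induction hypothesis at episode $t-1$ is in fact slightly more careful than the paper's final display, which contains a typo there.
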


\textbf{Step 1: Upper-bound $(\uQ_h^{t}-Q_h^{\pi^{t+1}})(s,a)$.} We first upper-bound the difference $(\uQ_h^{t}-Q_h^{\pi^{t+1}})(s,a)$ for a certain state-action pair $(s,a)$. Considering the rewriting~\eqref{eq:Q_unfold} we can apply a Freedman-Bernstein-type inequality (see Appendix~\ref{app:concentration_value}) to replace the sample expectation by the true expectation (see Lemma~\ref{lem:concentration_Q_regret}),
\begin{align*}
  \left|Q_h^t(s,a)- r_h(s,a) - p_h V_{h,s,a}^t(s,a)\right| &\leq b_h^t(s,a)\,,
\end{align*}
where we define, for $\tn_h^t(s,a) = n_h^t(s,a)\land 1$,
{\small
\begin{align*}
  &b_h^t(s,a) \!=\! \sqrt{\frac{4}{\tn_h^t(s,a)}\sum_{k=1}^t  \chi_h^k(s,a) \Var_{p_h}(V_{h+1}^{\pi^k})(s,a) \frac{\zeta}{\tn_h^t(s,a)}}\\
  &+\!\sum_{k=1}^t\! \frac{2 \chi_h^k(s,a) }{H\log(T)\tn_h^t(s,a)} p_h(\uV_{h+1}^{k-1}\!\!-\!\!V_{h+1}^{\pi^k})(s,a)\!+\! 24 H^3\frac{\log(T)\zeta}{\tn_h^t(s,a)}\cdot
\end{align*}
}%
In Lemma~\ref{lem:bonus_upper_bound}, we upper bound the bonus $\beta_h^t(s,a)$ with high probability, with a quantity of the same order as $ b_h^t(s,a)$. Combining these two bounds we obtain
%{\small
\begin{align}
(\uQ_h^{t}\!-\!Q_h^{\pi^{t+1}})(s,a)\!\leq p_h(V_{h,s,a}^t\!-\!V_{h+1}^{\pi^{t+1}})(s,a)\!+\!6 b_h^t(s,a).
\label{eq:ub_dif_Q_sketch}
\end{align}
%}%
\textbf{Step 2: Upper-bound the local optimistic regret.}
Next, we upper-bound the local optimistic regret of state-action $(s,a)$ at step $h$ defined by
\[ \tR_h^T(s,a) = \sum_{t=0}^{T-1} \chi_h^{t+1}(s,a) (\uQ_h^{t}-Q_h^{\pi^{t+1}})(s,a).
\]
We decompose the first term that appears in~\eqref{eq:ub_dif_Q_sketch} by introducing the optimal value function
\begin{align*}
p_h(V_{h,s,a}^t-V_{h+1}^{\pi^{t+1}})(s,a) &= p_h(V_{h,s,a}^t-\Vstar_{h+1})(s,a)\\
&\quad+p_h(\Vstar_{h+1}-V_{h+1}^{\pi^{t+1}})(s,a).
\end{align*}
Then, using Lemma~\ref{lem:properties_weights} from Appendix~\ref{app:count} yields
{\small
\begin{align*}
  &\sum_{t=0}^{T-1}  \chi_h^{t+1}(s,a) p_h(V_{h,s,a}^t-\Vstar_{h+1})(s,a)\\
  &\leq H\! + \!\sum_{k=1}^{T-1} \! \left(\!\sum_{t=k}^{T-1}  \chi_h^{t+1}(s,a) \eta_h^{t,k}(s,a) \!\right) p_h(\uV_{h+1}^{k-1}-\Vstar_{h+1})(s,a)\\
  &\leq H + \left(1+\frac{1}{H}\right)\sum_{t=0}^{T-1}\chi_h^{t+1}(s,a) p_h(\uV_{h+1}^{t-1}-\Vstar_{h+1})(s,a),
\end{align*}
}\noindent we get $V_{h,s,a}^t(s') = \sum_{k=1}^{t} \teta_h^{t,k}(s,a) \uV_{h+1}^{k-1}(s')$ by unfolding~\eqref{eq:def_V}, see~\eqref{eq:V_with_weights} in Appendix~\ref{app:preliminaries}. Combining this inequality with the previous decomposition and using that
$\Vstar_{h+1}\geq V_{h+1}^{\pi^{k+1}}$, we get
\begingroup
\allowdisplaybreaks
%{\small
\begin{align*}
  &\sum_{t=0}^{T-1}  \chi_h^{t+1}(s,a) p_h(V_{h,s,a}^t-V_{h+1}^{\pi^{t+1}})(s,a) \\
  &\leq \sum_{t=0}^{T-1}  \chi_h^{t+1}(s,a) p_h(\Vstar_{h+1}-V_{h+1}^{\pi^{t+1}})(s,a) +H \nonumber\\
  &\quad + \left(1+\frac{1}{H}\right)\sum_{t=0}^{T-1}\chi_h^{t+1}(s,a) p_h(\uV_{h+1}^{t-1}-\Vstar_{h+1})(s,a)\nonumber\\
  &\leq H+\left(1+\frac{1}{H}\right)\sum_{t=0}^{T-1}\chi_h^{t+1}(s,a) p_h(\uV_{h+1}^{t-1}-V_{h+1}^{\pi^{t+1}})(s,a)\,.
\end{align*}
%}%
\endgroup
We can proceed similarly to upper-bound the bonus term using this time Lemma~\ref{lem:sum_1_over_n},~\ref{lem:sum_1_over_n_history} from Appendix~\ref{app:count}, see \eqref{eq:ub_correct_regret}, \eqref{eq:ub_var_regret} and \eqref{eq:ub_bonus_regret} in Appendix~\ref{app:proof_regret_bound}, and get the upper bound on the optimistic local regret,
{\small
\begin{align*}
  &\tR_h^T(s,a) \leq 44 \log(T) \sqrt{\zeta\sum_{t=0}^{T-1} \chi_h^{t+1}(s,a) \Var_{p_h}(V_{h+1}^{\pi^{t+1}})(s,a)} \\
  &\quad+\left(1+\frac{41}{H}\right)\sum_{t=0}^{T-1}\chi_h^{t+1}(s,a) p_h(\uV_{h+1}^{t}-V_{h+1}^{\pi^{t+1}})(s,a)\\
  &\quad+1041 H^3\log(T)^2\zeta\,.
\end{align*}
}%

\paragraph{Step 3: From visit to reach probability.} We denote by $\bp_h^t(s,a)$ respectively $\bp_h^t(s)$ the probability to reach state-action $(s,a)$ respectively state $s$ at step $h$ under the policy~$\pi^t$. We replace the indicator function $\chi_h^t$ by its expectation $\bp_h^t$. Using again an Freedman-Bernstein-type inequality (see Appendix~\ref{app:concentration_count}), from the upper bound on the optimistic local regret above we obtain
{\small
\begin{align}
  \tR_h^T(s,a) &\leq 63 \log(T) \sqrt{\zeta\sum_{t=0}^{T-1} \bp_h^{t+1}(s,a) \Var_{p_h}(V_{h+1}^{\pi^{t+1}})(s,a)} \nonumber\\
  &\quad+\!\left(1\!+\!\frac{83}{H}\right)\!\!\sum_{t=0}^{T-1} \bp_h^{t+1}(s,a) p_h(\uV_{h+1}^{t}\!-\!V_{h+1}^{\pi^{t+1}})(s,a)  \nonumber\\
   &\quad+  1754 H^3 \log(T)^2\zeta\label{eq:ub_local_regret_sketch}\,.
\end{align}
}%

\paragraph{Step 4: Upper-bound the step $h$ optimistic regret.}
We define the regret at step $h$ by
\[
\tR^T_h = \sum_{s\in\cS} \sum_{t=0}^{T-1} \bp_h^{t+1}(s)(\uV_h^{t-1}-V_h^{\pi^{t+1}})(s)\,.
\]
Note that we used the probability to reach the state $s$ rather than the indicator function $\chi_h^t(s)$ above. Using again a Freedman-Bernstein-type inequality (see Appendix~\ref{app:concentration_count}) to upper-bound the reach probability by the indicator function and the definition of $\uV_h^{k}$, we have for $s\in\cS$
{\small
\begin{align*}
  &\sum_{t=0}^{T-1} \bp_h^{t+1}(s) (\uV_h^{t}-V_h^{\pi^{t+1}})(s)\\
  &\leq  \left(1+\frac{1}{H}\right) \sum_{t=0}^{T-1} \chi_h^{t+1}(s) (\uV_h^t(s)-V_h^{\pi^{t+1}})(s) + 19 H^2 \zeta\\
  &\leq \left(1+\frac{1}{H}\right) \sum_{t=0}^{T-1} \chi_h^{t+1}(s)  \pi_{h}^{t+1}(\uQ_h^{k}-Q_h^{\pi^{t+1}})(s)+ 19 H^2 \zeta\,.
\end{align*}
}%
Combining this inequality with~\eqref{eq:ub_local_regret_sketch} then the fact the policies $\pi^t$ are deterministic and Cauchy-Schwarz inequality yield the upper-bound the step $h$ optimistic regret
\begingroup
\allowdisplaybreaks
{\small
\begin{align}
  \tR&^T_h \leq \! \left(\!1\!+\!\frac{1}{H}\!\right)\! \sum_{s,a}\! \sum_{t=0}^{T-1}   \chi_h^{t+1}(s,a)(\uQ_h^{k}\!-\!Q_h^{\pi^{t+1}})(s,a) + 19 H^2 S \zeta\nonumber\\
  &=\!\left(\!1+\frac{1}{H}\!\right)\! \sum_{s,a} \tR_h^T(s,a) + 19 H^2 S \zeta\nonumber\\
  &\leq 126 \log(T) \sqrt{\zeta SA\sum_{s,a}\sum_{t=0}^{T-1} \bp_h^{t+1}(s,a) \Var_{p_h}(V_{h+1}^{\pi^{t+1}})(s,a)}  \nonumber\\
  &\quad+\left(1+\frac{167}{H}\right)\tR_{h+1}^T +  3527 H^3SA \log(T)^2\zeta
  \label{eq:ub_h_regret_sketch}\,,
\end{align}
}%
\endgroup
where in the last inequality we used that
\begin{align*}
\sum_{(s,a,s')\in\cS\times\cA\times\cS}\bp_h^{t+1}(s,a) p_h(s'|s,a) &=  \sum_{s'\in\cS} \bp_{h+1}^{t+1}(s')\,.
\end{align*}

\paragraph{Step 5: Upper-bound the regret.} We upper-bound the Step $1$ regret $\tR_1$. By successively unfolding~\eqref{eq:ub_h_regret_sketch} with the fact that $\tR_{h+1}^T=0$, using the Cauchy-Schwarz inequality and the law of total variance (Lemma~\ref{lem:law_of_total_variance} in Appendix~\ref{app:Bellman_variance}),
{\small
\begin{align*}
  \tR_{1}^T\! &\leq\! \sum_{h=1}^H \!C_1(\delta,T) \sqrt{ SA\!\sum_{s,a}\!\!\sum_{t=0}^{T-1} \bp_h^{t+1}(s,a) \Var_{p_h}\!(V_{h+1}^{\pi^{t+1}})(s,a)}\\
  &\quad+   C_2(\delta,T) H^3 S A\\
  &\leq  C_1(\delta,T) \sqrt{\! SAH\sum_{s,a,h}\sum_{t=0}^{T-1} \!\bp_h^{t+1}(s,a) \Var_{p_h}(V_{h+1}^{\pi^{t+1}})(s,a)}\\
  &\quad+ C_2(\delta,T)   H^4 S A\\
  &\leq C_1(\delta,T) \sqrt{H^3 SA T}+ C_2(\delta,T) H^4 S A \,.
\end{align*}
}%
It remains to relate the opstimistic regret with the regret. Thanks to Lemma~\ref{lem:optimism} we have
\begin{align*}
  \Vstar_1(s_1)-V_h^{\pi^{t+1}}(s_1) \leq \uV_1^t(s_1) - V_1^{\pi^{t+1}}(s_1)\,,
\end{align*}
which allows us to conclude
\begin{align*}
  R^T \leq \tR_1^T \leq  C_1(\delta,T) \sqrt{SAH^3T} + C_2(\delta,T) S A H^4\,.
\end{align*}

%!TEX root = ../UCBMQ.tex
\section{Experiments}
\label{sec:experiments}

In this section, we present a numerical simulation to illustrate the benefits of not forgetting the targets in \OurAlgorithm. We compare \OurAlgorithm to the following baselines:
\begin{enumerate*}[label=(\roman*)]
	\item \UCBVI~\cite{azar2017minimax};
	\item \OptQL~\cite{jin2018is}, and
	\item \UCBVIgreedy, a version of \UCBVI using real-time dynamic programming~\cite{efroni2019tight}.
\end{enumerate*}
We use a grid-world environment with $50$ states $(i, j) \in [10]\times[5]$ and $4$ actions (left, right, up and down). When taking an action, the agent moves in the corresponding direction with probability $1-\epsilon$, and moves to a neighbor state at random with probability $\epsilon$. The starting position is $(1, 1)$. The reward equals to $1$ at the state $(10, 5)$ and is zero elsewhere.\footnote{The code to reproduce the experiments is available on \href{https://github.com/omardrwch/ucbmq_code}{GitHub}, and uses the \href{https://github.com/rlberry-py/rlberry}{\texttt{rlberry}} library \cite{rlberry}. }

Using different exploration bonuses (e.g., by changing the multiplicative constants) can result in drastically different regrets empirically. In order to fairly compare the algorithmic ideas of \OurAlgorithm to the baselines, we use the \emph{same exploration bonus} for all the algorithms, given by:
{\small
\begin{align*}
	\beta_h^t(s,a) =
	\min\left(
	\sqrt{\frac{1}{n_h^t(s,a)}} + \frac{H-h+1}{n_h^t(s,a)}, H-h+1
	\right)\,.
\end{align*}
}
Although the confidence intervals required by the algorithms are not always satisfied with this bonus, they hold for $n_h^t(s,a) = 0$ (resulting in $\beta_h^t(s,a)=H-h+1$), so that this choice does not hurt the initial exploration. When $n_h^t(s,a) > 0$, the bonus behaves as a simplified version of the Bernstein-type bonuses used in different algorithms.

\begin{figure}[ht]
	\centering
	\includegraphics[width=8cm]{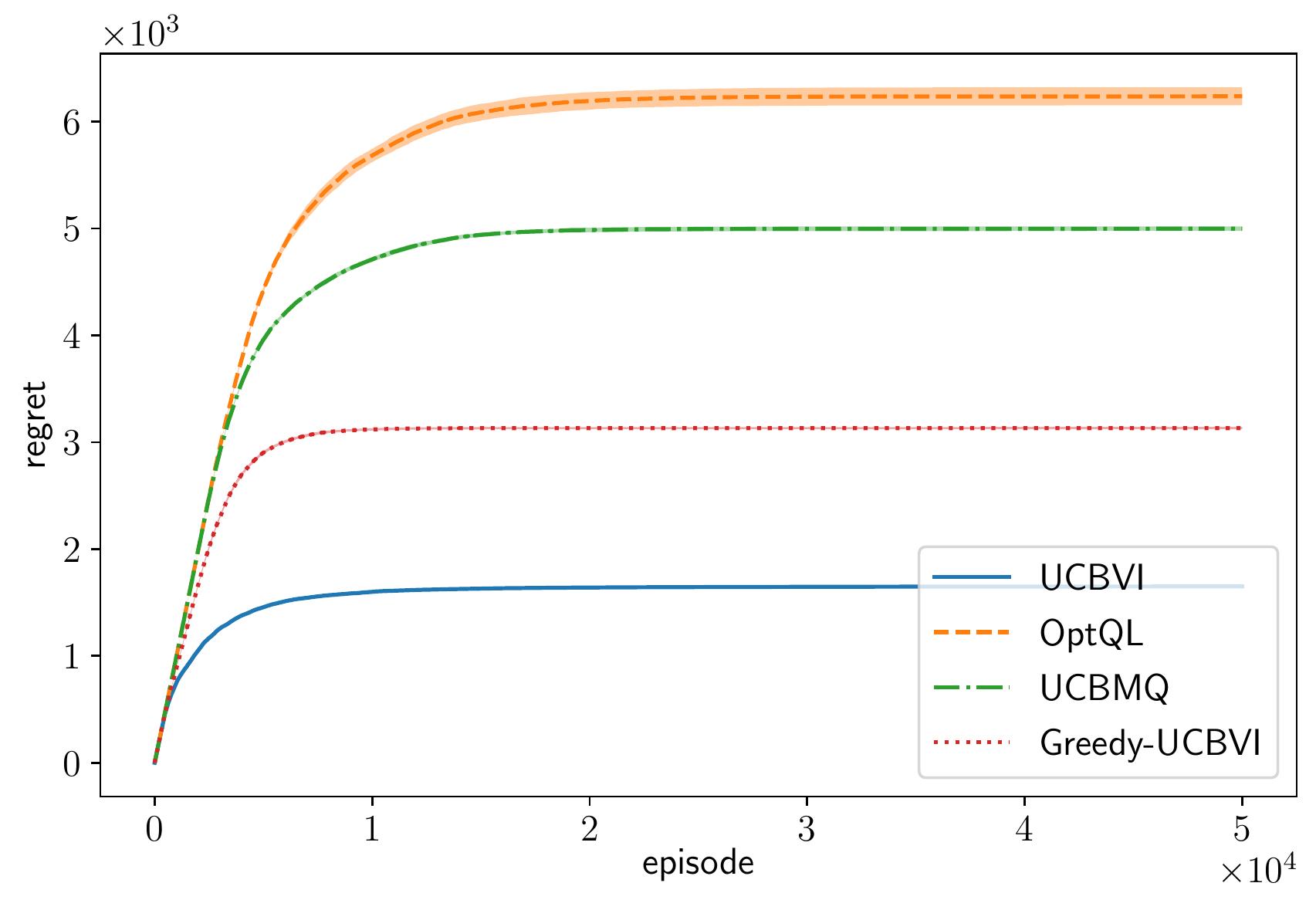}
	%\vspace{-0.5cm}
	\caption{Regret of \OurAlgorithm compared to baselines, for $H=100$ and transition noise $\epsilon=0.15$. Average over $8$ runs.}
	\label{fig:low-noise}
\end{figure}

In Figure~\ref{fig:low-noise}, we observe that \OurAlgorithm outperforms \OptQL in our experiments, whereas the only differences in the implementations of the two algorithms are the learning rates and the momentum term used by \OurAlgorithm (since the bonuses were kept identical). This illustrates the potential gain in sample efficiency enabled by not forgetting the targets.

We also observe that, in this simulation, \OurAlgorithm has a larger regret than \UCBVI and \UCBVIgreedy, which are model-based algorithms using empirical estimates of the transitions probabilities and planning. It is not surprising since explicitly using a model and backward induction allows new information to be more quickly propagated to the value function computed by the algorithms. \UCBVI performs \emph{full planning} after each episode. \UCBVIgreedy does \emph{1-step} planning, propagating information more quickly than \OurAlgorithm, but more slowly than \UCBVI, which explains the results in Figure~\ref{fig:low-noise}. However, current regret bounds for model-based algorithms, such as \UCBVI, still feature a second order term scaling with $S^2$ (see Table~\ref{tab:upper_bounds}): an interesting open question is whether a bound scaling linearly with $S$ can be obtained when a transition model is used.

%!TEX root = ../UCBMQ.tex
\section{Conclusion}
\label{sec:conclusion}
We studied regret minimization in tabular, non-stationary, episodic MDPs. For this settings, we provided an algorithm a regret bound that is optimal in a problem-independent sense for a large enough number of episodes~$T$ and such that the \emph{second-order term in the regret bound scales only linearly with the number of states} $S$. Our result rises following interesting open questions for a further research.

\paragraph{Problem-independent optimal regret.} We conjecture that the optimal problem-independent regret is $\cO(\sqrt{H^3SAT}+H^2SA)$. This conjecture is coherent with the one of \citet{wang2020sample} for PAC problem-independent optimal sample complexity if we do not assume that the sum of the rewards along any trajectory is smaller than $1$. In particular, it is not clear how to obtain a better dependency on the horizon $H$ in the second-order term, while being only linear in $S$. For \OurAlgorithm we have an extra $H$ factor in the second-order term in comparison to the regret bound of \UCBVI. This is due to the momentum rate $\gamma$ which scales with $H$ (Equation~\ref{eq:def_gamma}). This scaling seems necessary to refrain from getting an extra $H$ factor at the first-order term and it is unclear how to avoid it. Note that if our conjecture for the optimal problem-independent regret is true, the regret bounds for the model-based algorithms (e.g., \UCBVI, see Table~\ref{tab:upper_bounds})  would  be sub-optimal in $H$ in the second-order term.

\paragraph{Dependency on $S$ for model-based algorithms.} Even if \OurAlgorithm could be considered as a model-based algorithm (Section~\ref{sec:algorithm_description}) it relies on the model-free Q-learning algorithm. This is the main reason behind obtaining a linear dependence on the size of the state space $S$ in the second-order term. As explained in Section~\ref{sec:intro}, it is not clear to obtain similar bounds for model-based algorithm but experimentally they perform better, see Section~\ref{sec:experiments}. Interestingly with access to a generative model, \citet{szita2010model} managed to get rid of the extra factor $S$ for PAC-MDP complexity \citep{kakade2003thesis}.

\paragraph{Computational complexity.} As we need to maintain a separate bias-value function for each state-action pairs, \OurAlgorithm has a larger complexity both in time and space than the algorithms of~\citet{jin2018is} and \citet{zhang2020advantage}. It is not clear how to obtain an algorithm with the same guarantees as \OurAlgorithm while having a space complexity of $\cO(HSA)$ and a time complexity of $\cO(HT)$.

\section*{Acknowledgements}
The research presented was supported by European CHIST-ERA project DELTA. Pierre  Ménard  is supported by the SFI Sachsen-Anhalt for the project RE-BCI ZS/2019/10/102024 by the Investitionsbank Sachsen-Anhalt.
\bibliographystyle{icml2021_style/icml2021.bst}
\bibliography{library}

\newpage
\appendix
\onecolumn

\part{Appendix}
\parttoc
\newpage
%!TEX root = ../UCBMQ.tex
\section{Notations}
\label{app:table_notation}

\begin{table}[h]
	\centering
	\caption{Table of notation}
	\begin{tabular}{@{}l|l@{}}
		\toprule
		\thead{Notation} & \thead{Meaning} \\ \midrule
	$\cS$ & state space of size $S$\\
	$\cA$ & action space of size $A$\\
	$H$ & length of one episode\\
	$T$ & number of episodes\\
	$r_h(s,a)$ & reward \\
	$p_h(s'|s,a)$ & probability transition \\
	$p_h^t(s'|s,a)$ & Dirac distribution  $(p_h^t f)(s,a) = f(s_{h+1}^t)$\\
	$\chi_h^t(s)$	& indicator function $\chi_h^t(s) = \ind_{\{s_h^t= s\}}$\\
	$\chi_h^t(s,a)$ & indicator function $ \chi_h^t(s,a)= \ind_{\{(s_h^t,a_h^t) = (s,a)\}}$\\
	$n_h^t(s,a)$ & number of visits of state-action $n_h^t(s,a) = \sum_{k = 1}^t \chi_h^t(s,a)$\\
	$\tn_h^t(s,a)$ & maximum $\tn_h^t(s,a)=\max(n_h^t(s,a),1)$\\
	$Q_h^t(s,a)$ & estimate of the Q-value, see~\eqref{eq:def_Q}\\
  $\uQ_h^t(s,a)$ & upper bound on the optimal Q-value\\
  $\uV_h^t(s)$ & upper bound on the optimal values\\
  $V_{s,a,h}^t$ & biased-value function, see~\eqref{eq:def_V}\\
	$\alpha_h^t(s,a)$ & learning rate, see~\eqref{eq:def_alpha}\\
	$\gamma_h^t(s,a)$ & momentum rate, see~\eqref{eq:def_gamma}\\
	$\eta_h^t(s,a)$ & learning rate of the bias-value function, $\eta_h^t(s,a) = (\alpha_h^t+\gamma_h^t)(s,a)$\\
	$\beta_h^t(s,a)$ & bonus, see Section~\ref{sec:algorithm_description}\\
	$\zeta$ & exploration rate, see~\eqref{eq:def_zeta}\\
	$\bp_h^t(s,a)$ & probability to visit $(s,a)$ at step $h$ under $\pi_h^t$\\
	$\bp_h^t(s)$ & probability to visit $s$ at step $h$ under $\pi_h^t$\\
		\bottomrule
	\end{tabular}
\end{table}

\newpage
%!TEX root = ../UCBMQ.tex
\section{Preliminaries}
\label{app:preliminaries}

We can unfold \eqref{eq:def_Q} to obtain explicit formulas for the estimate of the Q-value function when $n_h^t(s,a)>0$:
\begin{align*}
  Q_h^t(s,a)  &= r_h(s,a) + \frac{1}{n_h^t(s,a)}\sum_{k=1}^{t} \chi_h^k(s,a)\left( p_h^k \uV_{h+1}^{k-1}(s,a) + \rgamma_h^k(s,a) p_h^k (\uV_{h+1}^{k-1}-V_{h,s,a}^{k-1})(s,a)  \right)
\end{align*}
where we defined the normalized momentum
\[
\rgamma_h^t(s,a) =  H \frac{n_h^t(s,a)-1}{n_h^t(s,a)+H} \,.
\]
Note that in particular $0 \leq\rgamma_h^t(s,a) \leq H$. We can do the same with \eqref{eq:def_V} for the bias value function of state-action $(s,a)$ when $n_h^t(s,a)>0$
\begin{align}
  V_{h,s,a}^{t}(s') &= \frac{1}{n_h^t(s,a)}\sum_{k=1}^{t} \chi_h^k(s,a)\left( \uV_{h+1}^{k-1}(s') + \rgamma_h^k(s,a) (\uV_{h+1}^{k-1}-V_{h,s,a}^{k-1})(s')  \right) \label{eq:V_expand}\\
  &= \eta_h^t(s,a) \uV_{h+1}^{t-1}(s') + (1-\eta_h^t(s,a)) V_{h,s,a}^{t-1}(s')\nonumber\\
  &=  \sum_{k=1}^{t} \teta_h^{t,k}(s,a) \uV_{h+1}^{k-1}(s')\label{eq:V_with_weights}
\end{align}
where we defined the cumulative weights
\[
\teta_h^{t,k}(s,a)  =  \eta_h^k(s,a) \prod_{l=k+1}^t \big(1-\eta_h^l(s,a)\big)\quad\text{recalling}\quad  \eta_h^t(s,a) = \chi_h^t(s,a) \frac{H+1} {H+n_h^t(s,a)}\,.
\]
We regroup in the following lemma properties on the different value functions that hold almost surely.
\begin{lemma}
\label{lem:determ_prop_values} For all $(s, s', t)$, it holds almost surely:
\begin{itemize}
  \item the sequence $(\uV_h^t(s))_{t\geq 0}$ is non-increasing,
  \item $0 \leq \uV_h^t(s) \leq H$,
  \item $ \uV_{h+1}^t(s')\leq  V_{h,s,a}^t(s')\leq H$.
\end{itemize}
\end{lemma}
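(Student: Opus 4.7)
The three claims can be established in sequence, with the later ones building on the earlier, using induction on $t$ and an inspection of the update rules for $\uV_h^t$ and $V_{h,s,a}^t$.

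First I would handle the monotonicity claim that $(\uV_h^t(s))_{t\geq 0}$ is non-increasing. This is immediate from the definition
\[
\uV_h^t(s)=\clip\!\bigl(\max_{a\in\cA}\uQ_h^t(s,a),\,0,\,\uV_h^{t-1}(s)\bigr),
\]
since the clipping operator forces $\uV_h^t(s)\le \uV_h^{t-1}(s)$. The lower bound $\uV_h^t(s)\ge 0$ in the second claim is also an immediate consequence of the clipping, and the upper bound $\uV_h^t(s)\le H$ follows by induction on $t$ from the initialization $\uV_h^0(s)=H$ combined with the monotonicity just established. This also covers the boundary $h=H+1$ since $\uV_{H+1}^t\equiv 0$ by definition.

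Next I would prove the third claim $\uV_{h+1}^t(s')\le V_{h,s,a}^t(s')\le H$. Here I would case-split on $n_h^t(s,a)$. If $n_h^t(s,a)=0$, then $\chi_h^k(s,a)=0$ for every $k\le t$, hence $\eta_h^k(s,a)=0$ for every such $k$; unfolding \eqref{eq:def_V} then gives $V_{h,s,a}^t(s')=V_{h,s,a}^0(s')=H$, and the bounds follow from Property~2. If $n_h^t(s,a)\ge 1$, I would use the expansion \eqref{eq:V_with_weights},
\[
V_{h,s,a}^t(s')=\sum_{k=1}^{t}\teta_h^{t,k}(s,a)\,\uV_{h+1}^{k-1}(s'),
\]
and verify by induction on $t$ (via the recursion $\sum_k\teta_h^{t,k}=\eta_h^t+(1-\eta_h^t)\sum_k\teta_h^{t-1,k}$) that after the first visit the weights $\teta_h^{t,k}(s,a)$ sum to $1$; the key point is that at the very first visit one has $\eta_h^{k_1}(s,a)=(H+1)/(H+1)=1$, which ``resets'' the convex combination and discards the initial value $H$. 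Given that $V_{h,s,a}^t(s')$ is a genuine convex combination of the values $\uV_{h+1}^{k-1}(s')$ with $k\le t$, the upper bound $V_{h,s,a}^t(s')\le H$ follows from Property~2 applied at step $h+1$, while the lower bound $V_{h,s,a}^t(s')\ge \uV_{h+1}^t(s')$ follows from Property~1 at step $h+1$, since $\uV_{h+1}^{k-1}(s')\ge \uV_{h+1}^{t-1}(s')\ge \uV_{h+1}^t(s')$ for every $k\le t$.

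The only step that requires any attention is the convex-combination claim: one must check that the initial mass $H$ is correctly ``flushed out'' the first time $(s,a)$ is visited, which is exactly why the definition \eqref{eq:def_gamma} chooses $\gamma_h^t$ so that $\eta_h^t(s,a)=1$ at the first visit. Everything else is routine unfolding of the recursions and the clipping definition.
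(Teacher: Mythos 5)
Your proposal is correct and follows essentially the same route as the paper: monotonicity and the range of $\uV_h^t$ from the clipping plus induction, and the third point via the representation $V_{h,s,a}^t(s')=\sum_{k=1}^{t}\teta_h^{t,k}(s,a)\uV_{h+1}^{k-1}(s')$ together with the weights summing to one. You are in fact slightly more careful than the paper, which leaves the case $n_h^t(s,a)=0$ and the normalization $\sum_k\teta_h^{t,k}(s,a)=1$ (deferred there to Lemma~\ref{lem:properties_weights}) implicit in its chain of inequalities.
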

\begin{proof}
	The fact that  $(\uV_h^t(s))_{t\geq 0}$ is non increasing comes directly by construction
	$$\uV_h^t(s) =\clip\!\big( \max_{a\in\cA}\uQ_h^t(s,a),0,\uV_h^{t-1}(s)\big) \leq \uV_h^{t-1}(s).$$
	To prove that $0 \leq \uV_h^t(s) \leq H$, we proceed by induction. The algorithm initializes $\uV_h^{0}(s)$, hence the claim is satisfied by $t=0$. Assuming that $0 \leq \uV_h^{t-1}(s) \leq H$, the equation above implies that this is also satisfied by $\uV_h^t(s)$. For the third point, we have
	$$
	\uV_{h+1}^t(s')
	\leq \min_{k\in\{1, \ldots t \}}\uV_{h+1}^{k-1}(s')
	\leq \sum_{k=1}^{t} \teta_h^{t,k}(s,a) \uV_{h+1}^{k-1}(s')
	\leq \sum_{k=1}^{t} \teta_h^{t,k}(s,a) H \leq H
	$$
	and we use the fact that $\sum_{k=1}^{t} \teta_h^{t,k}(s,a) \uV_{h+1}^{k-1}(s')
	= V_{h,s,a}^{t}(s')$.
\end{proof}

 \newpage
 %!TEX root = ../UCBMQ.tex
\section{Concentration events}
\label{app:concentration}

\subsection{From sample mean to expectation}
\label{app:concentration_value}

We define the favorable events $\cE^{\mathrm{v}_1}$ and $\cE^{\mathrm{v}_2}$ where we control two martingales involving the moment of order 1 and 2 of the upper bounds on the value function at the next step. We also define $\cE^{\mathrm{m}_w}, \cE^{\mathrm{m}}$ where we control the martingale of the momentum term with and without weights, precisely
\begin{align*}
  \cE^{\mathrm{v}_1} &\triangleq \Bigg\{\forall t \in \N, \forall h \in [H], \forall (s,a)\in\cS\times\cA:\\
  &\qquad\left|\sum_{k=1}^t \chi_h^k(s,a) (p_h^k-p_h)\uV_{h+1}^{k-1}(s,a)\right| \leq \sqrt{2 \zeta \sum_{k=1}^t  \chi_h^k(s,a) \Var_{p_h}(\uV_h^{k-1})(s,a)}+6H\zeta\Bigg\}\,,\\
  \cE^{\mathrm{v}_2} &\triangleq \Bigg\{\forall t \in \N, \forall h \in [H], \forall (s,a)\in\cS\times\cA:\\
  &\qquad\left|\sum_{k=1}^t \chi_h^k(s,a) (p_h^k-p_h)(\uV_{h+1}^{k-1})^2(s,a)\right| \leq \sqrt{8 H^2 \zeta \sum_{k=1}^t  \chi_h^k(s,a) \Var_{p_h}(\uV_{h+1}^{k-1})(s,a)}+12 H^2 \zeta\Bigg\}\\
  \cE^{\mathrm{m}_w} &\triangleq \Bigg\{\forall t \in \N, \forall h \in [H], \forall (s,a)\in\cS\times\cA:\\
  &\quad\left|\sum_{k=1}^t \chi_h^k(s,a) \rgamma_h^k(s,a) (p_h^k-p_h)(\uV_{h+1}^{k-1}-V_{h,s,a}^{k-1})(s,a)\right| \leq \\
  &\qquad\qquad\qquad\qquad\qquad\qquad\qquad\qquad\qquad\sqrt{2 \zeta \sum_{k=1}^t  \chi_h^k(s,a) \rgamma_h^k(s,a)^2 \Var_{p_h}(\uV_{h+1}^{k-1}-V_{h,s,a}^{k-1})(s,a)} +6 H^2 \zeta\Bigg\}\\
  \cE^{\mathrm{m}} &\triangleq \Bigg\{\forall t \in \N, \forall h \in [H], \forall (s,a)\in\cS\times\cA:\\
  &\qquad\left|\sum_{k=1}^t \chi_h^k(s,a) (p_h^k-p_h)(\uV_{h+1}^{k-1}-V_{h,s,a}^{k-1})(s,a)\right| \leq \sqrt{2 \zeta \sum_{k=1}^t  \chi_h^k(s,a) \Var_{p_h}(\uV_{h+1}^{k-1}-V_{h,s,a}^{k-1})(s,a)}+6 H \zeta\Bigg\}\,.
\end{align*}
We define $\cE = \cE^{\mathrm{v}_1} \cap \cE^{\mathrm{v}_2} \cap \cE^{\mathrm{m}_w} \cap \cE^{\mathrm{m}}$ the intersection of these events where the optimism will be true. This event holds with high probability.
\begin{lemma}
\label{lem:event_optimism}
For the choice
\[
\zeta  = \log\left(32e(2T+1)/\delta\right),
\]
it holds $\P(\cE)\geq 1-\delta/2$.
\end{lemma}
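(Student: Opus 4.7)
Each of the four events asserts a Bernstein-type deviation inequality for a martingale of the form $M_t^{(s,a,h)} = \sum_{k=1}^t \chi_h^k(s,a)\, w_k\, (p_h^k - p_h) f_{k-1}(s,a)$, where $(w_k, f_{k-1})$ is a predictable pair built from quantities available before step $h+1$ of episode $k$. The plan is to identify the natural filtration under which each such sum is a martingale, pin down the pointwise increment bound and the predictable quadratic variation, then apply a time-uniform Freedman-Bernstein inequality and finish with a union bound over $(s,a,h)$ and the four events.

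\paragraph{Filtration and martingale property.} Let $\cF_{k,h}$ denote the $\sigma$-algebra generated by all variables observed up to and including step $h$ of episode $k$. Then $\chi_h^k(s,a)$, $n_h^k(s,a)$, $\rgamma_h^k(s,a)$, $\uV_{h+1}^{k-1}$ and $V_{h,s,a}^{k-1}$ are all $\cF_{k,h}$-measurable, whereas $s_{h+1}^k$ (hence $p_h^k$) is not. On $\{\chi_h^k(s,a)=1\}$ one has $(s_h^k,a_h^k)=(s,a)$, so $\E[p_h^k f(s,a)\mid \cF_{k,h}] = p_h f(s,a)$ for any $\cF_{k-1,H+1}$-measurable $f$. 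Each of the four sums is therefore a martingale in the episode index $k$ with respect to $(\cF_{k,h})_{k\ge 1}$.

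\paragraph{Increment and variance control.} Using Lemma~\ref{lem:determ_prop_values} ($0\le \uV \le H$ and $\uV_{h+1}^{k-1} \le V_{h,s,a}^{k-1} \le H$) together with $0 \le \rgamma_h^k(s,a) \le H$, the increments of the four martingales are bounded in absolute value by $H$, $H^2$, $H^2$, $H$ respectively for $\cE^{\mathrm{v}_1},\cE^{\mathrm{v}_2},\cE^{\mathrm{m}_w},\cE^{\mathrm{m}}$, and their predictable conditional variances equal the $\chi_h^k(s,a)\,\Var_{p_h}(\cdot)$ expressions appearing in each event. The only non-trivial step is for $\cE^{\mathrm{v}_2}$: since $\uV_{h+1}^{k-1}\in[0,H]$, the elementary inequality $|f^2-g^2|\le 2H|f-g|$ gives $\Var_{p_h}((\uV_{h+1}^{k-1})^2) \le 4H^2\Var_{p_h}(\uV_{h+1}^{k-1})$, which explains the prefactor $\sqrt{8 H^2 \zeta \cdots}$ rather than $\sqrt{2\zeta\cdots}$ in the definition of that event.

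\paragraph{Concentration, union bound, and main obstacle.} I will then invoke a time-uniform Freedman-Bernstein inequality: for a martingale $(M_n)$ with increments bounded by $R$ and predictable quadratic variation $V_n$, a peeling argument over the possible values of $V_n$ yields, with probability $\ge 1-\delta'$ and for all $n\ge 1$, $|M_n| \le \sqrt{2V_n\log(c_n/\delta')} + 3R\log(c_n/\delta')$ with $c_n$ at most linear in $n$. Plugging in $R\in\{H,H^2\}$ for the four martingales, taking a union bound over $(s,a,h)\in \cS\times \cA\times [H]$ and over the four events, and collapsing all the resulting logarithmic factors into $\zeta$ as defined in~\eqref{eq:def_zeta} produces each stated inequality with its displayed deterministic tail ($6H\zeta$, $12H^2\zeta$, $6H^2\zeta$, $6H\zeta$) and total failure probability at most $\delta/2$. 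The probabilistic content is entirely standard; the only real work is the bookkeeping, namely choosing the peeling scheme so that the peeling factor, the $4SAH$ coming from the union bound, and Freedman's numerical constants collapse into exactly the value of $\zeta$ displayed in the lemma, and checking that the slack from $3R\log(\cdot)$ is correctly absorbed into the additive $6H\zeta$ or $12H^2\zeta$ tails.
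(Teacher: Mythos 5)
Your proposal is correct and follows essentially the same route as the paper: apply the time-uniform Freedman--Bernstein inequality (Theorem~\ref{th:bernstein}) to each of the four martingales, handle $\cE^{\mathrm{v}_2}$ by converting $\Var_{p_h}\big((\uV_{h+1}^{k-1})^2\big)$ into a multiple of $H^2\Var_{p_h}(\uV_{h+1}^{k-1})$ (the paper cites Lemma~\ref{lem:switch_variance} where you re-derive it via $|f^2-g^2|\le 2H|f-g|$), and conclude by a union bound. If anything you are slightly more explicit than the paper about the filtration, the increment bounds, and the union bound over $(s,a,h)$.
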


\begin{proof}
Thanks to the choice of $\zeta$ and Theorem~\ref{th:bernstein} we have
\[
\P({(\cE^{\mathrm{v}}})^c) \leq \frac{\delta}{8}\,,\qquad \P({(\cE^{\mathrm{v}_2}})^c) \leq \frac{\delta}{8}\,,\qquad \P({(\cE^{\mathrm{m}_w}})^c) \leq \frac{\delta}{8}\,, \qquad \P({(\cE^{\mathrm{m}}})^c) \leq \frac{\delta}{8}\,.
\]
For the second event, note that thanks to the Freedman-Bernstein-type inequality (Theorem~\ref{th:bernstein}) with probability at least $1-\delta/6$ it holds
\begin{align*}
&\forall t \in \N, \forall h \in [H], \forall (s,a)\in\cS\times\cA:\\
&\quad\left|\sum_{k=1}^t \chi_h^k(s,a) (p_h^k-p_h)(\uV_{h+1}^{k-1})^2(s,a)\right| \leq \sqrt{2 \zeta \sum_{k=1}^t  \chi_h^k(s,a) \Var_{p_h}(\uV_{h+1}^{k-1})^2(s,a)}+12 H^2 \zeta\,.
\end{align*}
Thanks to Lemma~\ref{lem:switch_variance} we know that $\Var_{p_h}(\uV_{h+1}^{k-1})^2(s,a) \leq 2H^2 \Var_{p_h}(\uV_{h+1}^{k-1})(s,a)$ and consequently the last event holds with high probability $\P((\cE^{\mathrm{v}_2})^c) \leq \frac{\delta}{6}$. A union bound allows us to conclude.
\end{proof}

We can get a confidence of order $1/n$ at the price of a constant term when the variance is not important in the concentration inequalities of event $\cE$.
\begin{lemma}
  \label{lem:concentration_without_variance}
  On the event $\cE$, $\forall t \in \N, \forall h \in [H], \forall (s,a)\in\cS\times\cA$,  it holds
  \begin{align}
    \left|\sum_{k=1}^t \chi_h^k(s,a) \rgamma_h^k(s,a) (p_h^k-p_h)(\uV_{h+1}^{k-1}-V_{h,s,a}^{k-1})(s,a)\right| &\leq \frac{1}{4H\log(T)}\sum_{k=1}^t  \chi_h^k(s,a)  \rgamma_h^k(s,a) p_h(V_{h,s,a}^{k-1}-\uV_{h+1}^{k-1})(s,a)\nonumber\\
    &\qquad + 14 H^3 \log(T)\zeta \label{eq:concentration_momentum}\\
    \left|\sum_{k=1}^t \chi_h^k(s,a) (p_h^k-p_h)(\uV_{h+1}^{k-1}-V_{h,s,a}^{k-1})(s,a)\right| &\leq \frac{1}{4} \sum_{k=1}^t \chi_h^k(s,a) p_h(V_{h,s,a}^{k-1}-\uV_{h+1}^{k-1})(s,a) + 14 H\zeta \label{eq:concentration_momentum_bonus}\\
    \left|\sum_{k=1}^t \chi_h^k(s,a) (p_h^k-p_h)(\uV_{h+1}^{k-1})^2(s,a)\right| &\leq  \frac{1}{4}\sum_{k=1}^t  \chi_h^k(s,a) \Var_{p_h}(\uV_{h+1}^{k-1})(s,a)+44 H^2 \zeta \label{eq:concentration_order_2}
\end{align}
\end{lemma}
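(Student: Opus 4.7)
My plan is to deduce each of the three inequalities from the corresponding concentration bound in $\cE$ by combining three ingredients: (i) the deterministic bound $0 \leq V_{h,s,a}^{k-1}-\uV_{h+1}^{k-1} \leq H$ from Lemma~\ref{lem:determ_prop_values}, (ii) the elementary bound $\rgamma_h^k(s,a) \leq H$, and (iii) the Peter-Paul / AM-GM inequality $\sqrt{ab} \leq \lambda a/2 + b/(2\lambda)$ with a carefully tuned $\lambda$ to match the prescribed coefficient of the sum in each target bound. The variance-to-first-moment step works because for any $X\geq 0$ with $X\leq M$, $\Var(X)\leq \E[X^2] \leq M\,\E[X]$. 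Applied to $X=V_{h,s,a}^{k-1}-\uV_{h+1}^{k-1}\in[0,H]$ this gives
\begin{equation*}
\Var_{p_h}\bigl(\uV_{h+1}^{k-1}-V_{h,s,a}^{k-1}\bigr)(s,a) = \Var_{p_h}\bigl(V_{h,s,a}^{k-1}-\uV_{h+1}^{k-1}\bigr)(s,a) \leq H\, p_h\bigl(V_{h,s,a}^{k-1}-\uV_{h+1}^{k-1}\bigr)(s,a).
\end{equation*}

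For \eqref{eq:concentration_momentum}, I would start from the bound defining $\cE^{\mathrm{m}_w}$, use $\rgamma_h^k(s,a)^2 \leq H \rgamma_h^k(s,a)$, and then the variance inequality above to get
\begin{equation*}
\sum_{k=1}^t \chi_h^k \rgamma_h^{k,2}\,\Var_{p_h}(\uV_{h+1}^{k-1}-V_{h,s,a}^{k-1}) \leq H^2 \sum_{k=1}^t \chi_h^k \rgamma_h^k\, p_h(V_{h,s,a}^{k-1}-\uV_{h+1}^{k-1}).
\end{equation*}
Plugging this into the square root and applying AM-GM with $\lambda = 2H\log(T)$ produces exactly the target coefficient $1/(4H\log T)$ on the first-moment sum, while contributing an additive term of order $H^3\log(T)\zeta$; together with the residual $6H^2\zeta$ from the event (absorbed into $H^3\log(T)\zeta$), the constant stays below $14$.

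For \eqref{eq:concentration_momentum_bonus}, the same reasoning applied to the bound defining $\cE^{\mathrm{m}}$ (no $\rgamma$ weights) yields a square root of $2\zeta H \sum_k \chi_h^k p_h(V_{h,s,a}^{k-1}-\uV_{h+1}^{k-1})$; AM-GM with $\lambda=2$ puts a factor $1/4$ in front of the sum and an additive $2H\zeta$, which combined with the $6H\zeta$ residual of the event is well within the claimed $14H\zeta$. For \eqref{eq:concentration_order_2}, I take the bound from $\cE^{\mathrm{v}_2}$ as is (there is no $V$--$\uV$ structure to exploit) and apply AM-GM to $\sqrt{8H^2\zeta \sum_k \chi_h^k \Var_{p_h}(\uV_{h+1}^{k-1})}$ with $\lambda=2$, which yields the coefficient $1/4$ on the variance sum and an additional $8H^2\zeta$ that sums with the event's $12H^2\zeta$ residual to a total below $44H^2\zeta$.

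The proof is essentially a bookkeeping calculation once the two structural facts (variance $\leq H\cdot$ mean for $V_{h,s,a}-\uV_{h+1}$, and $\rgamma \leq H$) are in place; the only delicate choice is $\lambda$ in each AM-GM step. The mildest obstacle is tracking the various absolute constants so that everything stays below the stated $14$, $14$, and $44$ prefactors; this is straightforward since we have generous slack, using $\log(T)\geq 1$ to absorb the lower-order constants from the events into the main additive term.
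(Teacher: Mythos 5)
Your proposal is correct and follows essentially the same route as the paper's proof: each inequality is deduced from the corresponding event in $\cE$ by combining $\rgamma_h^k(s,a)\leq H$, the bound $\Var_{p_h}(V_{h,s,a}^{k-1}-\uV_{h+1}^{k-1})\leq H\, p_h(V_{h,s,a}^{k-1}-\uV_{h+1}^{k-1})$ (valid since $0\leq V_{h,s,a}^{k-1}-\uV_{h+1}^{k-1}\leq H$ by Lemma~\ref{lem:determ_prop_values}), and a tuned AM--GM step (the paper phrases it as $\sqrt{xy}\leq x+y$ after rescaling), with $\log(T)\geq 1$ absorbing the events' residuals. Your constant tracking is consistent with (and slightly tighter than) the stated $14$, $14$, and $44$.
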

\begin{proof}
For \eqref{eq:concentration_momentum} we use that event $\cE^{\mathrm{m}_w}$ holds,$\rgamma_h^t(s,a)\leq H$, $0 \leq V_{h,s,a}^{k-1}-\uV_{h+1}^{k-1} \leq H$ and $\sqrt{xy}\leq x+y$,
\begin{align*}
  \left|\sum_{k=1}^t \chi_h^k(s,a) \rgamma_h^k(s,a) (p_h^k-p_h)(\uV_{h+1}^{k-1}-V_{h,s,a}^{k-1})(s,a)\right| &\leq \sqrt{2 H\zeta \sum_{k=1}^t  \chi_h^k(s,a) \rgamma_h^k(s,a) \Var_{p_h}(\uV_{h+1}^{k-1}-V_{h,s,a}^{k-1})(s,a)}\\
  &\quad+6 H^2 \zeta\\
  &\leq \sqrt{2 \zeta H^2 \sum_{k=1}^t  \chi_h^k(s,a)\rgamma_h^k(s,a) p_h(V_{h,s,a}^{k-1}-\uV_{h+1}^{k-1})(s,a)}\\
  &\quad+6 H^2 \zeta\\
  &\leq \frac{1}{4 H \log(T)}\sum_{k=1}^t  \chi_h^k(s,a) \rgamma_h^k(s,a) p_h(V_{h,s,a}^{k-1}-\uV_{h+1}^{k-1})(s,a)\\
  &\qquad + 14 H^3 \log(T)\zeta\,.
\end{align*}
For~\eqref{eq:concentration_momentum_bonus} we proceed similarly as above knowing that $\cE^{\mathrm{m}}$ holds
\begin{align*}
    \left|\sum_{k=1}^t \chi_h^k(s,a) (p_h^k-p_h)(\uV_{h+1}^{k-1}-V_{h,s,a}^{k-1})(s,a)\right| &\leq  \sqrt{2 \zeta \sum_{k=1}^t  \chi_h^k(s,a) \Var_{p_h}(\uV_{h+1}^{k-1}-V_{h,s,a}^{k-1})(s,a)}+6 H^2 \zeta\\
    &\leq\sqrt{2 H\zeta \sum_{k=1}^t  \chi_h^k(s,a) p_h(\uV_{h+1}^{k-1}-V_{h,s,a}^{k-1})(s,a)}+6 H \zeta\\
    &\leq  \frac{1}{4} \sum_{k=1}^t \chi_h^k(s,a) p_h(V_{h,s,a}^{k-1}-\uV_{h+1}^{k-1})(s,a) + 14 H^2\zeta\,.
\end{align*}
And for~\eqref{eq:concentration_order_2} we use event  $\cE^{\mathrm{v}_2}$:
\begin{align*}
  \left|\sum_{k=1}^t \chi_h^k(s,a) (p_h^k-p_h)(\uV_{h+1}^{k-1})^2(s,a)\right| &\leq \sqrt{8 H^2 \zeta \sum_{k=1}^t  \chi_h^k(s,a) \Var_{p_h}(\uV_{h+1}^{k-1})(s,a)}+12 H^2 \zeta\\
  &\leq \frac{1}{4} \sum_{k=1}^t  \chi_h^k(s,a) \Var_{p_h}(\uV_{h+1}^{k-1})(s,a) + 44 H^2 \zeta\,.
\end{align*}
\end{proof}

\subsection{From empirical visits to reach probability}
\label{app:concentration_count}
We also define an event where we replace the indicator function to visit a state-action or a state by its expectation. We define by $\bar{p}_h^t(s,a)$ and $\bar{p}_h^t(s)$ the probabilities to reach state-action $(s,a)$ and state $s$, respectively, at step $h$ under the policy $\pi^t$. Precisely we define the event $\cG^{\mathrm{var}}$,   $\cG^{\mathrm{v}_1}$, $\cG^{\mathrm{v}_1}$
where we replace $\chi_h^t(s,a)$ or  $\chi_h^t(s)$ by its expectation when it is multiplied by a predictable quantity,
\begin{align*}
  \cG^{\mathrm{var}} &\triangleq \Bigg\{\forall t \in \N, \forall h \in [H], \forall (s,a)\in\cS\times\cA:\\
  &\qquad\left|\sum_{k=1}^t (\chi_h^k-\bar{p}_h^k)(s,a) \Var_{p_h}(V_{h+1}^{\pi^t})(s,a)\right| \leq \sum_{k=1}^t \bar{p}_h^k(s,a) \Var_{p_h}(V_{h+1}^{\pi^t})(s,a)+8H^2\zeta \,,\\
  \cG^{\mathrm{v}_1} &\triangleq \Bigg\{\forall t \in \N, \forall h \in [H], \forall (s,a)\in\cS\times\cA:\\
  &\qquad\left|\sum_{k=1}^t (\chi_h^k-\bar{p}_h^k)(s,a) p_h(\uV_{h+1}^{k-1}-V_{h+1}^{\pi^t})(s,a)\right| \leq \frac{1}{4H}\sum_{k=1}^t \bar{p}_h^k(s,a) p_h|\uV_{h+1}^{k-1}-V_{h+1}^{\pi^t}|(s,a) + 14H^2 \Bigg\}\,,\\
  \cG^{\mathrm{v}_2} &\triangleq \Bigg\{\forall t \in \N, \forall h \in [H], \forall s\in\cS:\\
  &\quad\left|\sum_{k=1}^t (\chi_h^k-\bar{p}_h^k)(s) (\uV_{h}^{k-1}-V_{h}^{\pi^t})(s)\right| \leq \frac{1}{4H}\sum_{k=1}^t \bar{p}_h^k(s) |\uV_{h+1}^{k-1}-V_{h+1}^{\pi^t}|(s)+14H^2\zeta\Bigg\}\,.
\end{align*}
We define $\cG = \cG^{\mathrm{var}} \cap \cG^{\mathrm{v}_1} \cap \cG^{\mathrm{v}_2}$ the intersection of these events and the previous event $\cE$. This event holds with high probability.
\begin{lemma}
\label{lem:event_count}
For the choice
\[
\zeta  =  \zetaval,
\]
it holds $\P(\cG)\geq 1-\delta/2$.
\end{lemma}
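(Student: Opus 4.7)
The plan is to view each of the three sums in the definitions of $\cG^{\mathrm{var}}$, $\cG^{\mathrm{v}_1}$ and $\cG^{\mathrm{v}_2}$ as a martingale with respect to the natural filtration $(\mathcal{F}_k)_{k\geq 0}$, where $\mathcal{F}_k$ encodes everything observed through the end of episode $k$, apply the time-uniform Freedman-Bernstein inequality (Theorem~\ref{th:bernstein}, already invoked to prove $\P(\cE)\geq 1-\delta/2$ in Lemma~\ref{lem:event_optimism}) to each sum, and conclude with a union bound. Implicit in the statement of the three events is that $\pi^t$ inside each summand is to be read as $\pi^k$ with $k$ the summation index, which is the form in which these bounds are actually used in Step~3 of the proof sketch of Theorem~\ref{th:regret_UCBMQ}; this reading is what makes each multiplicative factor predictable.

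The first step is to verify predictability and bound the per-step quantities. Because $\pi^k$ is selected using only data up to episode $k-1$, the quantities $V_{h+1}^{\pi^k}$, $\Var_{p_h}(V_{h+1}^{\pi^k})(s,a)$ and $\bar{p}_h^k(s,a)$ are $\mathcal{F}_{k-1}$-measurable, and $\uV_{h+1}^{k-1}$ is $\mathcal{F}_{k-1}$-measurable by construction. Writing $X_k$ for the predictable multiplicative factor inside a given event, every summand $(\chi_h^k(s,a)-\bar{p}_h^k(s,a))\,X_k$ is a martingale difference of magnitude at most $|X_k|$, whose conditional variance is at most $\bar{p}_h^k(s,a)\,X_k^2$ using $\Var(\chi_h^k(s,a)\mid\mathcal{F}_{k-1})=\bar{p}_h^k(s,a)(1-\bar{p}_h^k(s,a))\leq \bar{p}_h^k(s,a)$.

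The second step is to apply Freedman-Bernstein and absorb the resulting square-root via AM-GM. For $\cG^{\mathrm{var}}$, $|X_k|\leq H^2$, so Freedman yields a deviation of order $\sqrt{H^2\zeta\sum_k \bar{p}_h^k(s,a)\Var_{p_h}(V_{h+1}^{\pi^k})(s,a)}+H^2\zeta$, and $\sqrt{ab}\leq a+b/4$ converts this into the stated form with coefficient $1$ in front of the sum and an $8H^2\zeta$ additive term. For $\cG^{\mathrm{v}_1}$ and $\cG^{\mathrm{v}_2}$, $|X_k|\leq H$ and $X_k^2\leq H|X_k|$, so Freedman gives a deviation of order $\sqrt{H^2\zeta\sum_k \bar{p}_h^k\,|X_k|}+H\zeta$, and then $\sqrt{ab}\leq a/(4H)+4Hb$ produces the $1/(4H)$ prefactor in front of the absolute-value sum and a $14H^2\zeta$ additive term. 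A final union bound over the three sub-events gives $\P(\cG)\geq 1-\delta/2$, provided the numerical constant inside $\zeta=\zetaval$ absorbs the logarithms in $S$, $A$, $H$, $T$ and the constant $3$ from the union bound, arising from the peeling argument that promotes Freedman's inequality to a bound uniform over $t\in\N$, $h\in[H]$ and $(s,a)\in\cS\times\cA$.

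The main technical obstacle is purely bookkeeping: selecting the AM-GM split $\sqrt{ab}\leq \epsilon a+b/(4\epsilon)$ with the correct $\epsilon$ (namely $\epsilon=1$ for $\cG^{\mathrm{var}}$ and $\epsilon=H$ for $\cG^{\mathrm{v}_1}$, $\cG^{\mathrm{v}_2}$) so that the $1$ and $1/(4H)$ prefactors come out exactly as stated, and verifying that the explicit constants $8$ and $14$ on the $H^2\zeta$ second-order terms match those in the event definitions. No probabilistic ingredient beyond the Freedman-Bernstein machinery already used in Lemma~\ref{lem:event_optimism} is needed.
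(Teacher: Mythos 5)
Your proposal is correct and takes essentially the same route as the paper: apply the time-uniform Freedman--Bernstein inequality (Theorem~\ref{th:bernstein}) to each of the three martingales with predictable weights, bound the conditional variance of $\chi_h^k-\bar{p}_h^k$ by $\bar{p}_h^k$, absorb the square-root term via the appropriate AM--GM split (yielding the $8H^2\zeta$ and $14H^2\zeta$ constants), and union-bound the three events at level $\delta/8$ each. Your remark that $\pi^t$ in the event definitions must be read as $\pi^k$ for the weights to be $\mathcal{F}_{k-1}$-measurable is a correct reading and matches how the events are invoked in Step~3 of the regret proof.
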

\begin{proof}
Thanks to Theorem~\ref{th:bernstein}, with probability at $1-\delta/8$, for all $s,a,h,t$ we have, using that for a Bernoulli distribution of parameter $X\sim \Ber(q)$ its variance is upper-bounded by $\Var(X) = q(1-q) \leq q$ and $\sqrt{xy}\leq x+y$,
\begin{align*}
  \left|\sum_{k=1}^t (\chi_h^k-\bar{p}_h^k)(s,a) \Var_{p_h}(V_{h+1}^{\pi^t})(s,a)\right| &\leq \sqrt{ 2\zeta \sum_{k=1}^t \bar{p}_h^k(s,a) \Var_{p_h}(V_{h+1}^{\pi^t})(s,a)^2} + 6 \zeta H^2 \zeta\\
  &\leq \sum_{k=1}^t \bar{p}_h^k(s,a) \Var_{p_h}(V_{h+1}^{\pi^t})(s,a) + 8 \zeta H^2\,.
\end{align*}
Thus we know that $\P\big( (\cG^{\mathrm{var}})^c \big) \leq \delta/8$. Similarly for the second event, thanks to Theorem~\ref{th:bernstein}, with probability at $1-\delta/8$, for all $s,a,h,t$ we obtain
\begin{align*}
  \left|\sum_{k=1}^t (\chi_h^k-\bar{p}_h^k)(s,a) p_h(\uV_{h+1}^{k-1}-V_{h+1}^{\pi^t})(s,a) \right| &\leq \sqrt{ 2\zeta \sum_{k=1}^t \bar{p}_h^k(s,a) p_h(\uV_{h+1}^{k-1}-V_{h+1}^{\pi^t})(s,a)^2 } + 6 \zeta H^2 \zeta\\
  &\leq \frac{1}{4H}\sum_{k=1}^t \bar{p}_h^k(s,a) p_h|\uV_{h+1}^{k-1}-V_{h+1}^{\pi^t}|(s,a) + 14 H^2 \zeta\,.
\end{align*}
Thus it holds $\P\big( (\cG^{\mathrm{v}_1})^c \big) \leq \delta/8$. We proceed in the same way for the last event. Thanks to Theorem~\ref{th:bernstein}, with probability at $1-\delta/8$, for all $s,a,h,t$ we have
\begin{align*}
  \left|\sum_{k=1}^t (\chi_h^k-\bar{p}_h^k)(s) (\uV_{h+1}^{k-1}-V_{h+1}^{\pi^t})(s) \right| &\leq \sqrt{ 2\zeta \sum_{k=1}^t \bar{p}_h^k(s) (\uV_{h+1}^{k-1}-V_{h+1}^{\pi^t})(s)^2 } + 6 \zeta H^2 \zeta\\
  &\leq \frac{1}{4H}\sum_{k=1}^t \bar{p}_h^k(s) p_h|\uV_{h+1}^{k-1}-V_{h+1}^{\pi^t}|(s) + 14 H^2 \zeta\,.
\end{align*}
Thus it holds $\P\big( (\cG^{\mathrm{v}_2})^c \big) \leq \delta/8$.
An union bound allows us to conclude.
\end{proof}

\subsection{The favorable event}
\label{app:concentration_master}
We define the event $\cD= \cE\cap \cG$ as the intersection of the event $\cE$ where the optimism will hold and $\cG$ where we can relate the empirical number of visits of a state-action to the probability of visit. In particular the regret bound will be true on this event which holds with high probability.
\begin{lemma}
\label{lem:master_event}
For the choice
\[
\zeta  = \zetaval,
\]
it holds $\P(\cD)\geq 1-\delta$.
\end{lemma}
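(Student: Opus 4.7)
The plan is simply a union bound over the two concentration events already analyzed. Recall that $\cD = \cE \cap \cG$, so
\[
\P(\cD^c) \;=\; \P(\cE^c \cup \cG^c) \;\leq\; \P(\cE^c) + \P(\cG^c).
\]

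First I would invoke Lemma~\ref{lem:event_optimism}: for the prescribed choice of $\zeta = \zetaval$, which is at least $\log(32e(2T+1)/\delta)$, all four sub-events $\cE^{\mathrm{v}_1}, \cE^{\mathrm{v}_2}, \cE^{\mathrm{m}_w}, \cE^{\mathrm{m}}$ hold simultaneously with probability at least $1-\delta/2$, hence $\P(\cE^c) \leq \delta/2$. Then I would invoke Lemma~\ref{lem:event_count}: the very same choice of $\zeta$ gives $\P(\cG^c) \leq \delta/2$, using the three sub-events $\cG^{\mathrm{var}}, \cG^{\mathrm{v}_1}, \cG^{\mathrm{v}_2}$.

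Combining the two via the union bound above yields $\P(\cD^c) \leq \delta$, that is $\P(\cD) \geq 1 - \delta$, which is the claim. There is no real obstacle: the work has already been done inside Lemmas~\ref{lem:event_optimism} and~\ref{lem:event_count}, where the explicit Freedman--Bernstein-type deviation inequalities from Theorem~\ref{th:bernstein} are applied and each sub-event is controlled at confidence level $\delta/8$ (so that the intra-lemma union bound over the sub-events gives $\delta/2$ each). The only thing to double-check is that the single value $\zeta = \zetaval$ chosen here is simultaneously large enough for both lemmas, which holds by construction since that quantity dominates the logarithmic threshold required in the proof of Lemma~\ref{lem:event_optimism} and matches the one stated for Lemma~\ref{lem:event_count}.
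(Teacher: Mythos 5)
Your proof is correct and is essentially identical to the paper's: the paper likewise obtains $\P(\cD)\geq 1-\delta$ as an immediate union bound combining Lemma~\ref{lem:event_optimism} ($\P(\cE^c)\leq \delta/2$) and Lemma~\ref{lem:event_count} ($\P(\cG^c)\leq \delta/2$). Nothing further is needed.
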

\begin{proof}
  This is a simple consequence of Lemma~\ref{lem:event_optimism} and Lemma~\ref{lem:event_count}.
\end{proof}

\subsection{Deviation inequality for bounded distributions}
Below, we reproduce the self-normalized Freedman-Bernstein-type inequality by \citet{domingues2020regret}. Let $(Y_t)_{t\in\N^\star}$, $(w_t)_{t\in\N^\star}$ be two sequences of random variables adapted to a filtration $(\cF_t)_{t\in\N}$. We assume that the weights are in the unit interval $w_t\in[0,1]$ and predictable, i.e. $\cF_{t-1}$ measurable. We also assume that the random variables $Y_t$  are bounded $|Y_t|\leq b$ and centered $\EEc{Y_t}{\cF_{t-1}} = 0$.
Consider the following quantities
\begin{align*}
		S_t \triangleq \sum_{s=1}^t w_s Y_s, \quad V_t \triangleq \sum_{s=1}^t w_s^2\cdot\EEc{Y_s^2}{\cF_{s-1}}, \quad \mbox{and} \quad W_t \triangleq \sum_{s=1}^t w_s
\end{align*}
and let $h(x) \triangleq (x+1) \log(x+1)-x$ be the Cramér transform of a Poisson distribution of parameter~1.

\begin{theorem}[Bernstein-type concentration inequality]
  \label{th:bernstein}
	For all $\delta >0$,
	\begin{align*}
		\PP{\exists t\geq 1,   (V_t/b^2+1)h\left(\!\frac{b |S_t|}{V_t+b^2}\right) \geq \log(1/\delta) + \log\left(4e(2t+1)\!\right)}\leq \delta.
	\end{align*}
  The previous inequality can be weakened to obtain a more explicit bound: if $b\geq 1$ with probability at least $1-\delta$, for all $t\geq 1$,
 \[
 |S_t|\leq \sqrt{2V_t \log\left(4e(2t+1)/\delta\right)}+ 3b\log\left(4e(2t+1)/\delta\right)\,.
 \]
\end{theorem}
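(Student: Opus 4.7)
The result is a time-uniform self-normalized Bernstein inequality for weighted martingale sums with bounded increments. The plan is a supermartingale construction combined with a method-of-mixtures (Peña--Lai--Shao style) argument to absorb the dependence on the random variance process $V_t$, followed by an algebraic manipulation to match the Legendre-transform form involving the Poisson Cramér transform $h$.

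First, for each fixed $\lambda$ with $|\lambda| < 1/b$, I would introduce the exponential process
\[
M_t(\lambda) = \exp\!\bigl(\lambda S_t - \psi(\lambda b)\, V_t / b^2\bigr), \qquad \psi(u) = e^u - 1 - u.
\]
Since $w_s$ is $\cF_{s-1}$-measurable with $w_s\in[0,1]$ and $Y_s$ is centered with $|Y_s|\le b$, the classical Bennett bound gives
\[
\EEc{\exp(\lambda w_s Y_s)}{\cF_{s-1}} \le \exp\!\bigl(\psi(\lambda b)\, w_s^2\, \EEc{Y_s^2}{\cF_{s-1}} / b^2\bigr),
\]
so $(M_t(\lambda))_{t\ge 0}$ is a nonnegative supermartingale with $M_0=1$; the same holds with $\lambda$ replaced by $-\lambda$, so that $\tfrac12(M_t(\lambda)+M_t(-\lambda))$ controls $|S_t|$.

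Next, to obtain a bound that is uniform in both $t$ and in the right choice of $\lambda$ (which depends on the random $V_t$), I would apply a method of mixtures: pick a probability density $\mu$ on $\mathbb{R}$ and define $\bar M_t = \int M_t(\lambda)\,\mu(d\lambda)$, which by Fubini is again a nonnegative supermartingale. Ville's maximal inequality then gives $\PP{\exists t\ge 1 : \bar M_t \ge 1/\delta} \le \delta$. The work is to lower-bound $\log \bar M_t$ by the prescribed Legendre transform $(V_t/b^2+1) h(b|S_t|/(V_t+b^2))$ up to the correction $\log(4e(2t+1))$. This is done by choosing $\mu$ so that the Laplace-method evaluation of $\int \exp(\lambda S_t - \psi(\lambda b) V_t/b^2)\,\mu(d\lambda)$ produces exactly the Cramér-conjugate pair $(\psi,h)$, with a $\sqrt{V_t+b^2}$ normalization absorbed into the $\log(4e(2t+1))$ term via the deterministic bound $V_t/b^2 \le t$. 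Inverting Ville's inequality yields the first claim.

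For the weakened explicit bound, I would use the elementary inequality $h(x) \ge x^2/(2+2x/3)$ for $x\ge 0$, which turns the implicit self-normalized bound into the quadratic-in-$|S_t|$ inequality
\[
\frac{|S_t|^2}{2(V_t+b^2) + \tfrac{2}{3} b|S_t|} \le \log(4e(2t+1)/\delta),
\]
so that solving the quadratic, applying $\sqrt{a+c}\le \sqrt{a}+\sqrt{c}$, and using $b\ge 1$ gives the stated form with constants $2$ and $3$. The main obstacle, and the step where all delicate constants are produced, is the mixture computation: finding a tractable prior $\mu$ whose Laplace-method evaluation gives the exact Poisson Cramér transform together with a clean $\log(2t+1)$ correction rather than a messy $\log\log$ term. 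The supermartingale construction and the passage from the implicit to the explicit Bernstein form are both routine once that mixture is in place.
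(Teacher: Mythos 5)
First, a point of order: the paper does not prove Theorem~\ref{th:bernstein} at all --- it is reproduced verbatim from \citet{domingues2020regret}, so there is no in-paper proof to compare against. Judged on its own terms, your outline assembles the right ingredients: the Bennett exponential supermartingale $M_t(\lambda)=\exp(\lambda S_t-\psi(\lambda b)V_t/b^2)$ with $\psi(u)=e^u-u-1$ (valid here since $w_s\in[0,1]$ is predictable and $|Y_s|\le b$), Ville's maximal inequality, the Legendre-transform identity $\sup_{\lambda}\{\lambda x-(V/b^2+1)\psi(\lambda b)\}=(V/b^2+1)h\big(bx/(V+b^2)\big)$, and the classical bound $h(u)\ge u^2/(2+2u/3)$ for the weakened form. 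That last step does check out: with $L=\log(4e(2t+1)/\delta)\ge\log(12e)>2$ one gets $|S_t|\le\sqrt{2V_tL}+b\sqrt{2L}+\tfrac23 bL\le\sqrt{2V_tL}+3bL$, matching the stated constants.

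The genuine gap is exactly the step you defer as ``the main obstacle,'' and the continuous-mixture route you sketch for it would not go through as written. Laplace's method gives an asymptotic \emph{approximation} of $\int M_t(\lambda)\,\mu(d\lambda)$, not a lower bound, so it cannot simply be ``inverted through Ville''; making it rigorous forces you to restrict the integral to a window around the optimizer $\lambda_t^*=\log\big(1+b|S_t|/(V_t+b^2)\big)/b$ and lower-bound the integrand there, at which point you are performing a discretization argument in disguise. That is how a rigorous proof of this exact statement proceeds: a weighted union bound (equivalently, a discrete mixture) over a grid of roughly $2t+1$ values of $\lambda$ covering $[0,\log(1+t)/b]$ (using $|S_t|\le tb$), which is where the factor $2t+1$ in the deviation level comes from, while the $+b^2$ regularization of the variance --- the ``$+1$'' in $V_t/b^2+1$, which your sketch never accounts for --- supplies exactly the slack $\psi(\lambda b)$ needed to absorb the error between $\lambda_t^*$ and the nearest grid point. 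Without exhibiting the prior $\mu$ (or the grid and its weights) and carrying out that localization, the first display of the theorem is asserted rather than proved; everything downstream of it in your write-up is routine, as you say.
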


 \newpage
 %!TEX root = ../UCBMQ.tex
\section{Optimism}
\label{app:optimism}

We will prove in the next lemma that $Q_h^t(s,a) \approx r_h(s,a) + p_h V_{h,s,a}^t(s,a)$ thus the bias of our estimator will be controlled by the bias of $ V_{h,s,a}^t $ with respect to $\Vstar_{h+1}$.
\begin{lemma}
  \label{lem:concentration_Q}
  On the event $\cE$, $\forall t \in \N, \forall h \in [H], \forall (s,a)\in\cS\times\cA$, if $n_h^t(s,a)>0$, it holds
  \begin{align*}
    \left|Q_h^t(s,a)- r_h(s,a) - p_h V_{h,s,a}^t(s,a)\right| &\leq \sqrt{\frac{2}{n_h^t(s,a)}\sum_{k=1}^t  \chi_h^k(s,a) \Var_{p_h}(\uV_{h+1}^{k-1})(s,a) \frac{\zeta}{n_h^t(s,a)}}+ 20 H^3 \frac{\zeta\log(T)}{n_h^t(s,a)}\\
    &\quad+\frac{1}{4\log(T) H n_h^t(s,a)}\sum_{k=1}^t  \chi_h^k(s,a) \rgamma_h^k(s,a) p_h(V_{h,s,a}^{k-1}-\uV_{h+1}^{k-1})(s,a)\,.
  \end{align*}
\end{lemma}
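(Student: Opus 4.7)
The strategy is to directly compute the difference $Q_h^t(s,a) - r_h(s,a) - p_h V_{h,s,a}^t(s,a)$ from the two unfolded expressions and then apply two of the concentration inequalities that hold on the event $\cE$.

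First I would subtract. Plugging \eqref{eq:Q_unfold} for $Q_h^t(s,a)$ and \eqref{eq:V_expand} for $V_{h,s,a}^t$ (evaluated at $s'$ and integrated against $p_h(\cdot\mid s,a)$), every term involving $\uV_{h+1}^{k-1}$ and $(\uV_{h+1}^{k-1}-V_{h,s,a}^{k-1})$ appears twice, once with the sample operator $p_h^k$ and once with the true $p_h$. After cancellation one obtains the clean identity
\begin{align*}
Q_h^t(s,a) - r_h(s,a) - p_h V_{h,s,a}^t(s,a)
&= \tfrac{1}{n_h^t(s,a)}\sum_{k=1}^t \chi_h^k(s,a)\,(p_h^k-p_h)\uV_{h+1}^{k-1}(s,a) \\
&\quad + \tfrac{1}{n_h^t(s,a)}\sum_{k=1}^t \chi_h^k(s,a)\,\rgamma_h^k(s,a)\,(p_h^k-p_h)(\uV_{h+1}^{k-1}-V_{h,s,a}^{k-1})(s,a).
\end{align*}
Both sums are martingales in $k$ (the weights $\chi_h^k$, $\rgamma_h^k$ are predictable and $\EEc{p_h^k f}{\cF_{k-1}}=p_h f$ for predictable $f$), so the event $\cE$ gives usable deviation bounds.

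Next I would plug in the concentration inequalities. For the first sum, event $\cE^{\mathrm{v}_1}$ directly yields a bound of the form $\sqrt{2\zeta \sum_k \chi_h^k(s,a)\Var_{p_h}(\uV_{h+1}^{k-1})(s,a)}+6H\zeta$, which after dividing by $n_h^t(s,a)$ produces the variance square-root term in the target inequality (the factor $2$ appears because the sum is weighted by $1/n_h^t(s,a)^2$ once the $1/n_h^t(s,a)$ is pulled inside the square root). For the second sum, Lemma~\ref{lem:concentration_without_variance} (equation~\eqref{eq:concentration_momentum}) gives exactly
\[
\tfrac{1}{4H\log(T)}\sum_{k=1}^t \chi_h^k(s,a)\,\rgamma_h^k(s,a)\, p_h(V_{h,s,a}^{k-1}-\uV_{h+1}^{k-1})(s,a) + 14 H^3\log(T)\,\zeta,
\]
which, after the $1/n_h^t(s,a)$ normalization, is exactly the third term stated in the lemma.

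Finally I would combine the two estimates and absorb constants. The additive constants $6H\zeta/n_h^t(s,a)$ from the first bound and $14H^3\log(T)\zeta/n_h^t(s,a)$ from the momentum bound together are dominated by $20H^3\zeta\log(T)/n_h^t(s,a)$, matching the second term in the statement. I do not expect any serious obstacle here: the work has already been done in the construction of the event $\cE$ and in Lemma~\ref{lem:concentration_without_variance}. The only point requiring a little care is keeping track of how the $1/n_h^t(s,a)$ interacts with the square-root term (so that the constant inside the square root is $2$ rather than something larger) and making sure the sign conventions in $(V_{h,s,a}^{k-1}-\uV_{h+1}^{k-1})\ge 0$ (Lemma~\ref{lem:determ_prop_values}) are respected so the absolute value in \eqref{eq:concentration_momentum} can be used to absorb the momentum term into the stated bound.
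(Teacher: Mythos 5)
Your proposal is correct and follows essentially the same route as the paper: decompose $Q_h^t - r_h - p_h V_{h,s,a}^t$ into the two martingale sums via \eqref{eq:Q_unfold} and \eqref{eq:V_expand}, bound the first with the event $\cE^{\mathrm{v}_1}$ and the second with \eqref{eq:concentration_momentum} of Lemma~\ref{lem:concentration_without_variance}, and absorb $6H\zeta + 14H^3\log(T)\zeta$ into $20H^3\log(T)\zeta$. The only cosmetic slip is your parenthetical about where the factor $2$ in the square root comes from (it is already present in the Bernstein bound of $\cE^{\mathrm{v}_1}$; pulling $1/n_h^t(s,a)$ inside the square root only produces the $1/n_h^t(s,a)^2$ normalization), which does not affect the argument.
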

\begin{proof}
Thanks to the definition of the bias-value function $V_{h,s,a}^t$ we have
\begin{align*}
    \left|Q_h^t(s,a)- r_h(s,a) - p_h V_{h,s,a}^t(s,a)\right| &\leq \left|\frac{1}{n_h^t(s,a)} \sum_{k=1}^t \chi_h^k(s,a) (p_h^k-p_h)\uV_{h+1}^{k-1}(s,a)\right|\\
    &\quad+\left|\frac{1}{n_h^t(s,a)}\sum_{k=1}^t \chi_h^k(s,a) \rgamma_h^k(s,a) (p_h^k-p_h)(\uV_{h+1}^{k-1}-V_{h,s,a}^{k-1})(s,a)\right|\,.
\end{align*}
We will upper-bound the two terms of the right-hand of the previous inequality separately. For the first term, thanks to the definition of $\cE$ (see Section~\ref{app:concentration_value}), we obtain
\[\frac{1}{n_h^t(s,a)} \left|\sum_{k=1}^t \chi_h^k(s,a) (p_h^k-p_h)\uV_{h+1}^{k-1}(s,a)\right| \leq \sqrt{\frac{2}{n_h^t(s,a)}\sum_{k=1}^t  \chi_h^k(s,a) \Var_{p_h}(\uV_h^{k-1})(s,a) \frac{\zeta}{n_h^t(s,a)}}+6H\frac{\zeta}{n_h^t(s,a)}\,.\]
For the second term using Lemma~\ref{lem:concentration_without_variance} yields
\begin{align*}
  &\left|\frac{1}{n_h^t(s,a)}\sum_{k=1}^t \chi_h^k(s,a) \rgamma_h^k(s,a) (p_h^k-p_h)(\uV_{h+1}^{k-1}-V_{h,s,a}^{k-1})(s,a)\right|\leq\\
  &\qquad \frac{1}{4H\log(T) n_h^t(s,a)}\sum_{k=1}^t  \chi_h^k(s,a) \rgamma_h^k(s,a) p_h(V_{h,s,a}^{k-1}-\uV_{h+1}^{k-1})(s,a)+ 14 H^3 \frac{\zeta\log(T)}{n_h^t(s,a)}\,.
\end{align*}
Combining these two inequalities allows us to conclude.
\end{proof}

%We need also to concentrate the bonus to show that it effectively compensates the approximation error made by $Q_h^t$ in the previous lemma.

The exploration bonus is designed to compensate the approximation error made by $Q_h^t$ in the previous lemma, as we show below. 

\begin{lemma}
\label{lem:bonus_lower_bound}
  On the event $\cE$, $\forall t \in \N, \forall h \in [H], \forall (s,a)\in\cS\times\cA$, if $n_h^t(s,a)>0$, it holds
\begin{align*}
  \beta_h^t(s,a)&\geq \sqrt{\frac{2}{n_h^t(s,a)} \sum_{k=1}^t  \chi_h^k(s,a) \Var_{p_h}(\uV_{h+1}^{k-1})(s,a)\frac{\zeta}{n_h^t(s,a)}} + 20H^3\frac{\zeta\log(T)}{n_h^t(s,a)}\\
  &\quad+ \frac{1}{4  H\log(T)  n_h^t(s,a)}\sum_{k=1}^t  \chi_h^k(s,a) \rgamma_h^k(s,a) p_h(V_{h,s,a}^{k-1}-\uV_{h+1}^{k-1})(s,a)\,.
\end{align*}
\end{lemma}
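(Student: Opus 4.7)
The plan is to lower bound each of the three summands composing $\beta_h^t(s,a)$ separately, matching them to the three corresponding summands on the right-hand side of the target inequality, so that the concentration errors can all be absorbed by the $53 H^3\zeta\log(T)/n_h^t(s,a)$ safety term. Throughout, I would use only the concentration events collected in $\cE$, in particular the second-moment bound \eqref{eq:concentration_order_2}, the first-moment bound from $\cE^{\mathrm{v}_1}$, and the momentum bound \eqref{eq:concentration_momentum}.

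First I would treat the variance term $2\sqrt{W_h^t(s,a)\,\zeta/n_h^t(s,a)}$. Since $p_h^k$ is a Dirac, expanding the square yields the sample-variance identity
\[
W_h^t(s,a) = \frac{1}{n_h^t(s,a)}\sum_{k=1}^{t} \chi_h^k(s,a)\, p_h^k(\uV_{h+1}^{k-1})^2(s,a) \;-\; \bar V^2,
\]
where $\bar V = (1/n_h^t(s,a))\sum_k \chi_h^k(s,a)\, p_h^k \uV_{h+1}^{k-1}(s,a)$. I would then (i) swap $p_h^k$ for $p_h$ in the first sum via \eqref{eq:concentration_order_2}, paying a fraction of $\sum_k \chi_h^k\Var_{p_h}(\uV_{h+1}^{k-1})$ plus $O(H^2\zeta)$; (ii) compare $\bar V^2$ to $\bar V_{\mathrm{true}}^2 := ((1/n_h^t)\sum_k \chi_h^k p_h\uV_{h+1}^{k-1})^2$ using the event $\cE^{\mathrm{v}_1}$ together with $|\bar V^2-\bar V_{\mathrm{true}}^2|\leq 2H\,|\bar V-\bar V_{\mathrm{true}}|$ and an AM-GM step to split the resulting square root; and (iii) invoke Jensen's inequality $\bar V_{\mathrm{true}}^2 \leq (1/n_h^t)\sum_k \chi_h^k(p_h\uV_{h+1}^{k-1})^2$. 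After tracking constants I expect to arrive at a lower bound of the form
\[
W_h^t(s,a) \;\geq\; \frac{1}{2 n_h^t(s,a)}\sum_{k=1}^{t} \chi_h^k(s,a)\Var_{p_h}(\uV_{h+1}^{k-1})(s,a) - 72 H^2 \frac{\zeta}{n_h^t(s,a)}.
\]
Multiplying by $\zeta/n_h^t(s,a)$, taking square roots, and applying $\sqrt{a+b}\leq\sqrt{a}+\sqrt{b}$ then gives
\[
2\sqrt{W_h^t(s,a)\frac{\zeta}{n_h^t(s,a)}} \geq \sqrt{\frac{2}{n_h^t(s,a)}\sum_{k=1}^{t} \chi_h^k(s,a)\Var_{p_h}(\uV_{h+1}^{k-1})(s,a)\frac{\zeta}{n_h^t(s,a)}} - 17 H\frac{\zeta}{n_h^t(s,a)}.
\]

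For the momentum summand I would use that $V_{h,s,a}^{k-1}\geq \uV_{h+1}^{k-1}$ (Lemma~\ref{lem:determ_prop_values}), so the quantity $M := \sum_k \chi_h^k(s,a)\rgamma_h^k(s,a) p_h(V_{h,s,a}^{k-1}-\uV_{h+1}^{k-1})(s,a)$ is nonnegative. Inequality \eqref{eq:concentration_momentum} gives $|M^{\mathrm{emp}}-M| \leq M/(4H\log(T))+14H^3\log(T)\zeta$ where $M^{\mathrm{emp}}$ denotes the empirical counterpart using $p_h^k$. Dividing through by $H\log(T) n_h^t(s,a)$ and using $1-1/(4H\log(T))\geq 3/4 \geq 1/4$ yields
\[
\frac{M^{\mathrm{emp}}}{H\log(T) n_h^t(s,a)} \geq \frac{1}{4 H\log(T) n_h^t(s,a)}\sum_{k=1}^{t}\chi_h^k(s,a)\rgamma_h^k(s,a) p_h(V_{h,s,a}^{k-1}-\uV_{h+1}^{k-1})(s,a) - 14 H^2 \frac{\zeta}{n_h^t(s,a)}.
\]
Summing the two lower bounds and observing that, since $T\geq 3$ and $H\geq 1$, $17H+14H^2 \leq 33 H^3\log(T)$, the constant $53$ in the bonus suffices to leave at least $20 H^3\log(T)$ on the right-hand side, which matches the target statement. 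The main obstacle is the variance step: one must carefully combine two separate concentration inequalities with Jensen's inequality while keeping the multiplicative loss on $\sum_k \chi_h^k\Var_{p_h}(\uV_{h+1}^{k-1})$ strictly below $1$, so that the factor $2$ (rather than a smaller constant) appears inside the square root of the conclusion.
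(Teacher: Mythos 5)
Your proposal is correct and follows essentially the same route as the paper's proof: the same three-term split of the bonus, the same lower bound on $W_h^t$ via \eqref{eq:concentration_order_2}, the bound $|\bar V^2-\bar V_{\mathrm{true}}^2|\leq 2H|\bar V-\bar V_{\mathrm{true}}|$ on $\cE^{\mathrm{v}_1}$ plus Jensen, and the same treatment of the momentum term via \eqref{eq:concentration_momentum} using its nonnegativity. The only differences are in minor constant bookkeeping (e.g.\ $72$ vs.\ the paper's $88$ in the $H^2\zeta/n_h^t(s,a)$ loss), which does not affect the final constant $20$.
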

\begin{proof}
First, we recall the definition of the bonus
\[
\beta_h^t(s,a) = 2\sqrt{W_h^t(s,a)\frac{\zeta}{n_h^t(s,a)}}+53 H^3\frac{\zeta\log(T)}{n_h^t(s,a)}+ \frac{1}{H \log(T) n_h^t(s,a)}\sum_{k=1}^t  \chi_h^k(s,a)\rgamma_h^k(s,a) p_h^k(V_{h,s,a}^{k-1}-\uV_{h+1}^{k-1})(s,a)
\]
where $W_h^t$ is a proxy for the variance term
\[
W_h^t(s,a) = \frac{1}{n_h^t(s,a)} \sum_{k=1}^t \chi_h^k(s,a) p_h^k (\uV_{h+1}^{k-1})^2(s,a) -  \left(\frac{1}{n_h^t(s,a)}  \sum_{k=1}^t  \chi_h^k(s,a) p_h^k \uV_{h+1}^{k-1}(s,a) \right)^2\,.
\]

The approximation error in Lemma \ref{lem:concentration_Q} includes terms depending on the true transitions $p_h$, which are unknown to the algorithm. Hence, to design the bonuses, we will use the concentration inequalities that hold on the event $\cE$ to replace $p_h$ by $p_h^k$, which depends only on the observed data and can be used in the bonus.

\paragraph{Correction term} First note that thanks to Lemma~\ref{lem:concentration_without_variance} we can control the correction term
\begin{align}
  \left|\frac{1}{n_h^t(s,a)}\sum_{k=1}^t \chi_h^k(s,a) \rgamma_h^k(s,a) (p_h^k-p_h)(\uV_{h+1}^{k-1}-V_{h,s,a}^{k-1})(s,a)\right| &\leq 14 H^3\frac{\log(T)\zeta}{n_h^t(s,a)} \nonumber\\
  +\frac{1}{4H\log(T)n_h^t(s,a)}\sum_{k=1}^t  \chi_h^k(s,a)  \rgamma_h^k(s,a)& p_h(V_{h,s,a}^{k-1}-\uV_{h+1}^{k-1})(s,a)\label{eq:control_correction_bonus}\,.
\end{align}

\paragraph{Variance term $W_h^t(s,a)$} 
 Using Lemma~\ref{lem:concentration_without_variance} and the definition of $\cE$ (see Section~\ref{app:concentration_value}), we replace the sample "expectation" by the true expectation in the two sums of the proxy of the variance $W_h^t(s,a)$:
\begin{align}
   \label{eq:control_sum_square_W}
  \left|\frac{1}{n_h^t(s,a)}\sum_{k=1}^t \chi_h^k(s,a) (p_h^k-p_h)(\uV_{h+1}^{k-1})^2(s,a)\right| &\leq  \frac{1}{4}\frac{1}{n_h^t(s,a)}\sum_{k=1}^t  \chi_h^k(s,a) \Var_{p_h}(\uV_{h+1}^{k-1})(s,a)+44 H^2 \frac{\zeta}{n_h^t(s,a)}\,,
\end{align}
and
\begin{align}
  \Bigg| \Bigg(\frac{1}{n_h^t(s,a)}  \sum_{k=1}^t  \chi_h^k(s,a) &p_h^k \uV_{h+1}^{k-1}(s,a) \Bigg)^2 - \left(\frac{1}{n_h^t(s,a)}  \sum_{k=1}^t  \chi_h^k(s,a) p_h \uV_{h+1}^{k-1}(s,a) \right)^2 \Bigg|\nonumber\\
   &\leq  \frac{2H}{n_h^t(s,a)} \Bigg|  \sum_{k=1}^t  \chi_h^k(s,a) (p_h^k-p_h) \uV_{h+1}^{k-1}(s,a) \Bigg|   \nonumber\\
   &\leq H\sqrt{8 \frac{1}{n_h^t(s,a)} \sum_{k=1}^t  \chi_h^k(s,a) \Var_{p_h}(\uV_{h+1}^{k-1})(s,a)\frac{\zeta}{n_h^t(s,a)}}+12H^2 \frac{\zeta}{n_h^t(s,a)} \nonumber\\
   &\leq \frac{1}{4n_h^t(s,a)} \sum_{k=1}^t  \chi_h^k(s,a) \Var_{p_h}(\uV_{h+1}^{k-1})(s,a)+44H^2 \frac{\zeta}{n_h^t(s,a)} \label{eq:control_square_sum_W}\,,
\end{align}
where we also used the fact that $\sqrt{xy} \leq x+y$.

 Using \eqref{eq:control_sum_square_W}, \eqref{eq:control_square_sum_W} and Jensen's inequality, we lower-bound $W_h^t(s, a)$:
 \begin{align*}
 	W_h^t(s,a) 
 	& = \frac{1}{n_h^t(s,a)} \sum_{k=1}^t \chi_h^k(s,a) p_h^k (\uV_{h+1}^{k-1})^2(s,a) -  \left(\frac{1}{n_h^t(s,a)}  \sum_{k=1}^t  \chi_h^k(s,a) p_h^k \uV_{h+1}^{k-1}(s,a) \right)^2
 	\\
 	& 
 	\geq 
 	\frac{1}{n_h^t(s,a)} \sum_{k=1}^t \chi_h^k(s,a) p_h (\uV_{h+1}^{k-1})^2(s,a) -  \left(\frac{1}{n_h^t(s,a)}  \sum_{k=1}^t  \chi_h^k(s,a) p_h \uV_{h+1}^{k-1}(s,a) \right)^2
 	\\
 	& 
 	\quad 
 	- \frac{1}{2}\frac{1}{n_h^t(s,a)}\sum_{k=1}^t  \chi_h^k(s,a) \Var_{p_h}(\uV_{h+1}^{k-1})(s,a) - 88 H^2 \frac{\zeta}{n_h^t(s,a)}
 	\quad \text{by \eqref{eq:control_sum_square_W} and \eqref{eq:control_square_sum_W}}
 	\\
 	&
 		\geq 
 	\frac{1}{n_h^t(s,a)} \sum_{k=1}^t \chi_h^k(s,a) p_h (\uV_{h+1}^{k-1})^2(s,a) -  \frac{1}{n_h^t(s,a)}  \sum_{k=1}^t  \chi_h^k(s,a) (p_h \uV_{h+1}^{k-1}(s,a))^2 
 	\\
 	& 
 	\quad 
 	- \frac{1}{2}\frac{1}{n_h^t(s,a)}\sum_{k=1}^t  \chi_h^k(s,a) \Var_{p_h}(\uV_{h+1}^{k-1})(s,a) - 88 H^2 \frac{\zeta}{n_h^t(s,a)}
 	\quad \text{by Jensen's inequality}
 	\\
 	& \geq \frac{1}{2n_h^t(s,a)} \sum_{k=1}^t  \chi_h^k(s,a) \Var_{p_h}(\uV_{h+1}^{k-1})(s,a)-88H^2 \frac{\zeta}{n_h^t(s,a)}\,.
 \end{align*}

%\begin{align*}
%  W_h^t(s,a) &\geq \frac{1}{n_h^t(s,a)}\sum_{k=1}^t \chi_h^k(s,a) p_h(\uV_{h+1}^{k-1})^2(s,a) -  \left(\frac{1}{n_h^t(s,a)}  \sum_{k=1}^t  \chi_h^k(s,a) p_h \uV_{h+1}^{k-1}(s,a) \right)^2\\
%  &\geq \frac{1}{n_h^t(s,a)}\sum_{k=1}^t \chi_h^k(s,a) p_h(\uV_{h+1}^{k-1})^2(s,a) -  \frac{1}{n_h^t(s,a)}  \sum_{k=1}^t  \chi_h^k(s,a) \big(p_h \uV_{h+1}^{k-1}(s,a)\big)^2\\
%  &\quad-\frac{1}{2n_h^t(s,a)} \sum_{k=1}^t  \chi_h^k(s,a) \Var_{p_h}(\uV_{h+1}^{k-1})(s,a)-88H^2 \frac{\zeta}{n_h^t(s,a)}\\
%  &\geq \frac{1}{2n_h^t(s,a)} \sum_{k=1}^t  \chi_h^k(s,a) \Var_{p_h}(\uV_{h+1}^{k-1})(s,a)-88H^2 \frac{\zeta}{n_h^t(s,a)}\,.
%\end{align*}

Finally, combining the inequality above with \eqref{eq:control_correction_bonus} for the correction term allows us to conclude
\begin{align*}
  \beta_h^t(s,a) &\geq 2\sqrt{ \left(W_h^t(s,a) + 88H^2 \frac{\zeta}{n_h^t(s,a)}\right) \frac{\zeta}{n_h^t(s,a)}} + (53-2\sqrt{88}-14)H^3\frac{\zeta\log(T)}{n_h^t(s,a)}\\
  &\quad+ \frac{3}{4 H \log(T) n_h^t(s,a)}\sum_{k=1}^t  \chi_h^k(s,a)  \rgamma_h^k(s,a) p_h(V_{h,s,a}^{k-1}-\uV_{h+1}^{k-1})(s,a)\\
  &\geq \sqrt{\frac{2}{n_h^t(s,a)} \sum_{k=1}^t  \chi_h^k(s,a) \Var_{p_h}(\uV_{h+1}^{k-1})(s,a)\frac{\zeta}{n_h^t(s,a)}} + 20 H^3\frac{\zeta\log(T)}{n_h^t(s,a)}\\
  &\quad+ \frac{1}{4 H \log(T) n_h^t(s,a)}\sum_{k=1}^t  \chi_h^k(s,a)  \rgamma_h^k(s,a) p_h(V_{h,s,a}^{k-1}-\uV_{h+1}^{k-1})(s,a)\,,
\end{align*}
where we used the fact that $\sqrt{x+y} \leq \sqrt{x}+\sqrt{y}$.

\end{proof}
We are now ready to prove the optimism.
\lemoptimism*

\begin{proof}
We proceed by induction on $t$. For $t= 0$ the result is trivially true because of the initialization. Assume the result is true for all $k \leq t-1$. We will prove the results at episode $t$ by backward induction on $h$. For $h=H+1$ the result is trivially true because $\uV_{H+1}^t(s)=\Vstar_{H+1}(s)=0$. Assume the results are true at $h+1$. If $n_h^t(s,a)=0$ because $\uQ_h^t(s,a) = H$ in this case we have
$ \uQ_h^t(s,a) \geq \Qstar_h(s,a)$. If $n_h^t(s,a)>0$, since the event $\cE$ holds, Lemma~\ref{lem:concentration_Q} and Lemma~\ref{lem:bonus_lower_bound} yield
\begin{align*}
  \uQ_h^t(s,a) &= Q_h^t(s,a)+\beta_h^t(s,a)\\
  &\geq r_h(s,a) + p_h V_{h,s,a}^t(s,a)\\
  &\geq r_h(s,a) + p_h\!\left(\sum_{k=1}^{t} \teta_h^{t,k}(s,a) \uV_{h+1}^{k-1} \right)(s,a) \geq r_h(s,a) + p_h \Vstar_{h+1}(s,a) = \Qstar_h(s,a)
\end{align*}
where in the last inequality we used the induction assumption. To conclude it remains to note that
\[
\uV_h^t(s) = \clip\!\big(\max_a\uQ_h^t(s,a),0,H\big) \geq \max_{a\in\cA} \Qstar_h(s,a)  =  \Vstar_{h+1}(s).
\]
\end{proof}

In particular, on the event $\cE$ we have the ordering for all $(s,a,h,s')$ and $t\in[T]$:
\[
V_h^{\pi^t}(s') \leq \Vstar_h(s') \leq \uV_h^{t-1}(s) \leq  V_{h,s,a}^{t-1}(s)\,.
\]

 \newpage
 %!TEX root = ../UCBMQ.tex
\section{Proof of the regret bound}
\label{app:proof_regret_bound}
We introduce the maximum between the count and one to deal with the state-action never visited:
\[
\tn_h^t(s,a) = \max(n_h^t(s,a),1)\,.
\]
We provide a refined version of Lemma~\ref{lem:concentration_Q} where we introduce the variance of the value function of the current policy rather than the variance of the upper-bound in order to apply subsequently the law of total variance (Lemma~\ref{lem:law_of_total_variance}).
\begin{lemma}
  \label{lem:concentration_Q_regret}
  On the event $\cE$, $\forall t \in \N, \forall h \in [H], \forall (s,a)\in\cS\times\cA$, it holds
  \begin{align*}
    \left|Q_h^t(s,a)- r_h(s,a) - p_h V_{h,s,a}^t(s,a)\right| &\leq  \sqrt{\frac{4}{\tn_h^t(s,a)}\sum_{k=1}^t  \chi_h^k(s,a) \Var_{p_h}(V_{h+1}^{\pi^k})(s,a) \frac{\zeta}{\tn_h^t(s,a)}}\\
    &\quad+ \frac{2}{H\log(T)\tn_h^t(s,a)}\sum_{k=1}^t  \chi_h^k(s,a) p_h(\uV_{h+1}^{k-1}-V_{h+1}^{\pi^k})(s,a)\\
    &\quad+ 24 H^3\frac{\log(T)\zeta}{\tn_h^t(s,a)}\,.
  \end{align*}
\end{lemma}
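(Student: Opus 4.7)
My plan is to start from Lemma~\ref{lem:concentration_Q} and massage its two "error" contributions so that (i) the variance of $\uV_{h+1}^{k-1}$ is replaced by the variance of $V_{h+1}^{\pi^k}$ and (ii) the residual momentum-correction term involving $V_{h,s,a}^{k-1}-\uV_{h+1}^{k-1}$ is rewritten in terms of $\uV_{h+1}^{k-1}-V_{h+1}^{\pi^k}$. Both substitutions will cost only harmless $\mathcal{O}(H^3\log(T)\zeta/\tn_h^t)$ additive constants, which the factor $24$ in the statement absorbs, and the passage from $n_h^t$ to $\tn_h^t$ is a cosmetic change: the bound $n_h^t(s,a)=0$ reduces to the trivial case $Q_h^t(s,a)=0$, $V_{h,s,a}^t\equiv H$, where the left-hand side is at most $H$ and the term $24H^3\log(T)\zeta$ alone dominates.

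For the variance replacement I will use the elementary bound $\Var(X)\le 2\Var(Y)+2\E[(X-Y)^2]$ with $X=\uV_{h+1}^{k-1}$, $Y=V_{h+1}^{\pi^k}$. Thanks to the optimism from Lemma~\ref{lem:optimism} and the uniform bounds of Lemma~\ref{lem:determ_prop_values} we have $0\le \uV_{h+1}^{k-1}-V_{h+1}^{\pi^k}\le H$, so the squared difference linearizes as $(\uV_{h+1}^{k-1}-V_{h+1}^{\pi^k})^2\le H(\uV_{h+1}^{k-1}-V_{h+1}^{\pi^k})$, giving
\[
\Var_{p_h}(\uV_{h+1}^{k-1})(s,a)\le 2\Var_{p_h}(V_{h+1}^{\pi^k})(s,a)+2H\,p_h(\uV_{h+1}^{k-1}-V_{h+1}^{\pi^k})(s,a).
\]
Plugging this into the square root from Lemma~\ref{lem:concentration_Q} and splitting $\sqrt{a+b}\le\sqrt{a}+\sqrt{b}$ isolates the target $\sqrt{(4/\tn_h^t)\sum_k\chi_h^k\Var_{p_h}(V_{h+1}^{\pi^k})\zeta/\tn_h^t}$ on one side and leaves a cross term of the form $\sqrt{(4H\zeta/\tn_h^t)\cdot(1/\tn_h^t)\sum_k\chi_h^k p_h(\uV_{h+1}^{k-1}-V_{h+1}^{\pi^k})}$ on the other. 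An AM-GM step $\sqrt{ab}\le a/(2\epsilon)+\epsilon b/2$ with $\epsilon=4/(H\log T)$ converts this cross term exactly into $\tfrac{1}{H\log(T)\tn_h^t}\sum_k\chi_h^k p_h(\uV_{h+1}^{k-1}-V_{h+1}^{\pi^k})$ plus an additive $\mathcal{O}(H^2\log(T)\zeta/\tn_h^t)$.

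The remaining momentum-correction piece $\tfrac{1}{4H\log(T)\tn_h^t}\sum_k\chi_h^k\rgamma_h^k\,p_h(V_{h,s,a}^{k-1}-\uV_{h+1}^{k-1})$ inherited from Lemma~\ref{lem:concentration_Q} must also be folded into $\tfrac{1}{H\log(T)\tn_h^t}\sum_k\chi_h^k p_h(\uV_{h+1}^{k-1}-V_{h+1}^{\pi^k})$. I expand $V_{h,s,a}^{k-1}$ through \eqref{eq:V_with_weights} — using that the weights sum to one as soon as the pair has been visited once — to get
\[
V_{h,s,a}^{k-1}-\uV_{h+1}^{k-1}=\sum_{l=1}^{k-1}\teta_h^{k-1,l}(s,a)\bigl(\uV_{h+1}^{l-1}-\uV_{h+1}^{k-1}\bigr),
\]
each summand being non-negative by the monotonicity of $\uV_h^\cdot$ (Lemma~\ref{lem:determ_prop_values}). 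Since $\rgamma_h^k\le H$ and $\uV_{h+1}^{k-1}\ge V_{h+1}^{\pi^l}$ for every~$l$ (optimism again), swapping the $k$ and $l$ sums and applying the weight-cumulation identity of the Lemma~\ref{lem:properties_weights} flavor — which bounds $\sum_{k>l}\chi_h^k\rgamma_h^k\teta_h^{k-1,l}(s,a)$ by a numerical constant times $\chi_h^l(s,a)$ — converts the double sum into $\mathcal{O}(1)\cdot\sum_l\chi_h^l p_h(\uV_{h+1}^{l-1}-V_{h+1}^{\pi^l})(s,a)$, which is exactly the form required.

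The main obstacle I foresee is the last step: carefully carrying out the double-sum rearrangement so that the aggregated weight $\sum_{k>l}\chi_h^k\rgamma_h^k\teta_h^{k-1,l}$ remains $\mathcal{O}(1)$, and so that the surviving index on the value function of the policy matches the visit index (yielding $V_{h+1}^{\pi^l}$ rather than $V_{h+1}^{\pi^k}$). Up to that bookkeeping — which is exactly what the Lemma~\ref{lem:properties_weights}-type identity automates — every other step reduces to elementary inequalities (variance decomposition, $\sqrt{a+b}\le\sqrt{a}+\sqrt{b}$, and AM-GM) applied directly to Lemma~\ref{lem:concentration_Q}.
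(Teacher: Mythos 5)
Your handling of the variance term is exactly the paper's: the inequality $\Var_{p_h}(\uV_{h+1}^{k-1})\leq 2\Var_{p_h}(V_{h+1}^{\pi^k})+2H\,p_h(\uV_{h+1}^{k-1}-V_{h+1}^{\pi^k})$ is Lemma~\ref{lem:switch_variance}, and the subsequent $\sqrt{x+y}\le\sqrt x+\sqrt y$ and AM--GM steps reproduce \eqref{eq:ub_var_lem_regret}. The gap is in the correction term, precisely at the step you flagged as the main obstacle. The aggregated weight $\sum_{k>l}\chi_h^k(s,a)\rgamma_h^k(s,a)\teta_h^{k-1,l}(s,a)$ is \emph{not} bounded by a numerical constant times $\chi_h^l(s,a)$; it is of order $H\,\chi_h^l(s,a)$. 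In visit-count terms, for a pair already visited $m\gg H$ times one has $\rgamma_n\approx H$ for all $n>m$ while $\teta^{n-1,m}\approx\frac{H}{m}(m/n)^{H+1}$, so $\sum_{n>m}\rgamma_n\teta^{n-1,m}\approx H\cdot\frac{H}{m}\cdot\frac{m}{H}=H$; Lemma~\ref{lem:properties_weights} only controls $\sum_{k\ge l}\chi_h^{k+1}\teta_h^{k,l}\le(1+1/H)\chi_h^l$ \emph{without} the extra factor $\rgamma_h^k\le H$. Your route therefore yields a coefficient of order $1/\log(T)$, not the required $2/(H\log(T))$, in front of $\sum_k\chi_h^k p_h(\uV_{h+1}^{k-1}-V_{h+1}^{\pi^k})$. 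That lost factor of $H$ cannot be absorbed into the additive $24H^3\log(T)\zeta/\tn_h^t$ term: downstream, in Step~2 of the regret proof, this coefficient gets multiplied by $8\log(T)$ via Lemma~\ref{lem:sum_1_over_n_history} and must remain $\cO(1/H)$ so that the step-$h$ recursion has multiplier $1+\cO(1/H)$ and $(1+\cO(1/H))^H=\cO(1)$; with a coefficient of order $1/\log(T)$ the multiplier becomes $1+\cO(1)$ and the bound blows up exponentially in $H$.

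The fix --- and what the paper actually does --- is to not bound $\rgamma_h^k$ and $\teta_h^{k-1,l}$ separately, but to use the exact algebraic identity contained in \eqref{eq:V_expand}: averaging the update over the visits gives
\[
\frac{1}{n_h^t(s,a)}\sum_{k=1}^t\chi_h^k(s,a)\rgamma_h^k(s,a)\,p_h(V_{h,s,a}^{k-1}-\uV_{h+1}^{k-1})(s,a)
=\frac{1}{n_h^t(s,a)}\sum_{k=1}^t\chi_h^k(s,a)\,p_h\uV_{h+1}^{k-1}(s,a)-p_hV_{h,s,a}^{t}(s,a),
\]
so the momentum weights telescope away exactly, with no loss. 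Since $V_{h,s,a}^t\ge\uV_{h+1}^t\ge\Vstar_{h+1}\ge V_{h+1}^{\pi^k}$ (Lemma~\ref{lem:determ_prop_values} plus optimism), the right-hand side is at most $\frac{1}{n_h^t(s,a)}\sum_k\chi_h^k(s,a)\,p_h(\uV_{h+1}^{k-1}-V_{h+1}^{\pi^k})(s,a)$, which gives the correction term with coefficient $\frac{1}{4H\log(T)}$ as needed. The rest of your proposal (the $n_h^t=0$ case --- where the left-hand side is at most $H+1$, not $H$, which is immaterial --- and the variance substitution) is sound.
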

\begin{proof}
If $n_h^t(s,a) = 0$ the bound is trivially true because in this case $\left|Q_h^t(s,a)- r_h(s,a) - p_h V_{h,s,a}^t(s,a)\right|=\left| r_h(s,a) + p_h V_{h,s,a}^t(s,a)\right|\leq (H+1)$. Now assume that $n_h^t(s,a)>0$.
Proceeding as in the proof of Lemma~\ref{lem:concentration_Q} we have on the event $\cE$
\begin{align*}
  \left|Q_h^t(s,a)- r_h(s,a) - p_h V_{h,s,a}^t(s,a)\right| &\leq \sqrt{\frac{2}{n_h^t(s,a)}\sum_{k=1}^t  \chi_h^k(s,a) \Var_{p_h}(\uV_{h+1}^{k-1})(s,a) \frac{\zeta}{n_h^t(s,a)}}+ 20 H^3 \frac{\zeta\log(T)}{n_h^t(s,a)}\\
  &+\frac{1}{4\log(T) H n_h^t(s,a)}\sum_{k=1}^t  \chi_h^k(s,a) \rgamma_h^k(s,a) p_h(V_{h,s,a}^{k-1}-\uV_{h+1}^{k-1})(s,a)\,.
\end{align*}

\paragraph{Correction term} Using~\eqref{eq:V_expand} and $V_{h,s,a}^{t}\geq \Vstar_{h+1} \geq V_{h+1}^{\pi^k}$, we upper-bound the correction term
\begin{align}
  \frac{1}{n_h^t(s,a)}\sum_{k=1}^t  \chi_h^k(s,a)  \rgamma_h^k(s,a) p_h(V_{h,s,a}^{k-1}-\uV_{h+1}^{k-1})(s,a) &=   \frac{1}{n_h^t(s,a)}\sum_{k=1}^t  \chi_h^k(s,a) p_h\uV_{h+1}^{k-1}(s,a) - p_h V_{h,s,a}^t(s,a)\nonumber\\
  &\leq   \frac{1}{n_h^t(s,a)}\sum_{k=1}^t  \chi_h^k(s,a) p_h(\uV_{h+1}^{k-1}-V_{h+1}^{\pi^k})(s,a)\label{eq:ub_correct_lem_regret}\,.
\end{align}

\paragraph{Variance term} For the variance term, using $H \geq \uV_{h}^{k} \geq \Vstar_{h} \geq V_{h}^{\pi^{k+1}}$ and Lemma~\ref{lem:switch_variance}, we can replace the variance of the current upper bounds on the optimal value function by the variance of the current policy,
\begin{align*}
  \frac{1}{n_h^t(s,a)}\sum_{k=1}^t  \chi_h^k(s,a) \Var_{p_h}(\uV_{h+1}^{k-1})(s,a)&\leq \frac{2}{n_h^t(s,a)}\sum_{k=1}^t  \chi_h^k(s,a) \Var_{p_h}(V_{h+1}^{\pi^k})(s,a)\\
  &\quad+ \frac{2H}{n_h^t(s,a)}\sum_{k=1}^t  \chi_h^k(s,a) p_h(\uV_{h+1}^{k-1}-V_{h+1}^{\pi^k})(s,a)\,.
\end{align*}
Using $\sqrt{x+y} \leq \sqrt{x}+\sqrt{y}$ and $\sqrt{xy}\leq x+y$ allows to upper-bound the variance term
\begin{align}
  \sqrt{\frac{2}{n_h^t(s,a)}\sum_{k=1}^t  \chi_h^k(s,a) \Var_{p_h}( \uV_{h+1}^{k-1})(s,a) \frac{\zeta}{n_h^t(s,a)}} &\leq   \sqrt{\frac{4}{n_h^t(s,a)}\sum_{k=1}^t  \chi_h^k(s,a) \Var_{p_h}(V_{h+1}^{\pi^k})(s,a) \frac{\zeta}{n_h^t(s,a)}}\nonumber\\
  &\quad+ \frac{1}{H\log(T)n_h^t(s,a)}\sum_{k=1}^t  \chi_h^k(s,a) p_h(\uV_{h+1}^{k-1}-V_{h+1}^{\pi^k})(s,a) \nonumber\\
  &\quad+ 4 H^2\frac{\log(T)\zeta}{n_h^t(s,a)}\label{eq:ub_var_lem_regret}\,.
\end{align}
Combining \eqref{eq:ub_var_lem_regret} and \eqref{eq:ub_correct_lem_regret} allows us to conclude.
\end{proof}

We now provide an upper bound on the bonus.
\begin{lemma}
\label{lem:bonus_upper_bound}
  On the event $\cE$, $\forall t \in \N, \forall h \in [H], \forall (s,a)\in\cS\times\cA$, it holds
\begin{align*}
  \beta_h^t(s,a)&\leq   2\sqrt{\frac{3}{\tn_h^t(s,a)} \sum_{k=1}^t  \chi_h^k(s,a) \Var_{p_h}(V_{h+1}^{\pi^k})(s,a)\frac{\zeta}{\tn_h^t(s,a)}} \\
  &\quad+\frac{3}{H\log(T) \tn_h^t(s,a)}  \sum_{k=1}^t  \chi_h^k(s,a) p_h(\uV_{h+1}^{k-1} - V_{h+1}^{\pi^k})(s,a) +  106 H^3 \frac{\log(T)\zeta}{\tn_h^t(s,a)}\,.
\end{align*}
\end{lemma}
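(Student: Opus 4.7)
\textbf{Proof proposal for Lemma \ref{lem:bonus_upper_bound}.} The plan mirrors the structure of Lemmas \ref{lem:concentration_Q} and \ref{lem:bonus_lower_bound}, but now we need \emph{upper} bounds on each of the three components of $\beta_h^t(s,a)$: the variance proxy $W_h^t(s,a)$, the large additive constant $53 H^3 \zeta \log(T)/n_h^t(s,a)$, and the correction term $\tfrac{1}{H\log(T) n_h^t(s,a)}\sum_k \chi_h^k \rgamma_h^k p_h^k(V_{h,s,a}^{k-1}-\uV_{h+1}^{k-1})(s,a)$. First I would dispose of the trivial case $n_h^t(s,a)=0$: here $\beta_h^t(s,a)=H\le 106 H^3 \zeta \log(T)$ and the claim holds. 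So from now on assume $n_h^t(s,a)>0$ and write $\tn_h^t=n_h^t$.

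The key step is to upper bound the variance proxy. Using exactly the estimates \eqref{eq:control_sum_square_W} and \eqref{eq:control_square_sum_W} established on the event $\cE$ (which are two-sided inequalities, so both directions are available), I would replace the sample means $p_h^k$ in both terms of $W_h^t(s,a)$ by true means $p_h$, at the cost of $\tfrac{1}{2n_h^t}\sum_k \chi_h^k \Var_{p_h}(\uV_{h+1}^{k-1})(s,a) + 88H^2\zeta/n_h^t$. The dominant part $\tfrac{1}{n_h^t}\sum_k\chi_h^k p_h(\uV_{h+1}^{k-1})^2 - (\tfrac{1}{n_h^t}\sum_k \chi_h^k p_h \uV_{h+1}^{k-1})^2$ is then upper bounded by $\tfrac{1}{n_h^t}\sum_k \chi_h^k \Var_{p_h}(\uV_{h+1}^{k-1})(s,a)$ (applying Jensen's inequality to the squared mean in the reverse direction from Lemma~\ref{lem:bonus_lower_bound}). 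Combining,
\[
W_h^t(s,a) \le \frac{3}{2 n_h^t(s,a)}\sum_{k=1}^t \chi_h^k(s,a)\Var_{p_h}(\uV_{h+1}^{k-1})(s,a) + 88H^2\frac{\zeta}{n_h^t(s,a)}.
\]
Then, exactly as in the proof of Lemma~\ref{lem:concentration_Q_regret} (via Lemma~\ref{lem:switch_variance}), I would switch from $\Var_{p_h}(\uV_{h+1}^{k-1})$ to $\Var_{p_h}(V_{h+1}^{\pi^k})$ plus $H\cdot p_h(\uV_{h+1}^{k-1}-V_{h+1}^{\pi^k})$. Multiplying by $\zeta/n_h^t(s,a)$, taking the square root, and applying $\sqrt{x+y}\le\sqrt{x}+\sqrt{y}$ and $\sqrt{xy}\le x+y$ produces the announced $2\sqrt{3/\tn_h^t \cdot \sum_k \chi_h^k\Var_{p_h}(V_{h+1}^{\pi^k})\cdot \zeta/\tn_h^t}$ plus a fraction of the cross-term and $O(H^2\log(T)\zeta/\tn_h^t)$.

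For the correction term in $\beta_h^t(s,a)$, I would use the concentration bound \eqref{eq:concentration_momentum} of Lemma~\ref{lem:concentration_without_variance} to replace $p_h^k$ by $p_h$, incurring an additive $\tfrac{1}{4H\log(T)n_h^t}\sum_k \chi_h^k \rgamma_h^k p_h(V_{h,s,a}^{k-1}-\uV_{h+1}^{k-1})$ and a $14H^3\log(T)\zeta/n_h^t$ term. After that, I apply the same telescoping identity \eqref{eq:ub_correct_lem_regret} used in Lemma~\ref{lem:concentration_Q_regret}: expanding $\rgamma_h^k/\tn_h^t$ and using $V_{h,s,a}^t - \Vstar_{h+1}\le V_{h,s,a}^t - V_{h+1}^{\pi^k}$ (coming from optimism, Lemma~\ref{lem:optimism}), the weighted sum is bounded by $\tfrac{1}{n_h^t}\sum_k\chi_h^k p_h(\uV_{h+1}^{k-1}-V_{h+1}^{\pi^k})(s,a)$. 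The prefactor works out to at most $2/(H\log(T))$ for this reformulated correction.

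Finally I would sum the three contributions---the variance part, the reformulated correction (coefficient $\le 3/(H\log(T)\tn_h^t)$ after combining the fraction from the square root bound and the reformulated correction itself), and the leftover constants $53 + 2\sqrt{88} + 14 + 4 \approx 90$ scaled by $H^3 \log(T)\zeta/\tn_h^t$, rounding up to $106$ to absorb lower-order pieces. The only delicate part is the constant bookkeeping; the direction of each inequality is dictated by the one-sided bounds we need, and both sides of \eqref{eq:control_sum_square_W}, \eqref{eq:control_square_sum_W}, and \eqref{eq:concentration_momentum} are available on $\cE$, so no additional concentration argument is needed.
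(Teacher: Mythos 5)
Your overall plan coincides with the paper's: handle $n_h^t(s,a)=0$ trivially, upper-bound the variance proxy $W_h^t(s,a)$ by transferring $p_h^k\to p_h$ via \eqref{eq:control_sum_square_W}--\eqref{eq:control_square_sum_W}, switch to $\Var_{p_h}(V_{h+1}^{\pi^k})$ with Lemma~\ref{lem:switch_variance}, and treat the correction term with \eqref{eq:concentration_momentum} followed by the telescoping identity \eqref{eq:ub_correct_lem_regret}. The correction-term and combination steps are fine. However, there is a genuine gap in your bound on $W_h^t(s,a)$: you claim that
\[
\frac{1}{n_h^t(s,a)}\sum_{k=1}^t\chi_h^k(s,a)\, p_h(\uV_{h+1}^{k-1})^2(s,a)-\Big(\frac{1}{n_h^t(s,a)}\sum_{k=1}^t\chi_h^k(s,a)\, p_h\uV_{h+1}^{k-1}(s,a)\Big)^2
\;\leq\;
\frac{1}{n_h^t(s,a)}\sum_{k=1}^t\chi_h^k(s,a)\,\Var_{p_h}(\uV_{h+1}^{k-1})(s,a)
\]
by ``Jensen in the reverse direction.'' Jensen only goes one way here: writing $m_k=p_h\uV_{h+1}^{k-1}(s,a)$, the left-hand side equals the average of the individual variances \emph{plus} the nonnegative between-group term $\frac{1}{n}\sum_k\chi_h^k m_k^2-(\frac{1}{n}\sum_k\chi_h^k m_k)^2$, so the displayed inequality holds with $\geq$, not $\leq$ (this is precisely why Jensen suffices for the \emph{lower} bound in Lemma~\ref{lem:bonus_lower_bound}). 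The between-group term is not negligible: the means $m_k$ can spread over an interval of length up to $H$, so it can be of order $H^2$, which is not absorbed by the $88H^2\zeta/n_h^t(s,a)$ slack.

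The missing ingredient --- which is exactly what the paper's proof supplies --- is to compare everything to the fixed function $\Vstar_{h+1}$ using optimism: since $\frac{1}{n}\sum_k\chi_h^k m_k\geq p_h\Vstar_{h+1}(s,a)\geq 0$, the between-group term is at most $\frac{1}{n}\sum_k\chi_h^k\bigl(m_k^2-(p_h\Vstar_{h+1}(s,a))^2\bigr)\leq\frac{2H}{n}\sum_k\chi_h^k\, p_h(\uV_{h+1}^{k-1}-\Vstar_{h+1})(s,a)$, which then folds into the cross term $p_h(\uV_{h+1}^{k-1}-V_{h+1}^{\pi^k})(s,a)$ appearing in the statement (and accounts for why the paper's intermediate bound on $W_h^t$ carries a $\frac{5H}{n}\sum_k\chi_h^k p_h(\uV_{h+1}^{k-1}-V_{h+1}^{\pi^k})$ term, not only the contribution from Lemma~\ref{lem:switch_variance}). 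With that repair your argument goes through; as written, the intermediate bound $W_h^t\leq\frac{3}{2n}\sum_k\chi_h^k\Var_{p_h}(\uV_{h+1}^{k-1})+88H^2\zeta/n$ is false in general.
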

\begin{proof}
  If $n_h^t(s,a)=0$ the result is trivially true because in this case $\beta_h^t(s,a) =H$. We now assume $n_h^t(s,a)>0$. For the upper-bound we first upper-bound the proxy of the variance. Using \eqref{eq:control_sum_square_W} and \eqref{eq:control_square_sum_W} from the proof of Lemma~\ref{lem:bonus_lower_bound} and the fact that $H\geq \uV_{h}^k \geq \Vstar_h$ in combination with Lemma~\ref{lem:optimism} (optimism) we obtain
  \begin{align*}
    W_h^t(s,a) 
    &
    \leq \frac{1}{n_h^t(s,a)}\sum_{k=1}^t \chi_h^k(s,a) p_h(\uV_{h+1}^{k-1})^2(s,a) -  \left(\frac{1}{n_h^t(s,a)}  \sum_{k=1}^t  \chi_h^k(s,a) p_h \uV_{h+1}^{k-1}(s,a) \right)^2
    \\
    &
    \quad+\frac{1}{2n_h^t(s,a)} \sum_{k=1}^t  \chi_h^k(s,a) \Var_{p_h}(\uV_{h+1}^{k-1})(s,a)+88H^2 \frac{\zeta}{n_h^t(s,a)}
    \\
    &
    = \Var_{p_h}(\Vstar_{h+1})(s,a)\\
    &\quad +\frac{1}{n_h^t(s,a)}\sum_{k=1}^t \chi_h^k(s,a) p_h\big((\uV_{h+1}^{k-1})^2 - (\Vstar_{h+1})^2 \big) (s,a)
    + (p_h \Vstar_{h+1})^2-  \left(\frac{1}{n_h^t(s,a)} \sum_{k=1}^t  \chi_h^k(s,a) p_h \uV_{h+1}^{k-1}(s,a) \right)^2\\
    &\quad+\frac{1}{2n_h^t(s,a)} \sum_{k=1}^t  \chi_h^k(s,a) \Var_{p_h}(\uV_{h+1}^{k-1})(s,a)+88H^2 \frac{\zeta}{n_h^t(s,a)}\\
    &\leq \Var_{p_h}(\Vstar_{h+1})(s,a) + \frac{1}{2n_h^t(s,a)} \sum_{k=1}^t  \chi_h^k(s,a) \Var_{p_h}(\uV_{h+1}^{k-1})(s,a)\\
    &\quad +\frac{2H}{n_h^t(s,a)}  \sum_{k=1}^t  \chi_h^k(s,a) p_h(\uV_{h+1}^{k-1} - \Vstar_{h+1})(s,a) +88H^2 \frac{\zeta}{n_h^t(s,a)}\,.
  \end{align*}
  We now introduce the value of the current policy and proceed similarly as above using $H \geq \uV_{h}^{k} \geq \Vstar_{h} \geq V_{h}^{\pi^{k+1}}$. Also, we apply Lemma~\ref{lem:switch_variance} to the terms $\Var_{p_h}(\Vstar_{h+1})(s,a)$ and $\Var_{p_h}(\uV_{h+1}^{k-1})(s,a)$ to make appear $ \Var_{p_h}(V_{h+1}^{\pi^k})$:
  \begin{align*}
      W_h^t(s,a) &\leq  \frac{3}{n_h^t(s,a)} \sum_{k=1}^t  \chi_h^k(s,a) \Var_{p_h}(V_{h+1}^{\pi^k})(s,a)\\
      &\quad +\frac{2H}{n_h^t(s,a)}  \sum_{k=1}^t  \chi_h^k(s,a) p_h(\Vstar_{h+1} - V_{h+1}^{\pi^k})(s,a) + \frac{H}{n_h^t(s,a)}  \sum_{k=1}^t  \chi_h^k(s,a) p_h(\uV_{h+1}^{k-1}-V_{h+1}^{\pi^k})(s,a) \\
      &\quad +\frac{2H}{n_h^t(s,a)}  \sum_{k=1}^t  \chi_h^k(s,a) p_h(\uV_{h+1}^{k-1} - \Vstar_{h+1})(s,a) +88H^2 \frac{\zeta}{n_h^t(s,a)}\\
      &\leq  \frac{3}{n_h^t(s,a)} \sum_{k=1}^t  \chi_h^k(s,a) \Var_{p_h}(V_{h+1}^{\pi^k})(s,a)\\
      &\quad +\frac{5H}{n_h^t(s,a)}  \sum_{k=1}^t  \chi_h^k(s,a) p_h(\uV_{h+1}^{k-1} - V_{h+1}^{\pi^k})(s,a) +88H^2 \frac{\zeta}{n_h^t(s,a)}
  \end{align*}

Combining this inequality with $\sqrt{x+y}\leq \sqrt{x}+\sqrt{y}$ and $\sqrt{xy}\leq x+y$ we upper-bound the variance term of the bonus
\begin{align}
 2\sqrt{W_h^t(s,a)\frac{\zeta}{n_h^t(s,a)}} &\leq  2\sqrt{\frac{3}{n_h^t(s,a)} \sum_{k=1}^t  \chi_h^k(s,a) \Var_{p_h}(V_{h+1}^{\pi^k})(s,a)\frac{\zeta}{n_h^t(s,a)}}\nonumber\\
 &\quad+ \frac{1}{H\log(T) n_h^t(s,a)}  \sum_{k=1}^t  \chi_h^k(s,a) p_h(\uV_{h+1}^{k-1} - V_{h+1}^{\pi^k})(s,a) + 39 H^2 \frac{\log(T)\zeta}{n_h^t(s,a)} \label{eq:ub_variance_bonus}\,.
\end{align}
We can proceed similarly for the correction term, using Lemma~\ref{lem:concentration_without_variance}
\begin{align*}
  \frac{1}{  H\log(T)  n_h^t(s,a)}\sum_{k=1}^t  \chi_h^k(s,a) \rgamma_h^k(s,a) p_h^k(V_{h,s,a}^{k-1}-\uV_{h+1}^{k-1})(s,a) &\leq 14 H^3 \frac{\log(T)\zeta}{n_h^t(s,a)}\nonumber\\
  \quad + \frac{5}{4H\log(T)n_h^t(s,a)}\sum_{k=1}^t  \chi_h^k(s,a)&  \rgamma_h^k(s,a) p_h(V_{h,s,a}^{k-1}-\uV_{h+1}^{k-1})(s,a) \\
  &\leq 14 H^3 \frac{\log(T)\zeta}{n_h^t(s,a)}\nonumber\\
  \quad + \frac{5}{4H\log(T)n_h^t(s,a)}\sum_{k=1}^t  \chi_h^k(s,a)&  p_h(\uV_{h+1}^{k-1}-V_{h+1}^{\pi^k})(s,a) \,,
\end{align*}
where in the last inequality we use, thanks to~\eqref{eq:V_expand} and $V_{h+1}^{\pi^k}(s') \leq V_{h,s,a}^t(s')$,
\begin{align*}
  \frac{1}{n_h^t(s,a)}\sum_{k=1}^t  \chi_h^k(s,a)  \rgamma_h^k(s,a) (V_{h,s,a}^{k-1}-\uV_{h+1}^{k-1})(s') &=   \frac{1}{n_h^t(s,a)}\sum_{k=1}^t  \chi_h^k(s,a) \uV_{h+1}^{k-1}(s') - V_{h,s,a}^t(s')\\
  &\leq   \frac{1}{n_h^t(s,a)}\sum_{k=1}^t  \chi_h^k(s,a) (\uV_{h+1}^{k-1}-V_{h+1}^{\pi^k})(s')\,.
\end{align*}
Combining these two bounds allows us to conclude
\end{proof}
We are now ready to prove the main result.
\begin{proof}[Proof of Theorem~\ref{th:regret_UCBMQ}]
We will prove that the regret bound holds on event $\cD$ (Section~\ref{app:concentration_master}). Not his events holds with probability at least $1-\delta$, according to Lemma~\ref{lem:master_event}. Thus from now we assume that the event $\cD$ holds. Fix $(s,a,h) \in\cS\times\cA\times[H]$ and $t\geq 0$.

\paragraph{Step 1: Upper-bound $(\uQ_h^{t}-Q_h^{\pi^{t+1}})(s,a)$}

We will upper-bound the difference of the previous upper bound on the optimal Q-value function and the Q-value of the current policy. Thanks to Lemma~\ref{lem:concentration_Q_regret} and Lemma~\ref{lem:bonus_upper_bound}, we obtain
\begin{align}
(\uQ_h^{t}-Q_h^{\pi^{t+1}})(s,a) &\leq p_h(V_{h,s,a}^t-V_{h+1}^{\pi^{t+1}})(s,a)+ 130 H^3 \frac{\log(T)\zeta}{\tn_h^t(s,a)} \nonumber\\
&\quad+ \sqrt{\frac{30}{\tn_h^t(s,a)} \sum_{k=1}^t  \chi_h^k(s,a) \Var_{p_h}(V_{h+1}^{\pi^k})(s,a)\frac{\zeta}{\tn_h^t(s,a)}} \nonumber\\
&\quad +\frac{5}{H\log(T) \tn_h^t(s,a)}  \sum_{k=1}^t  \chi_h^k(s,a) p_h(\uV_{h+1}^{k-1} - V_{h+1}^{\pi^k})(s,a)\,.\label{eq:ub_dif_Q_regret}
\end{align}

\paragraph{Step 2: Upper-bound of the local optimistic regret $\tR_h^T(s,a)$}
We will now thanks to this inequality upper-bound the local optimistic regret $\tR_h^T(s,a)$  at state-action $s,a$ and step $h$ defined by
\[ \tR_h^T(s,a) \triangleq \sum_{t=0}^{T-1} \chi_h^{t+1}(s,a) (\uQ_h^{t}-Q_h^{\pi^{t+1}})(s,a)\,.
\]
We will upper-bound the sum over $t$ weighted by the indicator function that the state-action $(s,a)$ is visited at time $t+1$ of each term in the above inequality \eqref{eq:ub_dif_Q_regret}. For the first term, we introduce the optimal value function
\[
 p_h(V_{h,s,a}^t-V_{h+1}^{\pi^{t+1}})(s,a) = p_h(V_{h,s,a}^t-\Vstar_{h+1})(s,a)+p_h(\Vstar_{h+1}-V_{h+1}^{\pi^{t+1}})(s,a)\,.
\]
Then using~\eqref{eq:V_with_weights} and Lemma~\ref{lem:properties_weights} yields
\begin{align*}
  \sum_{t=0}^{T-1}  \chi_h^{t+1}(s,a) p_h(V_{h,s,a}^t-\Vstar_{h+1})(s,a) &= \sum_{t=0}^{T-1}  \chi_h^{t+1}(s,a) \ind_{\{n_h^t(s,a)=0\}} p_h(V_{h,s,a}^t-\Vstar_{h+1})(s,a)\\
  &+ \sum_{t=0}^{T-1}  \chi_h^{t+1}(s,a) \ind_{\{n_h^t(s,a)>0\}}\sum_{k=1}^t \teta_h^{t,k}(s,a) p_h(\uV_{h+1}^{k-1}-\Vstar_{h+1})(s,a)\\
  &\leq H + \sum_{k=1}^{T-1}  \left(\sum_{t=k}^{T-1}  \chi_h^{t+1}(s,a) \teta_h^{t,k}(s,a) \right) p_h(\uV_{h+1}^{k-1}-\Vstar_{h+1})(s,a)\\
  &\leq H + \left(1+\frac{1}{H}\right)\sum_{t=0}^{T-1}\chi_h^{t+1}(s,a) p_h(\uV_{h+1}^{t-1}-\Vstar_{h+1})(s,a)\,.
\end{align*}
Combining this inequality with the previous decomposition and that $\Vstar_{h+1}\geq V_{h+1}^{\pi^{k+1}}$, one obtains
\begin{align}
  \sum_{t=0}^{T-1}  \chi_h^{t+1}(s,a) p_h(V_{h,s,a}^t-V_{h+1}^{\pi^{t+1}})(s,a) &\leq \sum_{t=0}^{T-1}  \chi_h^{t+1}(s,a) p_h(\Vstar_{h+1}-V_{h+1}^{\pi^{t+1}})(s,a) +H \nonumber\\
  &\quad + \left(1+\frac{1}{H}\right)\sum_{t=0}^{T-1}\chi_h^{t+1}(s,a) p_h(\uV_{h+1}^{t-1}-\Vstar_{h+1})(s,a)\nonumber\\
  &\leq H+\left(1+\frac{1}{H}\right)\sum_{t=0}^{T-1}\chi_h^{t+1}(s,a) p_h(\uV_{h+1}^{t-1}-V_{h+1}^{\pi^{t+1}})(s,a)\,.\label{eq:ub_main_term_regret}
\end{align}
We can proceed in a similar way but using this time Lemma~\ref{lem:sum_1_over_n_history} to upper-bound the sum of the correction terms, precisely
\begin{align*}
  \sum_{t=0}^{T-1} \frac{\chi_h^{t+1}(s,a)}{\tn_h^t(s,a)}  \sum_{k=1}^t  \chi_h^k(s,a) p_h(\uV_{h+1}^{k-1} - V_{h+1}^{\pi^k})(s,a) &\leq \sum_{k=1}^{T-1} \left(\sum_{t=k}^{T-1}\frac{\chi_h^{t+1}(s,a)}{\tn_h^t(s,a)} \right) \chi_h^k(s,a) p_h(\uV_{h+1}^{k-1} - V_{h+1}^{\pi^k})(s,a) \\
  &\leq  8\log(T)\sum_{k=1}^{T-1} \chi_h^{k}(s,a) p_h(\uV_{h+1}^{k-1} - V_{h+1}^{\pi^{k}})(s,a)\,.
\end{align*}
We then obtain the upper bound on the correction term
\begin{align}
  \sum_{t=0}^{T-1} \frac{5\chi_h^{t+1}(s,a)}{H\log(T) \tn_h^t(s,a)}  \sum_{k=1}^t  \chi_h^k(s,a) p_h(\uV_{h+1}^{k-1} - V_{h+1}^{\pi^k})(s,a) &\leq   \frac{40}{H}  \sum_{t=0}^{T-1} \chi_h^{t+1}(s,a) p_h(\uV_{h+1}^{t} - V_{h+1}^{\pi^{t+1}})(s,a)\,.
  \label{eq:ub_correct_regret}
\end{align}
For the variance term using Cauchy-Schwarz inequality in combination with Lemma~\ref{lem:sum_1_over_n_history} and Lemma~\ref{lem:sum_1_over_n} yields
\begin{align}
  \sum_{t=0}^{T-1} \chi_h^{t+1}(s,a) \sqrt{\frac{30}{\tn_h^t(s,a)} \sum_{k=1}^t  \chi_h^k(s,a) \Var_{p_h}(V_{h+1}^{\pi^k})(s,a)\frac{\zeta}{\tn_h^t(s,a)}}  &\leq
  \sqrt{30   \sum_{t=0}^{T-1}\frac{ \chi_h^{t+1}(s,a)}{\tn_h^t(s,a)}   \sum_{k=1}^t \Var_{p_h}(V_{h+1}^{\pi^k})(s,a) }\nonumber\\
  &\times \sqrt{  \sum_{t=0}^{T-1} \chi_h^{t+1}(s,a) \frac{\zeta}{\tn_h^t(s,a)}}\nonumber\\
  \leq  44 &\log(T)\zeta^{1/2} \sqrt{\sum_{t=0}^{T-1} \chi_h^{t+1}(s,a) \Var_{p_h}(V_{h+1}^{\pi^{t+1}})(s,a)}\,. \label{eq:ub_var_regret}
\end{align}
Finally for the remaining term, using Lemma~\ref{lem:sum_1_over_n}, we have
\begin{align}
  \sum_{t=0}^{T-1} \chi_h^{t+1}(s,a) 130 H^3 \frac{\log(T)\zeta}{\tn_h^t(s,a)} \leq 1040 H^3 \log(T)^2\zeta\,.
  \label{eq:ub_bonus_regret}
\end{align}
Thus combining from \eqref{eq:ub_main_term_regret} to \eqref{eq:ub_bonus_regret} with \eqref{eq:ub_dif_Q_regret}
 we obtain an upper-bound on the optimistic regret at $(s,a,h)$
\begin{align}
  \tR_h^T(s,a) &\leq 44 \log(T)\zeta^{1/2} \sqrt{\sum_{t=0}^{T-1} \chi_h^{t+1}(s,a) \Var_{p_h}(V_{h+1}^{\pi^{t+1}})(s,a)} +  1041 H^3 \log(T)^2\zeta \nonumber\\
  &\quad+\left(1+\frac{41}{H}\right)\sum_{t=0}^{T-1}\chi_h^{t+1}(s,a) p_h(\uV_{h+1}^{t}-V_{h+1}^{\pi^{t+1}})(s,a)\,. \label{eq:ub_local_regret}
\end{align}

\paragraph{Step 3: From visit $\chi_h^t$ to reach probability $\bp_h^t$} We replace the indicator function $\chi_h^t$ by its expectation $\bp_h^t$. Since we are on the event $\cD$, in particular the event $\cG$ holds. Thus we know that
\begin{align*}
  \sqrt{\sum_{t=0}^{T-1} \chi_h^{t+1}(s,a) \Var_{p_h}(V_{h+1}^{\pi^{t+1}})(s,a)} \leq \sqrt{2\sum_{t=0}^{T-1} \bp_h^{t+1}(s,a) \Var_{p_h}(V_{h+1}^{\pi^{t+1}})(s,a)} + \sqrt{8\zeta} H
\end{align*}
and
\begin{align*}
  \sum_{t=0}^{T-1}\chi_h^{t+1}(s,a) p_h(\uV_{h+1}^{t}-V_{h+1}^{\pi^{t+1}})(s,a) \leq \left(1+\frac{1}{H}\right) \sum_{t=0}^{T-1}p_h^{t+1}(s,a) p_h(\uV_{h+1}^{t}-V_{h+1}^{\pi^{t+1}})(s,a) + 14H^2\zeta
\end{align*}
Plugging these two inequalities in \eqref{eq:ub_local_regret} we obtain
\begin{align}
  \tR_h^T(s,a) &\leq 63 \log(T)\zeta^{1/2} \sqrt{\sum_{t=0}^{T-1} \bp_h^{t+1}(s,a) \Var_{p_h}(V_{h+1}^{\pi^{t+1}})(s,a)} +  1754 H^3 \log(T)^2\zeta \nonumber\\
  &\quad+\left(1+\frac{83}{H}\right)\sum_{t=0}^{T-1} \bp_h^{t+1}(s,a) p_h(\uV_{h+1}^{t}-V_{h+1}^{\pi^{t+1}})(s,a)\label{eq:ub_local_regret_prb}\,.
\end{align}

\paragraph{Step 4: Upper-bound $\tR_h^T$ the step $h$ optimistic regret} We define the regret at step $h$ by
\[
\tR^T_h = \sum_{s\in\cS} \sum_{t=0}^{T-1} \bp_h^{t+1}(s)(\uV_h^{t-1}-V_h^{\pi^{t+1}})(s)\,.
\]
Note that in this definition we used the probability to reach state-action $(s,a)$ rather than the indicator function. We will upper bound this quantity with the local regret. Using successively, that the event $\cG$ holds (in particular event $\cG^{\mathrm{v}_2}$ see Appendix~\ref{app:concentration_count}), for all $x\geq 1$ it holds $1/\big(1-1/(4x)\big)  \leq 1+\frac{1}{x}$, the definition of $\uV_h^{k}(s)$ and that $\uQ_h^{k}\geq 0$ on $\cD$ (Lemma~\ref{lem:optimism}) we have
\begin{align*}
  \sum_{t=0}^{T-1} \bp_h^{t+1}(s) (\uV_h^{t}-V_h^{\pi^{t+1}})(s) &\leq  \frac{1}{1-1/(4H)} \sum_{t=0}^{T-1} \chi_h^{t+1}(s) (\uV_h^t(s)-V_h^{\pi^{t+1}})(s) + \frac{4}{3} 14 H^2\zeta\\
  &\leq  \left(1+\frac{1}{H}\right) \sum_{t=0}^{T-1} \chi_h^{t+1}(s) (\uV_h^t(s)-V_h^{\pi^{t+1}})(s) + 19 H^2 \zeta\\
  &\leq \left(1+\frac{1}{H}\right) \sum_{t=0}^{T-1} \chi_h^{t+1}(s)  \pi_{h}^{t+1}(\uQ_h^{k}-Q_h^{\pi^{t+1}})(s)+ 19 H^2 \zeta\,.
\end{align*}
% Fix a state $s$. First note that thanks to the definition of $\uV_h^{k}(s)$ and that $\uQ_h^{k}\geq 0$ on $\cD$ (Lemma~\ref{lem:optimism}), we have
% \begin{align*}
%   \uV_h^{t}(s)-V_h^{\pi^{t+1}}(s) \leq \pi_{h}^{t+1}(\uQ_h^{k}-Q_h^{\pi^{t+1}})(s)\,.
% \end{align*}
% Then since the event $\cG$ holds (in particular event $\cG^{\mathrm{v}_2}$ see Section~) we know that
% \begin{align*}
%   \sum_{t=0}^{T-1} \bp_h^{t+1}(s) (\uV_h^{t}-V_h^{\pi^{t+1}})(s) &\leq  \frac{1}{1-1/(4H)} \sum_{t=0}^{T-1} \chi_h^{t+1}(s) (\uV_h^t(s)-V_h^{\pi^{t+1}})(s) + \frac{4}{3} 14 H^2\zeta\\
%   &\leq  \left(1+\frac{1}{H}\right) \sum_{t=0}^{T-1} \chi_h^{t+1}(s) (\uV_h^t(s)-V_h^{\pi^{t+1}})(s) + 19 H^2 \zeta\,,
% \end{align*}
% where we used that for all $x\geq 1$ it holds $1/\big(1-1/(4x)\big)  \leq 1+\frac{1}{x}$.
Combining this inequality with~\eqref{eq:ub_local_regret_prb} then the fact the policies $\pi^t$ are deterministic and Cauchy-Schwarz inequality yield the upper-bound the step $h$ optimistic regret
\begin{align}
  \tR^T_h&\leq  \left(1+\frac{1}{H}\right) \sum_{s\in\cS} \sum_{t=0}^{T-1}   \chi_h^{t+1}(s) \pi_{h}^{t+1}(\uQ_h^{k}-Q_h^{\pi^{t+1}})(s) + 19 H^2 S \zeta\nonumber\\
  &=  \left(1+\frac{1}{H}\right) \sum_{(s,a)\in\cS\times\cA} \sum_{t=0}^{T-1}   \chi_h^{t+1}(s,a)(\uQ_h^{k}-Q_h^{\pi^{t+1}})(s,a) + 19 H^2 S \zeta\nonumber\\
  &=\left(1+\frac{1}{H}\right) \sum_{(s,a)\in\cS\times\cA} \tR_h^T(s,a) + 19 H^2 S \zeta\nonumber\\
  &\leq 126 \log(T)\zeta^{1/2} \sum_{(s,a)\in\cS\times\cA}\sqrt{\sum_{t=0}^{T-1} \bp_h^{t+1}(s,a) \Var_{p_h}(V_{h+1}^{\pi^{t+1}})(s,a)}  \nonumber\\
  &\quad+\left(1+\frac{167}{H}\right)\sum_{(s,a)\in\cS\times\cA} \sum_{t=0}^{T-1} \bp_h^{t+1}(s,a) p_h(\uV_{h+1}^{t}-V_{h+1}^{\pi^{t+1}})(s,a) +  3527 S A H^3 \log(T)^2\zeta\nonumber\\
  &\leq 126 \log(T) \sqrt{\zeta SA\sum_{(s,a)\in\cS\times\cA}\sum_{t=0}^{T-1} \bp_h^{t+1}(s,a) \Var_{p_h}(V_{h+1}^{\pi^{t+1}})(s,a)}  \nonumber\\
  &\quad+\left(1+\frac{167}{H}\right)\tR_{h+1}^T +  3527 H^3 S A \log(T)^2\zeta\,,
  \label{eq:ub_step_h_regret}
\end{align}
where in the last inequality we used that
\begin{align*}
\sum_{(s,a)\in\cS\times\cA}\bp_h^{t+1}(s,a) p_h(\uV_{h+1}^{t}-V_{h+1}^{\pi^{t+1}})(s,a) &=  \sum_{(s,a)\in\cS\times\cA}\sum_{s'\in\cS} \bp_h^{t+1}(s,a) p_h(s'|s,a)(\uV_{h+1}^{t}-V_{h+1}^{\pi^{t+1}})(s')\\
&= \sum_{s'\in\cS} \bp_{h+1}^{t+1}(s') (\uV_{h+1}^{t}-V_{h+1}^{\pi^{t+1}})(s')\,.
\end{align*}
\paragraph{Step 5: Upper-bound on the regret $R^T$}
We upper-bound the step $1$ regret $\tR_1$. By successively unfolding~\eqref{eq:ub_step_h_regret} with the fact that $\tR_{h+1}^T=0$, using the Cauchy-Schwarz inequality and the law of total variance (Lemma~\ref{lem:law_of_total_variance} in Appendix~\ref{app:Bellman_variance}), we obtain
\begin{align*}
  \tR_{1}^T &\leq \sum_{h=1}^H \left(1+\frac{127}{H}\right)^{H-h}  126 \log(T) \sqrt{\zeta SA\sum_{(s,a)\in\cS\times\cA}\sum_{t=0}^{T-1} \bp_h^{t+1}(s,a) \Var_{p_h}(V_{h+1}^{\pi^{t+1}})(s,a)}\\
  &\quad+ \left(1+\frac{127}{H}\right)^{H-h} 3527 H^3 S A  \log(T)^2\zeta\\
  &\leq  126  e^{127} \log(T) \sqrt{\zeta SAH\sum_{(s,a,h)\in\cS\times\cA\times[H]}\sum_{t=0}^{T-1} \bp_h^{t+1}(s,a) \Var_{p_h}(V_{h+1}^{\pi^{t+1}})(s,a)}+ 3527 e^{127} H^4 S A \log(T)^2\zeta\\
  &= 126  e^{127} \log(T) \sqrt{\zeta SAH\sum_{t=0}^{T-1} \E_{\pi^{t+1}}\left[\left(\sum_{h=1}^H r(s_h,a_h)  -V_1^{\pi^{t+1}}(s_1)\right)^2\right]}+ 3527 e^{127}   H^4 S A \log(T)^2\zeta\\
  &\leq 126  e^{127} \log(T) \sqrt{\zeta H^3 S A T}+ 3527 e^{127} H^4 S A  \log(T)^2\zeta\,.
\end{align*}
It remains to relate the optimistic regret with the regret. Thanks to Lemma~\ref{lem:optimism} we have
\begin{align*}
  \Vstar_1(s_1)-V_h^{\pi^{t+1}}(s_1) \leq \uV_1^t(s_1) - V_1^{\pi^{t+1}}(s_1)\,.
\end{align*}
This allows us to conclude
\begin{align*}
  R^T \leq \tR_1^T \leq  126  e^{127} \log(T) \sqrt{\zeta H^3 S A T}+ 3527 e^{127} H^4 S A  \log(T)^2\zeta\,.
\end{align*}
\end{proof}

\newpage
%!TEX root = ../UCBMQ.tex
\section{Technical lemmas}
\label{app:technical}

\subsection{A Bellman-type equation for the variance}
\label{app:Bellman_variance}
We reproduce in this section the law of total variance from~\cite{menard2020fast}. For a deterministic policy $\pi$ we define Bellman-type equations for the variances as follows
\begin{align*}
  \Qvar_h^\pi(s,a) &\triangleq \Var_{p_h}{V_{h+1}^\pi}(s,a) + p_h \Vvar^\pi_{h+1}(s,a)\\
  \Vvar_h^\pi(s) &\triangleq \Qvar^\pi_h (s, \pi(s))\\
  \Vvar_{H+1}^\pi(s)&\triangleq0,
\end{align*}
where $\Var_{p_h}(f)(s,a) \triangleq \E_{s' \sim p_h(\cdot | s, a)} \big[(f(s')-p_h f(s,a))^2\big]$ denotes the \emph{variance operator.}
 In particular, the function $s \mapsto \Vvar_1^\pi(s)$ represents the average sum of the local variances $\Var_{p_h}{V_{h+1}^\pi}(s,a)$ over a trajectory following the policy $\pi$, starting from $(s, a)$. Indeed, the definition above implies that
 \[\Vvar_1^\pi(s_1) = \sum_{h=1}^H\sum_{s,a} p_h^\pi(s,a) \Var_{p_h}(V_{h+1}^\pi)(s,a).
 \]
 The lemma below shows that we can relate the global variance of the cumulative reward over a trajectory to the average sum of local variances.
\begin{lemma}[Law of total variance]  For any deterministic policy $\pi$ and for all $h\in[H]$,
  \[
  \E_\pi\!\left[  \left(\sum_{h'=h}^H r_{h'}( s_{h'},a_{h'}) - Q_h^\pi(s_h,a_h)\right)^{\!\!2}\middle| (s_h,a_h)=(s,a) \right] = \Qvar_h^\pi(s,a).
  \]
In particular,
\[
\E_\pi\!\left[ \left(\sum_{h=1}^H r_{h}( s_{h},a_{h}) - V_1^\pi(s_1)\right)^{\!\!2} \right] = \Vvar_1^\pi(s_1) = \sum_{h=1}^H\sum_{s,a} p_h^\pi(s,a) \Var_{p_h}(V_{h+1}^\pi)(s,a).
\]
\label{lem:law_of_total_variance}
\end{lemma}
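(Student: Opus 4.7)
The plan is to prove the first identity by backward induction on $h$, then specialize at $h=1$ and unroll the Bellman-type recursion to obtain the claimed sum form. Throughout, let $G_h \triangleq \sum_{h'=h}^H r_{h'}(s_{h'},a_{h'})$ denote the return from step $h$, and recall the standard Bellman relation $Q_h^\pi(s,a) = r_h(s,a) + p_h V_{h+1}^\pi(s,a)$. The base case $h=H+1$ is trivial: both $G_{H+1} - Q_{H+1}^\pi = 0$ and $\Qvar_{H+1}^\pi = 0$ by convention.

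For the inductive step, fix $(s,a)$ at step $h$. Since $r_h(s,a)$ is deterministic given $(s_h,a_h)=(s,a)$, we can write the deviation as the sum of a one-step noise and a future deviation:
\begin{align*}
G_h - Q_h^\pi(s,a) &= \underbrace{V_{h+1}^\pi(s_{h+1}) - p_h V_{h+1}^\pi(s,a)}_{=:B} + \underbrace{G_{h+1} - V_{h+1}^\pi(s_{h+1})}_{=:A}\,.
\end{align*}
Because $\pi$ is deterministic, $a_{h+1}=\pi_{h+1}(s_{h+1})$ and hence $V_{h+1}^\pi(s_{h+1}) = Q_{h+1}^\pi(s_{h+1},a_{h+1})$, so $A = G_{h+1} - Q_{h+1}^\pi(s_{h+1},a_{h+1})$. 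The inductive hypothesis then gives $\E_\pi[A \mid s_{h+1}] = 0$ and $\E_\pi[A^2 \mid s_{h+1}] = \Qvar_{h+1}^\pi(s_{h+1},a_{h+1}) = \Vvar_{h+1}^\pi(s_{h+1})$.

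The main (and only slightly delicate) step is showing the cross term vanishes. Since $B$ is measurable with respect to $s_{h+1}$, the tower property yields
\begin{align*}
\E_\pi[AB \mid (s_h,a_h)=(s,a)] &= \E_\pi\!\bigl[B\cdot \E_\pi[A\mid s_{h+1}] \bigm| (s_h,a_h)=(s,a)\bigr] = 0\,.
\end{align*}
Taking squared expectations therefore decomposes cleanly:
\begin{align*}
\E_\pi\!\bigl[(G_h - Q_h^\pi(s,a))^2\bigm|(s,a)\bigr] &= \E_\pi[B^2\mid(s,a)] + \E_\pi[A^2\mid(s,a)]\\
&= \Var_{p_h}(V_{h+1}^\pi)(s,a) + p_h \Vvar_{h+1}^\pi(s,a) = \Qvar_h^\pi(s,a)\,,
\end{align*}
where the $B^2$ term is the variance by its definition and the $A^2$ term follows by the tower property together with the inductive identity for $A^2$.

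For the "in particular" statement, apply the identity at $h=1$ with $a_1 = \pi_1(s_1)$ (so that $Q_1^\pi(s_1,a_1) = V_1^\pi(s_1)$) to obtain the first equality. For the second equality, unfold the recursive definition of $\Vvar_h^\pi$ step by step and recognize, via the Markov property under $\pi$, that the induced marginal at step $h$ of state-action $(s,a)$ is exactly $p_h^\pi(s,a)$; this gives
\begin{align*}
\Vvar_1^\pi(s_1) &= \E_\pi\!\!\left[\sum_{h=1}^H \Var_{p_h}(V_{h+1}^\pi)(s_h,a_h) \,\middle|\, s_1\right] = \sum_{h=1}^H\sum_{s,a} p_h^\pi(s,a)\,\Var_{p_h}(V_{h+1}^\pi)(s,a)\,.
\end{align*}
The only real obstacle is the cross-term argument, which is handled cleanly by conditioning on $s_{h+1}$ and invoking the inductive mean-zero property; everything else is bookkeeping with Bellman equations.
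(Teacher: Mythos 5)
Your proof is correct. Note that the paper does not actually prove this lemma: it is reproduced verbatim from \citet{menard2020fast}, so there is no in-paper argument to compare against; your backward induction with the orthogonal decomposition $G_h - Q_h^\pi = B + A$ (one-step transition noise plus future deviation, with the cross term killed by conditioning on $s_{h+1}$) is exactly the standard proof of this law of total variance. The only cosmetic point: the statement $\E_\pi[A\mid s_{h+1}]=0$ is not literally contained in your inductive hypothesis (which concerns only the second moment); it is the separate, standard fact that $Q_{h+1}^\pi(s,a)=\E_\pi[G_{h+1}\mid (s_{h+1},a_{h+1})=(s,a)]$, itself an immediate backward induction from the Bellman equations, so you should either fold it into the induction or cite it explicitly --- but this is bookkeeping, not a gap.
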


\subsection{Weights and counts}
\label{app:count}

\begin{lemma}
	\label{lem:sum_1_over_n}
	 For $T\in\N^\star$ and $(u_t)_{t\in\N^\star},$ for a sequence where  $u_t\in[0,1]$ and $U_t \triangleq \sum_{l=1}^t u_\ell$, we get
	\[
		\sum_{t=0}^T \frac{u_{t+1}}{U_t\vee 1} \leq 4\log(U_{T+1}+1).
	\]
In particular if $T+1\geq 2$,
\[
\sum_{t=0}^T \frac{u_{t+1}}{U_t\vee 1} \leq 8\log(T+1)\,.
\]
\end{lemma}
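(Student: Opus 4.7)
The plan is to reduce the sum to a telescoping one by finding a per-term inequality of the form
\[
\frac{u_{t+1}}{U_t \vee 1} \;\leq\; C\,\bigl[\log(U_{t+1}+1) - \log(U_t+1)\bigr]
\]
for some absolute constant $C$. If I can establish this, summing from $t=0$ to $t=T$ collapses the right-hand side to $C\log(U_{T+1}+1) - C\log(U_0+1) = C\log(U_{T+1}+1)$, which is exactly the form of the target bound.

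To obtain the per-term inequality, I would first apply the elementary estimate $\log(1+x) \geq x/(1+x)$ (valid for $x \geq 0$, proven by noting the derivative is nonnegative and the function vanishes at $0$) with $x = u_{t+1}/(U_t+1)$. This yields
\[
\log(U_{t+1}+1) - \log(U_t+1) \;=\; \log\!\left(1 + \frac{u_{t+1}}{U_t+1}\right) \;\geq\; \frac{u_{t+1}}{U_{t+1}+1}.
\]
It then suffices to show that $U_{t+1}+1 \leq 3\,(U_t \vee 1)$, which I will check by two cases using $u_{t+1} \in [0,1]$: if $U_t \geq 1$, then $U_{t+1}+1 = U_t + u_{t+1} + 1 \leq U_t + 2 \leq 3U_t$; if $U_t < 1$, then $U_{t+1}+1 \leq 1 + 1 + 1 = 3 = 3\,(U_t \vee 1)$. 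Combining these two ingredients gives the per-term inequality with $C = 3$, hence the first bound $\sum_{t=0}^T u_{t+1}/(U_t\vee 1) \leq 3\log(U_{T+1}+1) \leq 4\log(U_{T+1}+1)$.

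For the second (more explicit) bound, I would use $u_\ell \in [0,1]$ to get $U_{T+1} \leq T+1$, so $\log(U_{T+1}+1) \leq \log(T+2)$. A short elementary check shows $T+2 \leq (T+1)^2$ whenever $T+1 \geq 2$, so $\log(T+2) \leq 2\log(T+1)$, and the bound $4\log(U_{T+1}+1) \leq 8\log(T+1)$ follows.

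The proof has no real obstacle: it is a standard counting-lemma style argument. The only mild subtlety worth being careful about is the initial regime where $U_t < 1$; this is precisely what forces the $\vee 1$ in the denominator and what necessitates the small case split in bounding $U_{t+1}+1$ by $3(U_t \vee 1)$. Otherwise, the telescoping argument proceeds mechanically.
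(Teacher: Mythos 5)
Your proof is correct and follows essentially the same route as the paper: both arguments telescope the sum against the logarithmic increments $\log(U_{t+1}+1)-\log(U_t+1)$, the only cosmetic difference being that you invoke $\log(1+x)\geq x/(1+x)$ where the paper compares $\frac{U_{t+1}-U_t}{U_{t+1}+1}$ to $\int_{U_t}^{U_{t+1}}\frac{dx}{x+1}$, and your bookkeeping ($U_{t+1}+1\leq 3(U_t\vee 1)$ versus the paper's factor-of-$4$ step) even yields a marginally better constant. The handling of the second claim via $U_{T+1}\leq T+1$ and $T+2\leq (T+1)^2$ matches the intended argument as well.
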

\begin{proof}
	Notice that	\begin{align*}
		\sum_{t=0}^T \frac{u_{t+1}}{U_t\vee 1} &\leq 4 \sum_{t=0}^T \frac{u_{t+1} }{2U_t + 2} \\
		&\leq  4\sum_{t=0}^T \frac{U_{t+1}-U_{t}}{U_{t+1} + 1}\\
		&\leq 4\sum_{t=0}^T \int_{U_t}^{U_{t+1}} \frac{1}{x+1} \mathrm{d}x\\
		& = 4\log(U_{T+1}+1).
	\end{align*}
\end{proof}

\begin{lemma}
  \label{lem:properties_weights}
  For all $(s,a)\in\cS\times\cA$ it holds
  \begin{align*}
    \sum_{k=l}^t \chi_h^{k+1}(s,a) \teta_h^{k,l}(s,a) &\leq \left(1+\frac{1}{H}\right)\chi_h^l(s,a)\,,\\
      \sum_{k=1}^t \teta_h^{t,k}(s,a) &= 1 \qquad \text{ if } n_h^t(s,a)>0\,.
  \end{align*}
\end{lemma}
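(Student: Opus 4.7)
The plan is to handle both identities from the product formula
$$\teta_h^{k,l}(s,a) = \eta_h^l(s,a)\prod_{j=l+1}^k (1-\eta_h^j(s,a))$$
together with the key observation that $\eta_h^j(s,a)=0$ when $\chi_h^j(s,a)=0$, while $\eta_h^j(s,a) = (H+1)/(H+n_h^j(s,a))$ when $\chi_h^j(s,a)=1$.

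For the second identity, I would proceed by induction on $t$. The base case is the first visit $t_0$ of $(s,a)$ at step $h$: there $n_h^{t_0}=1$ gives $\eta_h^{t_0}=1$, so $\teta_h^{t_0,t_0}=1$ and $\teta_h^{t_0,k}=0$ for $k<t_0$ (each such term contains the factor $\eta_h^k=0$), so the sum equals $1$. For the inductive step at $t+1$: if $\chi_h^{t+1}=0$, then $\eta_h^{t+1}=0$ and $\teta_h^{t+1,k}=\teta_h^{t,k}$ for $k\le t$ with $\teta_h^{t+1,t+1}=0$, so the sum is unchanged; if $\chi_h^{t+1}=1$, then $\teta_h^{t+1,k}=(1-\eta_h^{t+1})\teta_h^{t,k}$ for $k\le t$ and $\teta_h^{t+1,t+1}=\eta_h^{t+1}$, giving $(1-\eta_h^{t+1})\cdot 1+\eta_h^{t+1}=1$.

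For the first inequality, note that $\chi_h^l=0$ forces $\eta_h^l=0$ and hence $\teta_h^{k,l}=0$ for every $k\ge l$, so the inequality holds trivially. Assume $\chi_h^l=1$ and set $n=n_h^l(s,a)\ge 1$. Enumerate subsequent visits $l=t_0<t_1<t_2<\cdots$ with $n_h^{t_i}=n+i$, and observe that the factors $(1-\eta_h^j)$ with $\chi_h^j=0$ equal $1$, while for $j=t_i$ with $i\ge 1$ we get $1-\eta_h^{t_i}=(n+i-1)/(H+n+i)$. The indicator $\chi_h^{k+1}=1$ occurs precisely when $k+1=t_{i+1}$ for some $i\ge 0$, in which case
$$\teta_h^{k,l}(s,a)=\frac{H+1}{H+n}\,u_i,\qquad u_i\triangleq \prod_{j=1}^i \frac{n+j-1}{H+n+j}\ \ (u_0\triangleq 1).$$

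The main step is a telescoping trick: setting $v_i\triangleq (H+n+i)u_i$, a direct check gives $v_{i+1}=(n+i)u_i$, hence $v_i-v_{i+1}=H\,u_i$. Summing yields $\sum_{i=0}^M H u_i = v_0-v_{M+1}\le v_0 = H+n$, so
$$\sum_{k=l}^t \chi_h^{k+1}(s,a)\,\teta_h^{k,l}(s,a)\;\le\;\frac{H+1}{H+n}\sum_{i\ge 0}u_i\;\le\;\frac{H+1}{H+n}\cdot\frac{H+n}{H}\;=\;1+\frac{1}{H},$$
which is the claimed bound (multiplied by $\chi_h^l(s,a)=1$). The only subtle point is spotting the telescoping identity $v_i-v_{i+1}=H u_i$; everything else is bookkeeping with the visit index.
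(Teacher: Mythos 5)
Your proof is correct and follows essentially the same route as the paper: the normalization identity is proved by the same induction on $t$ via $\sum_{k\le t}\teta_h^{t,k}=\eta_h^t+(1-\eta_h^t)\sum_{k\le t-1}\teta_h^{t-1,k}$, and your telescoping identity $v_i-v_{i+1}=Hu_i$ is exactly the paper's inductive identity $1+\tfrac{1}{H}-\sum_{n=m}^N\teta^{n,m}=\teta^{N,m}\tfrac{N}{H}$ rewritten in terms of partial products indexed by the visit count.
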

\begin{proof}
Note that if $\chi_h^l(s,a)=0$ then $\teta_h^{k,l}(s,a) = 0$ for all $k\geq l$ and the first inequality is true. Now assume that $\chi_h^l(s,a)>0$ and thus $n_h^t(s,a)\geq 1$. For $n,m\geq 1$ defined
\[
\teta^{n,m} = \frac{H+1}{H+m} \prod_{j=m+1}^n \frac{n-1}{H+n}\,,
\]
remark that
\[
\sum_{k=l}^t \chi_h^{k+1}(s,a) \teta_h^{k,l}(s,a)  \leq \sum_{n = n_h^l(s,a)}^{n_h^t(s,a)} \teta^{n,n_h^l(s,a)}\,.
\]
We will prove by induction that for all $N\geq m\geq 1$, which will implies the inequality we want to prove, that
\[
1+\frac{1}{H} -\sum_{n=m}^N \teta^{n,m} = \teta^{N,m} \frac{N}{H}\,.
\]
For $N=m$ we have
\[
1+\frac{1}{H} -\teta^{m,m} = \frac{H+1}{H}\frac{m}{H+m} = \teta^{m,m} \frac{m}{H}\,.
\]
Then if we assume that the result is true for $N$ then we obtain
\[
1+\frac{1}{H} - \sum_{n=m}^{N+1} \teta^{n,m} = \teta^{N+1,m}\left(\frac{H+N+1}{H}-1\right) = \teta^{N+1,m} \frac{N+1}{H}\,.
\]
The equality can be proved by induction using that for all $t$
\[
\sum_{k=1}^t \teta_h^{t,k}(s,a)= \eta_h^t(s,a)+ \big(1-\eta_h^t(s,a)\big)\sum_{k=1}^{t-1} \teta_h^{t-1,k}(s,a)\,,
\]
and there exists $k\leq t$ such that  $\teta_h^{t,k}(s,a) = 1$ because $n_h^t(s,a)>0$.
\end{proof}

\begin{lemma}
  \label{lem:sum_1_over_n_history}
  For all $(s,a)\in\cS\times\cA$ and $t\leq T-1$ (with $T\geq 2$), it holds
  \begin{align*}
    \chi_h^{l}(s,a)\sum_{k=l}^t  \frac{\chi_h^{k+1}(s,a)}{\tn_h^k(s,a)} &\leq 8\log(T)\chi_h^l(s,a)\,.
  \end{align*}
\end{lemma}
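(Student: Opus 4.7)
The plan is to reduce this directly to the preceding Lemma~\ref{lem:sum_1_over_n}, which already handles the $\log$-type bound for sums of the form $\sum u_{t+1}/(U_t \vee 1)$ where $U_t = \sum_{\ell \le t} u_\ell$. The only work is to match notation and argue that extending the range of summation preserves the bound.

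First I would split on whether $\chi_h^l(s,a) = 0$ or $= 1$. If $\chi_h^l(s,a) = 0$, both sides of the inequality vanish and there is nothing to prove. So assume $\chi_h^l(s,a) = 1$. Then for every $k \ge l$ one has $n_h^k(s,a) \ge 1$, and consequently $\tn_h^k(s,a) = n_h^k(s,a) \vee 1$ coincides with $n_h^k(s,a)$ for those indices; for $k < l$ the identity $\tn_h^k(s,a) = n_h^k(s,a) \vee 1$ still holds by definition.

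Next I would introduce $u_k \triangleq \chi_h^k(s,a) \in [0,1]$ and $U_k \triangleq \sum_{\ell=1}^k u_\ell = n_h^k(s,a)$, so that $\tn_h^k(s,a) = U_k \vee 1$. Because every summand is non-negative, I can enlarge the range of summation:
\[
\sum_{k=l}^{t} \frac{\chi_h^{k+1}(s,a)}{\tn_h^k(s,a)} \;\le\; \sum_{k=0}^{T-1} \frac{u_{k+1}}{U_k \vee 1}.
\]
Now Lemma~\ref{lem:sum_1_over_n}, applied with horizon $T-1$ (the hypothesis $T \ge 2$, i.e.\ $T-1+1 \ge 2$, is exactly the requirement of its second bound), yields
\[
\sum_{k=0}^{T-1} \frac{u_{k+1}}{U_k \vee 1} \;\le\; 8\log(T),
\]
which, multiplied by $\chi_h^l(s,a) = 1$, gives the desired inequality.

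There is no real obstacle here: the result is essentially a book-keeping corollary of Lemma~\ref{lem:sum_1_over_n}. The only small care needed is the case split on $\chi_h^l(s,a)$ (so that one can replace $\tn_h^k$ by $n_h^k \vee 1 = U_k \vee 1$ and match the hypothesis of the referenced lemma) and the monotonicity argument for extending the range of summation.
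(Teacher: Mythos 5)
Your proof is correct and follows essentially the same route as the paper's: a case split on $\chi_h^l(s,a)$, enlarging the range of summation by non-negativity, and then invoking Lemma~\ref{lem:sum_1_over_n} (the paper states this more tersely, but the identification $U_k = n_h^k(s,a)$ and $\tn_h^k(s,a) = U_k \vee 1$ is exactly what it implicitly uses). Your explicit check that the hypothesis $T\geq 2$ matches the requirement of the second bound in Lemma~\ref{lem:sum_1_over_n} is a welcome bit of extra care.
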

\begin{proof}
If $\chi_h^{l}(s,a)=0$ then the inequality is trivially true. Else $\chi_h^{l}(s,a)>0$ and using Lemma~\ref{lem:sum_1_over_n} we get
\begin{align*}
  \sum_{k=l}^t  \frac{\chi_h^{k+1}(s,a)}{\tn_h^k(s,a)} &\leq \sum_{k=0}^{T-1}  \frac{\chi_h^{k+1}(s,a)}{\tn_h^k(s,a)}\leq 8\log(T)\,.
\end{align*}
\end{proof}

\subsection{Inequality for the variance}
\label{app:variance}

\begin{lemma}
	\label{lem:switch_variance}
	For $p,q\in\Sigma_S$, for $f,g:\cS\mapsto [0,b]$ two functions defined on $\cS$, we have that
	\begin{align*}
 \Var_p(f) &\leq 2 \Var_p(g) +2 b p|f-g|\quad\text{and} \\
 \Var_p(f^2) &\leq 2b\Var_p(f),
\end{align*}
where we denote the absolute operator by $|f|(s)= |f(s)|$ for all $s\in\cS$.
\end{lemma}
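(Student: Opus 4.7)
Both bounds are short perturbation-style manipulations of the variance operator, and I would dispatch each with elementary steps.

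For the first inequality, the plan is to decompose $f = g + (f-g)$ and use the convex-type bound $\Var_p(X+Y) \leq 2\Var_p(X) + 2\Var_p(Y)$, which follows from $(a+b)^2 \leq 2a^2 + 2b^2$ applied inside the variance. This immediately yields $\Var_p(f) \leq 2\Var_p(g) + 2\Var_p(f-g)$. To handle the remaining term, I would bound variance by second moment, $\Var_p(f-g) \leq p[(f-g)^2]$, then pointwise factor $(f-g)^2 \leq b\,|f-g|$, which is valid because $f,g \in [0,b]$ forces $|f-g| \leq b$. Taking the $p$-expectation gives $\Var_p(f-g) \leq b\cdot p|f-g|$, and combining produces the claim $\Var_p(f) \leq 2\Var_p(g) + 2b\, p|f-g|$.

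For the second inequality, the key algebraic identity is $f(s)^2 - f(s')^2 = (f(s) - f(s'))(f(s) + f(s'))$. The cleanest route is via the symmetrized form of the variance: for any $\phi$ and any probability measure $p$,
\[
\Var_p(\phi) = \tfrac{1}{2}\,\E_{s,s' \stackrel{\text{iid}}{\sim} p}\bigl[(\phi(s) - \phi(s'))^2\bigr].
\]
Applying this with $\phi = f^2$ gives
\[
\Var_p(f^2) = \tfrac{1}{2}\,\E_{s,s'}\!\bigl[(f(s) - f(s'))^2 \,(f(s) + f(s'))^2\bigr].
\]
Since $f(s), f(s') \in [0,b]$, I would bound $(f(s) + f(s'))^2 \leq (2b)^2$ pointwise, pull this constant out, and recognize the remaining symmetric expectation as $2\Var_p(f)$. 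This produces a bound $\Var_p(f^2) \leq C\,b^2\,\Var_p(f)$ for an absolute constant $C$, which is exactly the form used later in the body with $b=H$ (the extra power of $b$ relative to the literal statement is consistent with the downstream use $2H^2\Var_{p_h}(\uV_{h+1}^{k-1})$, and appears to be a typographic discrepancy in the lemma).

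Neither inequality presents a real obstacle. The only mildly non-routine ingredient is recalling the symmetrized representation of the variance for the second bound; for the first, the only point to be careful about is that one uses \emph{both} endpoints of the range $[0,b]$ of $f$ and $g$ to get $|f-g|\leq b$.
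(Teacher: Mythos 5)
Your proposal is correct and follows essentially the same route as the paper: the same decomposition $\Var_p(f)\leq 2\Var_p(f-g)+2\Var_p(g)$ followed by $\Var_p(f-g)\leq p(f-g)^2\leq b\,p|f-g|$ for the first bound, and the same symmetrized representation $\Var_p(\phi)=\tfrac12\E_{x,y\sim p}[(\phi(x)-\phi(y))^2]$ with the factorization of $f(x)^2-f(y)^2$ for the second. Your remark about the missing power of $b$ is also consistent with the paper, whose own proof concludes with a bound of order $b^2\Var_p(f)$ and which invokes the lemma downstream in the form $\Var_{p_h}\big((\uV_{h+1}^{k-1})^2\big)\leq 2H^2\Var_{p_h}(\uV_{h+1}^{k-1})$.
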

\begin{proof}
First note that
\[
\Var_p(f) = p(f-g+ g-p g + p g- p f)^2 \leq 2 p(f-g - p f + p g)^2 +2 p(g-p g)^2 = 2\Var_p(f-g)+2\Var_p(g).
\]
From the above we can immediately conclude the proof of the first inequality with
\[
\Var_p(f-g) \leq p(f-g)^2 \leq b p|f-g|,
\]
where we used that for all $s\in\cS$, $0\leq |f(s)-g(s)| \leq b$. For the second inequality let $x\sim p$ be independent of $y\sim p$, then we have
\begin{align*}
  \Var_p (f^2) &= \frac{1}{2}\E_{x\sim p, y\sim p}\left[ \big(f(x)^2-f(y)^2\big)^2 \right]\\
  &= \frac{1}{2}\E_{x\sim p, y\sim p}\left[ \big(f(x)+f(y)\big)^2  \big(f(x)-f(y)\big)^2 \right]\\
  &\leq 2b^2 \Var_p(f)\,.
\end{align*}
\end{proof}

\end{document}